\title[Last-iterate Convergence of OGDA in Infinite-horizon Markov Games]{Last-iterate Convergence of Decentralized Optimistic Gradient Descent/Ascent in Infinite-horizon Competitive Markov Games}
\newcommand{\LL}{L}
\definecolor{Green}{rgb}{0.13, 0.65, 0.3}
\DeclareMathOperator*{\argmin}{argmin} 
\newcommand{\pconst}{\frac{2\eta}{(1-\gamma)^2}}
\newcommand{\rconst}{2\eta}
\newcommand{\sig}{\sigma}
\newcommand{\pay}{\rho}
\newcommand{\err}{\varepsilon}
\newcommand{\arx}{\xi}
\newcommand{\calT}{\mathcal{T}}
\newcommand{\calA}{\mathcal{A}}
\newcommand{\calB}{\mathcal{B}}
\newcommand{\calZ}{\mathcal{Z}}
\newcommand{\calS}{\mathcal{S}}
\newcommand{\gametwo}{\underline}
\newcommand{\diff}{\text{diff}}
\newcommand{\barReg}{\overline{\text{\rm Reg}}}
\newcommand{\emphdef}[1]{{\color{red}\emph{(#1)}}\xspace}
\newcommand{\term}{\textbf{term}}
\newcommand{\tildeq}{\widetilde{q}}
\newcommand{\tildep}{\widetilde{p}}
\newcommand{\tildex}{\widetilde{x}}
\newcommand{\tildey}{\widetilde{y}}
\newcommand{\tildez}{\widetilde{z}}
\newcommand{\E}{\mathbb{E}}
\newcommand{\order}{\mathcal{O}}
\newcommand{\norm}[1]{\left\|#1\right\|}
\newcommand{\calX}{\mathcal{X}}
\newcommand{\calY}{\mathcal{Y}}
\newcommand{\one}{\mathbbm{1}}
\newcommand{\inner}[1]{\langle#1\rangle}
\newcommand{\dist}{\mathrm{dist}}
\newcommand{\xtil}{\widetilde{x}}
\newcommand{\ytil}{\widetilde{y}}
\newcommand{\difz}{J}
\newcommand{\difq}{K}
\newcommand{\gap}{\Gamma}
\newtheorem{assumption}{Assumption}
\newcommand{\x}{x}
\newcommand{\y}{y}
\newcommand{\z}{z}
\newcommand{\xp}{\widehat{x}}
\newcommand{\yp}{\widehat{y}}
\newcommand{\zp}{\widehat{z}}
\newcommand{\Q}{Q}
\newcommand{\constB}{C_\beta}
\newcommand{\constA}{C_\alpha}
\newcommand{\pref}[1]{\prettyref{#1}}
\newcommand{\savehyperref}[2]{\texorpdfstring{\hyperref[#1]{#2}}{#2}}
\begin{document}

\maketitle

\begin{abstract}%
    We study infinite-horizon discounted two-player zero-sum Markov games, and develop a decentralized algorithm that provably converges to the set of Nash equilibria under self-play. Our algorithm is based on running an Optimistic Gradient Descent Ascent algorithm on each state to learn the policies, with a critic that slowly learns the value of each state. To the best of our knowledge, this is the first algorithm in this setting that is simultaneously {\it rational} (converging to the opponent's best response when it uses a stationary policy), {\it convergent} (converging to the set of Nash equilibria under self-play), {\it agnostic} (no need to know the actions played by the opponent), {\it symmetric} (players taking symmetric roles in the algorithm), and enjoying a {\it finite-time last-iterate convergence} guarantee, all of which are desirable properties of decentralized algorithms.  
\end{abstract}


\section{Introduction}

Multi-agent reinforcement learning studies how multiple agents should interact with each other and the environment, and has wide applications in, for example, playing board games \citep{silver2017mastering} and real-time strategy games \citep{vinyals2019grandmaster}.
To model these problems, the framework of Markov games (also called stochastic games) \citep{shapley1953stochastic} is often used, which can be seen as a generalization of Markov Decision Processes (MDPs) from a single agent to multiple agents.
In this work, we focus on one fundamental class: two-player zero-sum Markov games.

In this setting, there are many \emph{centralized} algorithms developed in a line of recent works with near-optimal sample complexity for finding a Nash equilibrium~\citep{wei2017online, sidford2020solving, xie2020, bai2020provable, zhang2020model, liu2020sharp}.
These algorithms require a central controller that collects some global knowledge (such as the actions and the rewards of all players) and then jointly decides the policies for all players.
Centralized algorithms are usually \emph{convergent} (as defined in~\citep{bowling2001rational}), in the sense that the policies of the players converge to the set of Nash equilibria.

On the other hand, there is also a surge of studies on \emph{decentralized} algorithms that run independently on each player, requiring only local information such as the player's own action and the corresponding reward feedback~\citep{zhang2019policy, bai2020near, tian2020provably, liu2021policy, daskalakis2021independent}.
Compared to centralized ones, decentralized algorithms are usually more versatile and can potentially run in different environments (cooperative or competitive).
Many of them enjoy the property of being \emph{rational} (as defined in~\citep{bowling2001rational}), in the sense that a player's policy converges to the best response to the opponent no matter what stationary policy the opponent uses.
However, it is also often more challenging to show the convergence to a Nash equilibrium when the two players execute the same decentralized algorithm. 

It can be seen that a rational algorithm has different benefits compared to a convergent algorithm -- the former satisfies individual player's interests, while the latter might be better for achieving social good. Therefore, a single algorithm that possesses both properties is highly desirable. For example, in a market where “enforcing” all traders to follow the same rule is difficult, but “recommending” them to use a specific algorithm is possible, a rational and convergent algorithm would be a good candidate --- if all traders follow the recommendation, then a social equilibrium is quickly attained; otherwise, those who follow the recommendation are still satisfied because they best respond to a stationary environment. 

Based on this motivation, our main contribution is to develop the first decentralized algorithm that is simultaneously \emph{rational}, \emph{last-iterate convergent} (with a concrete finite-time guarantee),\footnote{Note that while average-iterate convergence is possible (and standard) for stateless convex-concave games (e.g. \citep{syrgkanis2015fast}), it does not work for Markov games since the problem is nonconvex-nonconcave in the space of policies \citep{daskalakis2021independent}.} \emph{agnostic}, and \emph{symmetric} (more details to follow in \pref{subsec: related work}) for two-player zero-sum Markov games. 
Our algorithm is based on Optimistic Gradient Descent/Ascent (OGDA)~\citep{chiang2012online, rakhlin2013optimization} and importantly relies on a critic that slowly learns a certain value function for each state.
Following previous works on learning MDPs~\citep{abbasi2019politex, agarwal2019theory} or Markov games~\citep{perolat2018actor},
we present the convergence guarantee in terms of the number of iterations of the algorithm and the estimation error of some gradient information (along with other problem-dependent constants),
where the estimation error can be zero in a full-information setting, or goes down to zero fast enough with additional structural assumptions (e.g. every stationary policy pair induces an irreducible Markov chain, similar to~\citep{ortner2007logarithmic}).

While the OGDA algorithm, first studied in \citep{popov1980modification} under a different name, has been extensively used in recent years for learning matrix games (a special case of Markov games with one state), to the best of our knowledge, no previous work has applied it to learning Markov games and derived a concrete last-iterate convergence rate.
Several recent works derive last-iterate convergence of OGDA for matrix games~\citep{hsieh2019convergence, liang2019interaction, mokhtari2019unified, golowich2020tight, wei2020linear}, 
and our analysis is heavily inspired by the approach of~\citep{wei2020linear}. 
However, the extension to infinite-horizon Markov games is highly non-trivial as there is additional ``instability penalty'' in the system that we need to handle; see \pref{sec:analysis} for detailed discussions.

\subsection{Related Work}
\label{subsec: related work}
In this section, we discuss and compare related works on learning two-player zero-sum Markov games.
We refer the readers to a thorough survey by~\citep{zhang2019multi} for other topics in multi-agent reinforcement learning.

\cite{shapley1953stochastic} first introduces the Markov game model and proposes an algorithm analogous to value iteration for solving two-player zero-sum Markov games (with all parameters known).
Later, \cite{hoffman1966nonterminating} propose a policy iteration algorithm, and \cite{pollatschek1969algorithms} propose another policy iteration variant that works better in practice but cannot always converge. With the efforts of \cite{van1978discounted} and \cite{filar1991algorithm}, a slight variant of the \citep{pollatschek1969algorithms} algorithm is proposed in~\citep{filar1991algorithm} and proven to converge. 
In such a full-information setting where all parameters are know, our algorithm has no estimation error and can also be viewed as a new policy-iteration algorithm.

\cite{littman1994markov} initiates the study of competitive reinforcement learning under the framework of Markov games and proposes an extension of the single-player Q-learning algorithm, called {minimax-Q}, which is later proven to converge under some conditions~\citep{szepesvari1999unified}. While minimax-Q can run in a decentralized manner, it is conservative and only converges to the minimax policy but not the best response to the opponent. 

To fix this issue, the work of \cite{bowling2001rational} argues that a desirable multi-agent learning algorithm should have the following two properties simultaneously: \emph{rational} and \emph{convergent}. By their definition, a rational algorithm converges to its
opponent's best response if the opponent converges to a stationary policy,\footnote{It is tempting to consider an even stronger rationality notion, that is, having no regret against an arbitrary opponent. This is, however, known to be computationally hard~\citep{radanovic2019learning, bai2020near}. 
} 
while a convergent algorithm converges to a Nash equilibrium if both agents use it. They propose the WoLF (Win-or-Learn-Fast) algorithm to achieve this goal, albeit only with empirical evidence. Subsequently, \cite{conitzer2007awesome, perolat2018actor, sayin2020fictitious} design decentralized algorithms that provably enjoy these two properties, but only with asymptotic guarantees.


Recently, there is a surge of works that provide finite-time guarantees and characterize the tight sample complexity for finding Nash equilibria~\citep{perolat2015approximate, perolat2016softened, wei2017online, sidford2020solving, xie2020, zhang2020model, bai2020provable, liu2020sharp}. 
These algorithms are all essentially centralized. 
Below, we focus on comparisons with several recent works that propose decentralized algorithms and provide finite-time guarantees. 

\paragraph{Comparison with R-Max \citep{brafman2002r},  UCSG-online \citep{wei2017online} and OMNI-VI-online \citep{xie2020}}
These algorithms, like minimax-Q, converge to the minimax policy instead of the best response to the opponent, even when the opponent is weak (i.e., not using its best policy). In other words, these algorithms are not rational. Another drawback of these algorithms is that the learner has to observe the actions taken by the opponent. Our algorithm, on the other hand, is both rational and agnostic to what the opponent plays.

\paragraph{Comparison with Optimistic Nash V-Learning \citep{bai2020near, tian2020provably}} 
The Optimistic Nash V-Learning algorithm handles the finite-horizon tabular case. It runs an exponential-weight algorithm on each state, with importance-weighted loss/reward estimators. It is unclear whether the dynamics of Optimistic Nash V-Learning leads to last iterate convergence. 
After training, however, Optimistic Nash V-Learning can output a near-optimal non-Markovian policy with size linear in the training time. In contrast, our algorithm exhibits last-iterate convergence, and the output is a simple Markovian policy.
  
\paragraph{Comparison with Smooth-FSP \citep{liu2021policy}}
The Smooth-FSP algorithm handles the function approximation setting. The objective function it optimizes is the original objective plus an entropy regularization term. Because of this additional regularization, the players are only guaranteed to converge to some neighborhood of the minimax policy pair (with a constant radius), even when their gradient estimation error is zero. In contrast, our algorithm converges to the true minimax policy pair when the gradient estimation error goes to zero. 

\paragraph{Comparison with Independent PG \citep{daskalakis2021independent}} \cite{daskalakis2021independent} studies independent policy gradient in the tabular case. To achieve last-iterate convergence, the two players have to use asymmetric learning rates, and only the one with a smaller learning rate converges to the minimax policy. In contrast, the two players of our algorithm are completely symmetric, and they simultaneously converge to the equilibrium set.

\section{Preliminaries}\label{sec:prelim}

We consider a two-player zero-sum discounted Markov game defined by a tuple $(\calS, \calA, \calB, \sig, p, \gamma)$, where: 1) $\calS$ is a finite state space; 2) $\calA$ and $\calB$ are finite action spaces for Player 1 and Player 2 respectively; 3) $\sig$ is the loss (payoff) function for Player 1 (Player 2), with $\sig(s, a, b)\in[0,1]$ specifying how much Player 1 pays to Player 2 if they are at state $s$ and select actions $a$ and $b$ respectively; 4) $p: \calS\times \calA\times \calB\rightarrow \Delta_{\calS}$ is the transition function, with $p(s'|s, a, b)$ being the probability of transitioning to state $s'$ after actions $a$ and $b$ are taken by the two players respectively at state $s$ ($\Delta_{\calS}$ denotes the set of probability distributions over $\calS$);
5) and $\frac{1}{2}\leq \gamma<1$ is a discount factor.\footnote{The discount factor is usually some value close to $1$, so we assume that it is no less than $\frac{1}{2}$ for simplicity. Also note that we consider the discounted setting instead of the finite-horizon episodic setting because the former captures more challenges of this problem (and is also the original setting considered in~\citep{bowling2001rational}). Indeed, in the episodic setting where states have a layered structure, convergence can be directly shown in a layer-by-layer manner; see~\citep{wei2020linear-arxiv}, an early version of~\citep{wei2020linear}. 
}

A stationary policy of Player 1 can be described by a function $\calS\rightarrow \Delta_{\calA}$ that maps each state to an action distribution. 
We use $\x^s\in\Delta_{\calA}$ to denote the action distribution for Player 1 on state $s$, and use $\x=\{\x^s\}_{s\in\calS}$ to denote the complete policy. We define $\y^s$ and $\y=\{\y^s\}_{s\in\calS}$ similarly for Player 2. For notational convenience, we further define $\z^s=(\x^s, \y^s)\in\Delta_{\calA}\times \Delta_{\calB}$ as the concatenated policy of the players on state $s$, and let $\z=\{\z^s\}_{s\in\calS}$.

For a pair of stationary policies $(\x, \y)$ and an initial state $s$, the expected \emph{discounted} \emph{value} that the players pay/gain can be represented as 
\begin{align*}
    V^s_{\x, \y} = \E\left[\sum_{t=1}^\infty \gamma^{t-1} \sig(s_t, a_t, b_t)~\bigg\vert~s_1=s, \quad a_t\sim \x^{s_t}, b_t\sim \y^{s_t}, s_{t+1}\sim p(\cdot|s_t,a_t,b_t), \ \forall t\geq 1\right].
\end{align*}
The \emph{minimax} game value on state $s$ is then defined as 
\begin{align*}
    V^s_\star = \min_{\x}\max_{\y} V^{s}_{\x, \y} = \max_{\y}\min_{\x} V^{s}_{\x, \y}.
\end{align*}
It is known that a pair of stationary policies $(\x_\star, \y_\star)$ attaining the minimax value on state $s$ is necessarily attaining the minimax value on all states~\citep{filar2012competitive},
and we call such $\x_\star$ a \emph{minimax} policy, such $\y_\star$ a \emph{maximin} policy, and such pair a Nash equilibrium.
Further define $\calX^s_{\star}=\{\x^s_{\star} \in \x_{\star} : \text{$\x_{\star}$ is a minimax policy}\}$ and similarly $\calY^s_{\star}=\{\y^s_{\star} \in \y_{\star} : \text{$\y_{\star}$ is a maximin policy} \}$, and denote $\calZ_\star^s = \calX^s_{\star} \times \calY^s_{\star}$.
It is also known that any $\x = \{\x^s\}_{s\in\calS}$ with $\x^s \in \calX^s_{\star}$ for all $s$ is a minimax policy (similarly for $\y$)~\citep{filar2012competitive}.
%

For any $x^s$, we denote its distance from $\calX^s_{\star}$ as $\dist_\star(x^s)= \min_{x_\star^s\in\calX^s_\star}\|x_\star^s-x^s\|$, where $\|v\|$ for a vector $v$ denotes its $L_2$ norm throughout the paper;
similarly, $\dist_\star(y^s)= \min_{y_\star^s\in\calY^s_\star}\|y_\star^s-y^s\|$ and $\dist_\star(z^s)= \min_{z_\star^s\in\calZ_\star^s} \|z_\star^s-z^s\|= \sqrt{\dist_\star^2(x^s) + \dist_\star^2(y^s)}$.\footnote{Note the slight abuse of notation here: the meaning of $\dist_\star(\cdot)$ depends on its input.
} The projection operator for a convex set $\mathcal{U}$ is defined as $\Pi_{\mathcal{U}}\{v\}=\argmin_{u\in\mathcal{U}}\|u-v\|$.


We also define the \emph{Q-function} on state $s$ under policy pair $(\x, \y)$ as
\begin{align*}
    Q^s_{\x,\y}(a,b) = \sig(s,a,b) + \gamma \E_{s'\sim p(\cdot|s,a,b)}\left[V^{s'}_{\x, \y}\right],  
\end{align*}
which can be compactly written as a matrix $\Q^s_{\x, \y}\in\mathbb{R}^{|\calA|\times |\calB|}$ such that $V^s_{x,y}=x^{s^\top}Q^s_{x,y}y^s$.  We write $\Q^s_\star=\Q^{s}_{\x_\star, \y_\star}$ for any minimax/maximin policy pair $(\x_\star, \y_\star)$ (which is unique even if $(\x_\star, \y_\star)$ is not). Finally, $\|Q\|$ for a matrix $Q$ is defined as $\max_{i,j}|Q_{i,j}|$.




\paragraph{Optimistic Gradient Descent Ascent (OGDA)}
As mentioned, our algorithm is based on running an instance of the OGDA algorithm on each state with an appropriate loss/reward function.
To this end, here, following the exposition of~\citep{wei2020linear} we briefly review OGDA for a matrix game defined by a matrix $Q\in\mathbb{R}^{|\calA|\times |\calB|}$.
Specifically, OGDA maintains two sequences of action distributions  $\xp_1, \xp_2, \ldots\in \Delta_\calA$ and $\x_1, \x_2, \ldots \in \Delta_\calA$ for Player 1, and similarly two sequences $\yp_1, \yp_2, \ldots \in \Delta_\calB$ and $\y_1, \y_2, \ldots \in \Delta_\calB$ for Player 2, following the updates below:
\begin{equation}\label{eq:OGDA_matrix_games}
\begin{split}
    &\xp_{t+1} = \Pi_{\Delta_{\calA}}\big\{\xp_{t} - \eta Q\y_t\big\}, \;\;\qquad \qquad 
    \x_{t+1} = \Pi_{\Delta_{\calA}}\big\{\xp_{t+1} - \eta Q\y_t\big\}, \\
    &\yp_{t+1} = \Pi_{\Delta_{\calB}}\big\{\yp_{t} + \eta Q^\top \x_t\big\}, \qquad \qquad 
    \y_{t+1} = \Pi_{\Delta_{\calB}}\big\{\yp_{t+1} + \eta Q^\top \x_t\big\},
\end{split}
\end{equation}
where $\eta$ is some learning rate.
As one can see, unlike the standard Gradient Descent Ascent algorithm which simply sets $(\x_t, \y_t)=(\xp_t, \yp_t)$, OGDA takes a further descent/ascent step using the latest gradient to obtain $(x_t, y_t)$, which is then used to evaluate the gradient (of the function $f(\x,\y) = \x^\top Q \y$).
\citet{wei2020linear} prove that the iterate $(\xp_t, \yp_t)$ (or $(\x_t, \y_t)$) converges to the set of Nash equilibria of the matrix game at a linear rate, which motivates us to generalize it to Markov games.
As we show in the following sections, however, the extensions of both the algorithm and the analysis are highly non-trivial.

We remark that while \citet{wei2020linear} also analyze the last-iterate convergence of another algorithm called Optimistic Multiplicative Weight Update (OMWU), which is even more commonly used in finite-action games, they also show that the theoretical guarantees of OMWU hold under more limited assumptions (e.g., requiring the uniqueness of the equilibrium), and its empirical performance is also inferior to that of OGDA.
We therefore only extend the latter to Markov games.

\section{Algorithm and Main Results}
\label{sec: algorithm section}

A natural idea to extend OGDA to Markov games is to run the same algorithm described in \pref{sec:prelim} for each state $s$ with the game matrix $Q$ being $Q_{\x_t, \y_t}^s$.
However, an important difference is that now the game matrix is {\it changing} over time.
Indeed, if the polices are changing rapidly for subsequent states, the game matrix $Q_{\x_t, \y_t}^s$ will also be changing rapidly, which makes the update on state $s$ highly unstable and in turn causes similar issues for previous states.

To resolve this issue, we propose to have a {\it critic} slowly learn the value function for each state. 
Specifically, for each state $s$, the critic maintains a sequence of values $V_0^s = 0, V_1^s, V_2^s, \ldots$.
During iteration $t$, instead of using $Q_{\x_t, \y_t}^s$ as the game matrix for state $s$, we use $Q_t^s$ defined via $Q_t^s(a,b) = \sig(s,a,b) + \gamma \E_{s'\sim p(\cdot|s,a,b)} [V_{t-1}^{s'} ]$.
Ideally, OGDA would then take the role of an {\it actor} and compute $\x_{t+1}^s$ and $\xp_{t+1}^s$ using the gradient $Q_t^s y_t^s$ (and similarly $\y_{t+1}^s$ and $\yp_{t+1}^s$ using the gradient $Q_t^{s^\top} x_t^s$).
Since such exact gradient information is often unknown, we only require the algorithm to come up with estimations $\ell_t^s$ and $r_t^s$ such that $\|\ell^s_t - Q^s_t y^s_t\| \leq \err$ and $\|r^s_t  - Q_t^{s^\top} x^s_t\| \leq \err$ for some prespecified error $\err$ (more discussions in \pref{sec:est}). See updates \pref{eq: update 1}-\pref{eq: update 4} in \pref{algo: main alg}.
Note that similar to~\citep{wei2020linear}, we adopt a constant learning rate $\eta$ (independent of the number of iterations) in these updates.

At the end of each iteration $t$, the critic then updates the value function via $V^s_{t} = (1-\alpha_t)V^s_{t-1} + \alpha_t \pay_t^s$, where $\pay_t^s$ is an estimation of $x_t^{s^\top} Q^s_t y^s_t$ such that $|\pay^s_t - x_t^{s^\top} Q^s_t y^s_t| \leq \err$.\footnote{%
For simplicity, here we assume that the two players share the same estimator $\pay_t^s$ (and thus same $V^s_t$ and $Q^s_t$). However, our analysis works even if they maintain different versions of $\pay_t^s$, as long as they are $\err$-close to $x_t^{s^\top} Q^s_t y^s_t$ with respect to their own $Q^s_t$.}
To stabilize the game matrix, we require the learning rate $\alpha_t$ to decrease in $t$ and go to zero.
Most of our analysis is conducted under this general condition, and the final convergence rate depends on the concrete form of $\alpha_t$, which we set to $\alpha_t=\frac{H+1}{H+t}$ with $H=\frac{2}{1-\gamma}$ inspired by~\citep{jin2018q} (there could be a different choice leading to a better convergence though).

\begin{algorithm}[t]
    \caption{Optimistic Gradient Descent/Ascent for Markov Games}
    \label{algo: main alg}
    \textbf{Parameters}: $\gamma\in[\frac{1}{2}, 1), \eta \leq \frac{1}{10^4}\sqrt{\frac{(1-\gamma)^5}{S}}$, $\err\in \left[0, \frac{1}{1-\gamma}\right]$. \\
    \textbf{Parameters}: a non-increasing sequence $\{\alpha_t\}_{t=1}^T$ that goes to zero.  \\ 
    \textbf{Initialization}: 
    $\forall s \in \calS$, arbitrarily initialize $\xp_1^s=\x_1^s \in \Delta_{\calA} $ and $\yp_1^s=\y_1^s\in\Delta_\calB$,
    and set $V_0^s\leftarrow 0$. \\
    \For{$t=1,\ldots, T$}{
        For all $s$, 
        define 
        $\Q_t^s\in\mathbb{R}^{|\calA|\times|\calB|}$ as
        \begin{align*}
        Q_t^s(a,b) &\triangleq  
           \sig(s,a,b) + \gamma \E_{s'\sim p(\cdot|s,a,b)} \left[V_{t-1}^{s'} \right],
        \end{align*} 
        and update  
        \begin{align} 
            \xp_{t+1}^s &= \Pi_{\Delta_\calA}\Big\{\xp_t^s - \eta \ell_t^s \Big\},    \label{eq: update 1}\\
            \x_{t+1}^s &= \Pi_{\Delta_\calA}\Big\{\xp_{t+1}^s - \eta \ell_t^s \Big\},  \label{eq: update 2}\\
            \yp_{t+1}^s &= \Pi_{\Delta_\calB}\Big\{\yp_t^s + \eta r_t^s \Big\},  \label{eq: update 3} \\ 
            \y_{t+1}^s &= \Pi_{\Delta_\calB}\Big\{\yp_{t+1}^s + \eta r_t^s \Big\},  \label{eq: update 4}\\
            V^s_{t} &= (1-\alpha_t)V^s_{t-1} + \alpha_t \pay_t^s,  \label{eq: update 5}
        \end{align}
        where $\ell^s_t, r^s_t$, and $\pay^s_t$ are $\err$-approximations of $Q^s_t y^s_t$, $Q_t^{s^\top} x^s_t$, and $x_t^{s^\top} Q^s_t y^s_t$ respectively, such that $\|\ell^s_t - Q^s_t y^s_t\| \leq \err$, $\|r^s_t  - Q_t^{s^\top} x^s_t\| \leq \err$, and $|\pay^s_t - x_t^{s^\top} Q^s_t y^s_t| \leq \err$.
    }
\end{algorithm}


Our main results are the following two theorems on the last-iterate convergence of \pref{algo: main alg}.
\begin{theorem}[Average duality-gap convergence]\label{thm: main 1}
    \pref{algo: main alg} with the choice of $\alpha_t=\frac{H+1}{H+t}$ where $H=\frac{2}{1-\gamma}$ guarantees 
    \begin{align*}
        \frac{1}{T}\sum_{t=1}^T \max_{s,x',y'} \left(V^s_{\xp_t, y'} -V^s_{x', \yp_t} \right) = \order\left(\frac{|\calS|}{\eta(1-\gamma)^2}\sqrt{\frac{\log T}{T}} + \frac{|\calS|\sqrt{\err}}{\sqrt{\eta}(1-\gamma)^2}\right).   
    \end{align*}
\end{theorem}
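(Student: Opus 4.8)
The plan is to pass from the Markov-game duality gap to a sum of per-state \emph{matrix-game} duality gaps taken against the critic's matrices $\Q_t^s$, to bound each per-iterate matrix-game gap by how far the \alg iterate travels in one step, and finally to control the total travel by a telescoping potential in which the slowly-drifting critic contributes only a $\log T$ ``instability penalty.'' A Cauchy--Schwarz step then turns the bound on the summed squared movement into the stated $\sqrt{\log T/T}$ rate. This per-iterate (rather than average-iterate) route is forced on us because, as the footnote notes, the value is nonconvex--nonconcave in policy space, so Jensen does not let us pass from an average-iterate regret bound to a bound on $\frac1T\sum_t\max_s(\overline{V}_t^s-\underline{V}_t^s)$.

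First I would introduce the best-response values $\overline{V}_t^s=\max_{y'}V^s_{\xp_t,y'}$ and $\underline{V}_t^s=\min_{x'}V^s_{x',\yp_t}$, so that the quantity of interest at iteration $t$ is $\max_s(\overline{V}_t^s-\underline{V}_t^s)\ge 0$. Each satisfies a Bellman equation whose one-step matrix is $\overline{Q}_t^s(a,b)=\sig(s,a,b)+\gamma\,\E_{s'}[\overline{V}_t^{s'}]$ (resp.\ with $\underline{V}_t$). Inserting and removing the critic matrix $\Q_t^s$ (which uses $V_{t-1}$) and using that the Bellman operator is a $\gamma$-contraction in $\|\cdot\|_\infty$, I would obtain a reduction of the form
\begin{align*}
\max_s\big(\overline{V}_t^s-\underline{V}_t^s\big)\;\lesssim\; \frac{1}{1-\gamma}\sum_{s}\Big(\max_{y'}\xp_t^{s\top}\Q_t^s y'-\min_{x'}x'^\top\Q_t^s\yp_t^s\Big)+\text{(critic tracking error)},
\end{align*}
where the $\sum_s$ produces the factor $|\calS|$ and, together with the $\|\Q_t^s\|\le\frac{1}{1-\gamma}$ scaling of values, the overall $(1-\gamma)^{-2}$.

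Next, for each per-state matrix game I would reuse the \alg machinery of \citet{wei2020linear}: writing \pref{eq: update 1}--\pref{eq: update 4} as a projected optimistic step, one shows that the matrix-game gap is controlled by the one-step movement, roughly $\big(\max_{y'}\xp_t^{s\top}\Q_t^s y'-\min_{x'}x'^\top\Q_t^s\yp_t^s\big)\lesssim \tfrac1\eta\|\zp_{t+1}^s-\zp_t^s\|+\order(\err)$, since at a saddle point the iterate stops moving and the residual movement encodes the (projected) gradient, hence the gap. Summing the \alg potential-decrease inequality over $t$ then yields $\sum_t\|\zp_{t+1}^s-\zp_t^s\|^2\lesssim \text{(initial potential)}+\sum_t\|\Q_t^s-\Q_{t-1}^s\|\cdot(\cdots)+\order(\eta\,\err\,T)$. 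The middle term is the instability penalty: because $\Q_t^s-\Q_{t-1}^s=\gamma\,\E_{s'}[V_{t-1}^{s'}-V_{t-2}^{s'}]$ and the critic update \pref{eq: update 5} moves $V$ by only $\alpha_{t-1}\cdot\order(\tfrac{1}{1-\gamma})$, we have $\|\Q_t^s-\Q_{t-1}^s\|\lesssim \alpha_{t-1}/(1-\gamma)$, so with $\alpha_t=\frac{H+1}{H+t}=\Theta(1/t)$ the penalty sums to $\order(\log T)$ via $\sum_t\alpha_{t-1}$. Thus $\sum_t\|\zp_{t+1}^s-\zp_t^s\|^2\lesssim \log T+\eta\,\err\,T$ up to $(1-\gamma)$ factors.

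Finally, combining the two reductions, summing over states, and applying Cauchy--Schwarz gives $\frac1T\sum_t\tfrac1\eta\sum_s\|\zp_{t+1}^s-\zp_t^s\|\le\frac1\eta\sum_s\sqrt{\tfrac1T\sum_t\|\zp_{t+1}^s-\zp_t^s\|^2}$, which converts $\sum_t\|\cdot\|^2\lesssim\log T+\eta\,\err\,T$ into the two advertised pieces: the $\log T$ gives $\frac1\eta\sqrt{\log T/T}$ and the $\eta\,\err\,T$ gives $\frac{1}{\sqrt\eta}\sqrt{\err}$, each carrying the $|\calS|(1-\gamma)^{-2}$ factor. I expect the main obstacle to be exactly this instability penalty and its attendant circular dependency: the critic matrix $\Q_t^s$ fed to \alg drifts precisely because the actors have not yet converged, while the actors can only converge once the matrix stabilizes. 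Breaking this loop requires taking $\eta$ small relative to the critic rate $\alpha_t$ (reflected in the constraint $\eta\le\frac{1}{10^4}\sqrt{(1-\gamma)^5/S}$) so that the drift term is absorbable into the \alg potential decrease, and---because the critic-tracking error appearing in the Step-one reduction is itself governed by the very gap we are bounding---a self-bounding argument that solves the resulting recursion for $\frac1T\sum_t\max_s(\overline{V}_t^s-\underline{V}_t^s)$.
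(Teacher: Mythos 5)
Your proposal has the right skeleton --- a single-step OGDA inequality in which the drifting critic appears as an instability penalty, a bound of the per-state matrix-game gap by the iterate movement divided by $\eta$, and a final Cauchy--Schwarz step --- and several of its ingredients do appear in the paper (the movement-to-gap bound is essentially \pref{lem: step-2}, and the final Cauchy--Schwarz plus a $\frac{2}{1-\gamma}$ reduction from the Markov-game gap to per-state matrix-game gaps is exactly how the paper concludes). However, there is a genuine gap at the crux of the argument. Your reduction measures the per-state gap against the critic's matrix $\Q_t^s$ and therefore incurs a ``critic tracking error'' of the form $\gamma\max_{s'}|V^{s'}_{t-1}-\overline{V}^{s'}_t|$, which you acknowledge is ``governed by the very gap we are bounding'' and propose to handle by an unspecified ``self-bounding argument.'' This is precisely the hard part of extending the matrix-game analysis to Markov games, and it does not follow from the observations you make: the tracking error is \emph{not} controlled by the consecutive drift $\|\Q_t^s-\Q_{t-1}^s\|\lesssim\alpha_{t-1}/(1-\gamma)$ (a quantity that only says the critic moves slowly, not that it moves toward the right target), and the recursion it induces does not close merely because $\gamma<1$ --- one must show that the critic's error contracts toward something controlled by the actors' \emph{regret}. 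The paper avoids your reduction entirely: \pref{lem: relate duality gap} reduces to the gap against the \emph{fixed} matrix $\Q_\star^s$ (so the reduction itself is clean), and all of the critic-related difficulty is funneled into the term $\gap_t=\max_s\|\Q_t^s-\Q_\star^s\|$ of the single-step inequality \pref{theorem: main theorem discount}. Bounding $\gap_t$ then requires the contraction recursion $\gap_{t+1}\le\gamma(\sum_\tau\alpha^\tau_t\gap_\tau+\barReg_t+\err)$ of \pref{lem: optimistic lemma discount}, the weighted-regret bound of \pref{lem: interval regret discount nonuniform}, and the double-recursion machinery culminating in \pref{lemma: ut bound}; this is the bulk of the proof and has no counterpart in your sketch.

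Two smaller points. First, the source of the $\log T$ differs: you attribute it to $\sum_t\|\Q_t^s-\Q_{t-1}^s\|\lesssim\sum_t\alpha_t$, whereas in the paper the consecutive-drift terms are \emph{squared} and are canceled against the instability bonus rather than summed crudely; the $\log T$ actually arises from the term $\sum_\tau\alpha^\tau_t\alpha_{\tau-1}\dist_\star^2(\zp^s_\tau)$ produced by telescoping the regret with the time-varying weights $\alpha^\tau_t$ (it still ultimately traces back to $\sum_t\alpha_t=\order(\log T)$, so this is a bookkeeping difference, not an error). Second, the movement bound controls the gap of $z_t^s$, not of $\zp_t^s$, against $\Q_t^s$; converting between the two and to $\Q_\star^s$ costs additional movement and $\gap_t$ terms, which again feeds back into the unresolved tracking issue.
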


\begin{theorem}[Last-iterate convergence]\label{thm: main 2}
    \pref{algo: main alg} with the choice of $\alpha_t=\frac{H+1}{H+t}$ where $H=\frac{2}{1-\gamma}$ guarantees with $\zp_T^s = (\xp_T^s, \yp_T^s)$,
    \begin{align*}
        \frac{1}{|\calS|} \sum_{s\in\calS} \dist_\star^2(\zp_T^s) = \order\left( \frac{|\calS|^2}{\eta^4C^4(1-\gamma)^4 T} + \frac{\err }{\eta C^2(1-\gamma)^3} \right), 
    \end{align*}
    where $C>0$ is a problem-dependent constant (that always exists) satisfying: for all state $s$ and all policy pair $z=(x,y)$, 
$
        \max_{x', y'} \left(x^{s^\top} Q_\star^s y'^s - x'^{s^\top} Q_\star^s y^s\right) \geq C\dist_\star(z^s)
$.
\end{theorem}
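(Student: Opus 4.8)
My plan is to prove a last-iterate, distance-based bound by building a single global Lyapunov function that aggregates a per-state energy, establishing a one-step contraction for that energy, and using the sharpness constant $C$ to turn duality gaps into squared distances to $\calZ_\star^s$. I first note that naively combining \pref{thm: main 1} with the defining inequality of $C$ would only yield the slower $\sqrt{\log T/T}$ and $\sqrt{\err}$ dependence, and moreover the gap controlled by \pref{thm: main 1} is the true-game gap rather than the $Q_\star^s$-matrix gap appearing in the hypothesis on $C$; hence a refined potential argument is needed. Following the \alg analysis of \citep{wei2020linear}, for each state $s$ I would work with a potential of the form
\begin{align*}
\Theta_t^s = \norm{\zp_t^s - \Pi_{\calZ_\star^s}\{\zp_t^s\}}^2 + c_1\,\norm{\zp_t^s - z_{t-1}^s}^2,
\end{align*}
which is comparable to $\dist_\star^2(\zp_t^s)$ from both sides, and prove a one-step inequality
\begin{align*}
\Theta_{t+1}^s \;\le\; \Theta_t^s \;-\; c_2\,\dist_\star^2(\zp_t^s) \;+\; \xi_t^s .
\end{align*}
The negative term is produced by the monotonicity of the bilinear gradient field together with two restricted secant inequalities that the sharpness hypothesis on $C$ yields (one bounding the gap by the distance, one controlling the consecutive-iterate difference $\norm{\zp_t^s - z_{t-1}^s}$ by the distance, which also supplies the two-sided comparability $\Theta_t^s \lesssim \dist_\star^2(\zp_t^s)$). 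The term $\xi_t^s$ collects the $\order(\eta\,\err)$ error from the $\err$-approximate gradients $\ell_t^s, r_t^s$ and an \emph{instability penalty} measuring how far the operative matrix $Q_t^s$ is from $Q_\star^s$ and how much it drifted in the last step.

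Second, I would control the instability penalty through the critic. Since the \alg contraction is sharp only with respect to the Nash set of the operative matrix $Q_t^s$, whereas the target is the true set $\calZ_\star^s$, I would absorb the discrepancy into $\xi_t^s$ and bound it by $\norm{Q_t^s - Q_\star^s} = \order(\gamma\max_{s'}|V_{t-1}^{s'} - V_\star^{s'}|)$ and the per-step drift $\norm{Q_t^s - Q_{t-1}^s} = \order(\gamma\max_{s'}|V_{t-1}^{s'} - V_{t-2}^{s'}|)$. Using the critic update $V_t^s = (1-\alpha_t)V_{t-1}^s + \alpha_t\pay_t^s$ with $\alpha_t = \frac{H+1}{H+t}$ and the discounted-value contraction estimates in the style of \citep{jin2018q}, the step-to-step change is $\order(\alpha_t/(1-\gamma))$ and the bias $\max_{s'}|V_t^{s'} - V_\star^{s'}|$ decays; the latter, however, requires that the value $x_t^{s^\top}Q_t^s y_t^s$ fed to the critic already be close to $V_\star^s$, which holds precisely when the policies are near $\calZ_\star^s$. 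Closing this two-way actor--critic loop is the crux of the argument.

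Third, I would sum the per-state inequalities into a global potential $\Theta_t = \sum_{s\in\calS}\Theta_t^s$ and, using the comparability established above, convert the negative term into $-\rho\,\Theta_t$ for a contraction factor $\rho$ polynomial in $\eta$, $C$, $1-\gamma$ and inverse-polynomial in $|\calS|$, obtaining
\begin{align*}
\Theta_{t+1} \;\le\; (1-\rho)\,\Theta_t \;+\; \xi_t .
\end{align*}
Since such a contraction-with-drift recursion bounds the \emph{final} iterate directly, unrolling it weights the drift geometrically toward the horizon, so the slowly decaying instability part of $\xi_t$ (of order $\alpha_t\sim 1/t$) contributes a residual of order $1/(\rho T)$ --- the source of the $1/T$ rate, sharper than the $\sqrt{\log T/T}$ of \pref{thm: main 1} --- while the $\err$ part contributes a residual proportional to $\err$. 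Carefully tracking how each part of $\xi_t$ propagates through the unrolling (they acquire different powers of $\eta$ and $C$, which is why the two terms in the final bound differ in these factors) yields, after dividing by $|\calS|$ and using $\dist_\star^2(\zp_T^s) \le \Theta_T^s$, the claimed bound on $\frac{1}{|\calS|}\sum_{s\in\calS}\dist_\star^2(\zp_T^s)$.

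The main obstacle, as flagged in the introduction, is the instability penalty and the coupling it encodes: each state's game matrix $Q_t^s$ depends on the evolving values of all other states, so the per-state \alg contractions are not independent, and the argument only closes if the critic is slow enough that the accumulated drift stays controlled, yet fast enough that the bias $\max_{s'}|V_t^{s'} - V_\star^{s'}|$ still vanishes. Balancing the actor contraction rate $\rho$ against this stability requirement is what forces the specific $\eta$ and $\alpha_t$ schedules and produces the polynomial dependence on $1/C$ and $1/(1-\gamma)$.
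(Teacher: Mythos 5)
Your sketch reproduces the right skeleton (single-step inequality with instability penalties, a Lyapunov function augmenting $\dist_\star^2$ with consecutive-iterate differences, a contraction-with-drift recursion unrolled to the final iterate), but the step you yourself flag as ``the crux'' --- closing the actor--critic loop --- is left circular, and the paper's resolution of it is the main new idea of the proof. You propose to bound $\norm{Q_t^s-Q_\star^s}$ by arguing that the value $x_t^{s\top}Q_t^s y_t^s$ fed to the critic is close to $V_\star^s$ ``precisely when the policies are near $\calZ_\star^s$.'' That makes the critic's accuracy depend on the policies having already converged, while the policy contraction depends on the critic being accurate; nothing in your outline breaks this cycle. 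The paper breaks it by bounding the critic bias through the weighted \emph{regret} of \alg rather than through the distance to equilibrium: it shows $\gap_{t+1}\leq\gamma(\sum_\tau\alpha_t^\tau\gap_\tau+\barReg_t+\err)$ (\pref{lem: optimistic lemma discount}), and then bounds $\barReg_t$ unconditionally by a telescoping sum of $\alpha_t^\tau\alpha_{\tau-1}\dist_\star^2(\xp_\tau^s)$ plus weighted path lengths $\norm{z_\tau^s-z_{\tau-1}^s}^2$ and $\norm{Q_\tau^s-Q_{\tau-1}^s}^2$ (\pref{lem: interval regret discount nonuniform}). The regret is small even when the iterates are far from equilibrium, and the path-length terms are exactly what the instability bonus can absorb --- this is what makes the argument non-circular. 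Without this (or an equivalent) mechanism, your one-step inequality cannot be established with a drift $\xi_t^s$ that decays independently of the quantity you are trying to control.

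A secondary gap: after Steps 3--5 the penalty at time $t$ is a weighted sum of $\{\norm{z_\tau^{s'}-z_{\tau-1}^{s'}}^2\}_{\tau\leq t,\,s'\in\calS}$ over \emph{all} past times and \emph{all} states (via $\max_{s'}$ and the $\beta_t^\tau$ weights), whereas the bonus available at time $t$ involves only the current step on state $s$. Your per-state contraction followed by geometric unrolling implicitly assumes these indices align; the paper has to introduce the auxiliary weights $\lambda_t^\tau$ and contract the time-smoothed potential $Y_t=\sum_{\tau\le t}\lambda_t^\tau(L_\tau+4.5\Theta_\tau)$ precisely to cancel the delayed penalty against the accumulated bonus. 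Finally, your claim that the potential $\dist_\star^2(\zp_t^s)+c_1\norm{\zp_t^s-z_{t-1}^s}^2$ is comparable to $\dist_\star^2(\zp_t^s)$ \emph{from both sides} is not needed and is not established by the sharpness hypothesis on $C$; the paper only uses $L_t\le Y_t$ (one direction) at the very end.
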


\pref{thm: main 1} shows that the average duality-gap for each state $s$ goes to zero when both $1/T$ and $\err$ go to zero, though it does not show the convergence of the policy.
\pref{thm: main 2}, on the other hand, shows a concrete finite-time convergence rate on the distance of $\zp_T^s$ from the equilibrium set, which goes down at the rate of $1/T$ up to the estimation error $\err$.
The problem-dependent constant $C$ is similar to the matrix game case analyzed in~\citep{wei2020linear}, as we will discuss in \pref{sec:analysis}.
As far as we know, this is the first symmetric algorithm with finite-time last-iterate convergence for both players simultaneously.

\subsection{Estimation}\label{sec:est}
In the full-information setting where all parameters of the Markov game are given, we can calculate the exact value of $Q^s_t y^s_t$, $Q_t^{s^\top} x^s_t$, and $x_t^{s^\top} Q^s_t y^s_t$, making $\err = 0$.
In this case,  our algorithm is essentially a new policy-iteration style algorithm for solving Markov games.
However, in a learning setting where the parameters are unknown, the players need to estimate these quantities based on any feedback from the environments.
Here, we discuss how to do so when the players only observe their current state and their loss/reward after taking an action.

Specifically, in iteration $t$ of our algorithm and with $(\x_t, \y_t)$ at hand,
the two players interact with each other for a sequence of $\LL$ steps, following a mixed strategy with a certain amount of uniform exploration defined via: $\xtil^s_t(a) = \left(1-\frac{\err'}{2}\right)x^s_t(a) + \frac{\err'}{2|\calA|}$ and $\ytil^s_t(b) = \left(1-\frac{\err'}{2}\right)y^s_t(b) + \frac{\err'}{2|\calB|}$, where $\err'=(1-\gamma)\err$.
This generates a sequence of observations $\{(s_i, a_i, \sig(s_i, a_i,b_i))\}_{i=1}^{\LL}$ for Player 1 and similarly a sequence of observations $\{(s_i, b_i, \sig(s_i, a_i,b_i))\}_{i=1}^{\LL}$ for Player 2, where $a_i \sim \xtil^{s_i}_t$, $b_i \sim \ytil^{s_i}_t$, and $s_{i+1} \sim p(\cdot|s_i, a_i, b_i)$.
Then we construct the estimators as follows:
\begin{align}
    \ell_t^s(a) &= \frac{\sum_{i=1}^\LL \one[s_i=s, a_i=a]\left(\sig(s, a, b_i) + \gamma V_{t-1}^{s_{i+1}}\right)}{\sum_{i=1}^\LL \one[s_i=s, a_i=a] },\label{eq: est1} \\
    r_t^s(b) &= \frac{\sum_{i=1}^\LL\one[s_i=s, b_i=b] \left(\sig(s, a_i, b) + \gamma V_{t-1}^{s_{i+1}}\right)}{\sum_{i=1}^\LL \one[s_i=s, b_i=b] },\label{eq: est2}\\
    \pay^s_t &= \frac{\sum_{i=1}^\LL \one[s_i=s]\left(\sig(s, a_i, b_i)+\gamma V_{t-1}^{s_{i+1}}\right)}{\sum_{i=1}^\LL \one[s_i=s]}. \label{eq: est3}
\end{align}
(If any of the denominator is zero, define the corresponding estimator as zero.)
To make sure that these are accurate estimators for every state, we naturally need to ensure that every state is visited often enough.
To this end, we make the following assumption similar to~\citep{ortner2007logarithmic}, which essentially requires that the induced Markov chain under any stationary policy pair is irreducible.

\begin{assumption}\label{assum:irreducibility}
There exists $\mu > 0$ such that $\frac{1}{\mu}=\max_{x, y} \max_{s, s'}  T^{s\rightarrow s'}_{x,y}$, where $T^{s\rightarrow s'}_{x,y}$ is the expected time to reach $s'$ from $s$ following the policy pair $(x,y)$.
\end{assumption}

Under this assumption, the following theorem shows that taking $\LL \approx 1/\err^3$ is enough to ensure the accuracy of the estimators (see \pref{app: samples} for the proof).

\begin{theorem}\label{thm: samples}
Suppose that \pref{assum:irreducibility} holds and $\LL=\widetilde{\Omega}\left(\frac{|\calA|^3+|\calB|^3}{(1-\gamma)\mu\err^3}\log^2(T/\delta)\right)$.\footnote{We use $\widetilde{\Omega}$ to hide logarithmic factors except for $\log(T)$ and $\log(1/\delta)$.} 
Then the estimators \pref{eq: est1}, \pref{eq: est2}, and \pref{eq: est3} ensure that with probability at least $1-\delta$, $\|\ell^s_t - Q^s_t y^s_t\|$, $\|r^s_t  - Q_t^{s^\top} x^s_t\|$, and $|\pay^s_t - x_t^{s^\top} Q^s_t y^s_t|$ are all of order $\order(\err)$ for all $t$.
\end{theorem}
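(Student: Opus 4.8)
The plan is to split each estimator's error into an \emph{exploration bias} (from sampling under the perturbed policies $\xtil_t,\ytil_t$ instead of $x_t,y_t$) and a \emph{statistical error} (from averaging finitely many correlated trajectory samples). For the bias, I first note the deterministic bound $|Q_t^s(a,b)|\le \frac{1}{1-\gamma}$, which follows because $\sig\in[0,1]$ and $V_{t-1}^{s'}\in[0,\frac{1}{1-\gamma}]$ by an easy induction on the convex-combination update \pref{eq: update 5}. Since $\|\xtil_t^s-x_t^s\|_1\le\err'$ and $\|\ytil_t^s-y_t^s\|_1\le\err'$ with $\err'=(1-\gamma)\err$, the conditional mean of the estimator \pref{eq: est1}, namely $(Q_t^s\ytil_t^s)(a)$, differs from the target $(Q_t^s y_t^s)(a)$ by at most $\frac{1}{1-\gamma}\err'=\err$; the same holds for \pref{eq: est2}, and for \pref{eq: est3} with an extra factor of two from perturbing both players. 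It therefore suffices to show each estimator concentrates around its conditional mean within $\order(\err)$.

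For the concentration step I would use the martingale structure of the fresh iteration-$t$ trajectory. With the natural filtration $\mathcal F_i=\sigma(s_1,a_1,b_1,\dots,s_i)$, the draws $b_i\sim\ytil_t^{s_i}$ and $s_{i+1}\sim p(\cdot|s_i,a_i,b_i)$ make the centered summands a martingale-difference sequence: writing $N_t^s(a)=\sum_i\one[s_i=s,a_i=a]$ and $M_i=\one[s_i=s,a_i=a]\big(\sig(s,a,b_i)+\gamma V_{t-1}^{s_{i+1}}-(Q_t^s\ytil_t^s)(a)\big)$, one verifies $\E[M_i\mid\mathcal F_i]=0$ and $|M_i|\le\frac{1}{1-\gamma}$, with $\ell_t^s(a)-(Q_t^s\ytil_t^s)(a)=\sum_i M_i / N_t^s(a)$. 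A self-normalized (Freedman-type) bound, using predictable variation $\order\big(N_t^s(a)/(1-\gamma)^2\big)$, then gives $\big|\sum_i M_i\big|=\order\big(\frac{1}{1-\gamma}\sqrt{N_t^s(a)\log(1/\delta)}\big)$, so the per-coordinate error is $\order\big(\frac{1}{1-\gamma}\sqrt{\log(1/\delta)/N_t^s(a)}\big)$. This is $\order(\err)$ once $N_t^s(a)=\widetilde\Omega\big(\frac{1}{(1-\gamma)^2\err^2}\big)$, with an additional $|\calA|$ factor to convert the per-coordinate guarantee into the $L_2$ bound required on $\ell_t^s$ (symmetrically for $r_t^s$ with $|\calB|$); the estimator $\pay_t^s$ only needs visits to $s$ and hence concentrates more easily.

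What remains, and what I expect to be the main obstacle, is guaranteeing that $N_t^s(a)$ is this large for \emph{every} state--action pair simultaneously. Here I would invoke \pref{assum:irreducibility}: the hitting-time bound $T^{s\rightarrow s'}_{x,y}\le\frac1\mu$ forces the induced chain to revisit each state at rate $\gtrsim\mu$, so the expected number of visits to $s$ in $\LL$ steps is $\gtrsim\mu\LL$, and a Markov-chain concentration inequality (of Chernoff/Lezaud type, whose validity rests precisely on the bounded hitting times) makes this lower bound hold up to constants with high probability. Conditioning on the state sequence, the uniform-exploration floor $\xtil_t^s(a)\ge\frac{\err'}{2|\calA|}$ upgrades visits to $s$ into visits to $(s,a)$ via a further martingale concentration, yielding $N_t^s(a)\gtrsim \mu\LL\cdot\frac{\err'}{|\calA|}$. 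The delicate point is that the counts $N_t^s(a)$ and the within-state averages are both functions of a single correlated trajectory; it is exactly this two-step decoupling—a chain-concentration lower bound on the counts, followed by conditional martingale concentration for the averages—that lets one treat them separately.

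Finally, because each iteration $t$ runs a fresh length-$\LL$ trajectory given $(x_t,y_t,V_{t-1})$, the per-iteration failure events are controlled conditionally on $\mathcal F_{t-1}$ (the bounded, measurable iterate $V_{t-1}$ does not disturb the within-iteration martingale structure), so a union bound over $t\in[T]$, over $s\in\calS$, and over all actions gives the $1-\delta$ guarantee; the two nested concentration steps contribute the $\log^2(T/\delta)$ factor. Combining the visit lower bound $N_t^s(a)\gtrsim\mu\LL\err'/|\calA|$ with the requirement $N_t^s(a)=\widetilde\Omega\big(|\calA|/((1-\gamma)^2\err^2)\big)$ and substituting $\err'=(1-\gamma)\err$, a routine computation of the sufficient horizon yields the stated $\LL=\widetilde\Omega\big(\frac{|\calA|^3+|\calB|^3}{(1-\gamma)\mu\err^3}\log^2(T/\delta)\big)$.
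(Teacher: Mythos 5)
Your proposal is correct and follows the same overall architecture as the paper's proof: the same bias/variance split (estimator $\to Q_t^s\ytil_t^s \to Q_t^s y_t^s$, with the exploration bias controlled via $\err'=(1-\gamma)\err$ and $\|Q_t^s\|\le\frac{1}{1-\gamma}$), the same reduction to lower-bounding the visit counts $N_{s,a}$ using \pref{assum:irreducibility}, and the same union bound over states, actions, and iterations producing the $\log^2(T/\delta)$ factor. The differences are only in the concentration tools, and they are worth noting. For the random denominator $N_{s,a}$, the paper avoids self-normalized/Freedman machinery by coupling the samples collected at $(s,a)$ with an auxiliary i.i.d.\ sequence and union-bounding Azuma--Hoeffding over all possible count values $m\in[\LL]$; this is essentially the peeling your Freedman-type bound would also need, and it is where one of the two logarithmic factors comes from. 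For the visit counts, the paper does not invoke Markov-chain (Lezaud/Chernoff-type) concentration --- which would be delicate here, since \pref{assum:irreducibility} supplies only bounded hitting times rather than a spectral gap or reversibility, so you would in any case have to route the argument through return-time regeneration --- but instead bounds the expected return time to the \emph{pair} $(s,a)$ directly by $\order\bigl(\frac{|\calA|}{\err'\mu}\bigr)$, applies Markov's inequality to get a constant-probability hit within a window of twice that length, and amplifies over $\order(\log(\LL/\delta))$ windows; this one-shot treatment of the pair replaces your two-stage decoupling (concentration of visits to $s$, then a conditional martingale step for $a$) and is the more elementary route. Both routes give $N_{s,a}\gtrsim\mu\err'\LL/|\calA|$ up to logarithms and hence the stated $\LL$; be aware that your accounting of the exact powers of $|\calA|$ and $(1-\gamma)$ when converting per-coordinate errors into the $L_2$ bound is loose, but no looser than the paper's own bookkeeping, and it only affects factors hidden in $\order(\err)$ and $\widetilde\Omega(\cdot)$.
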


Together with \pref{thm: main 1} and \pref{thm: main 2}, given a fixed number of interactions between the players, we can now determine optimally how many iterations we should run our algorithm (and consequently how large we should set $\err$).
Equivalently, we show below how many iterations or total interactions are need to achieve a certain accuracy.
(The choice of $\alpha_t$ is the same as in \pref{thm: main 1} and \pref{thm: main 2}.)

\begin{corollary}\label{col: thm2}
If \pref{assum:irreducibility} holds, then running \pref{algo: main alg} with estimators \pref{eq: est1}, \pref{eq: est2}, \pref{eq: est3} and $\LL=\widetilde{\Omega}\left(\tfrac{(|\calA|^3+|\calB|^3)|\calS|^6}{(1-\gamma)^{13}\mu \eta^3\arx^6}\log^2(T/\delta)\right)$ for $T=\widetilde{\Omega}\left(\frac{|\calS|^2}{\eta^2(1-\gamma)^4 \arx^2}\right)$ iterations ensures with probability at least $1-\delta$, $\frac{1}{T}\sum_{t=1}^T \max_{s,x',y'} (V^s_{\xp_t, y'} -V^s_{x', \yp_t} )\le \arx$. Ignoring other dependence, this requires $\widetilde{\Omega}({1}/{\arx^8})$ interactions in total.
\end{corollary}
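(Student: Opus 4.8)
The plan is to treat \pref{col: thm2} as a bookkeeping consequence of combining the (essentially deterministic) guarantee of \pref{thm: main 1} with the high-probability estimator guarantee of \pref{thm: samples}, and then balancing the two error terms appearing in \pref{thm: main 1}. First I would condition on the event of probability at least $1-\delta$ furnished by \pref{thm: samples}, on which all three estimators $\ell_t^s, r_t^s, \pay_t^s$ are $\order(\err)$-accurate for every state and every iteration $t\le T$. On this event the hypotheses of \pref{thm: main 1} hold with error parameter $\order(\err)$ (the $\order(\cdot)$ versus exact-$\err$ discrepancy only rescales constants, which is harmless since the bound is stated up to constants), so the average duality gap is at most $\order\!\left(\frac{|\calS|}{\eta(1-\gamma)^2}\sqrt{\frac{\log T}{T}} + \frac{|\calS|\sqrt{\err}}{\sqrt{\eta}(1-\gamma)^2}\right)$.

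To drive this below $\arx$, I would force each of the two summands to be at most $\arx/2$. Setting the second (estimation) term to $\arx/2$ yields $\sqrt{\err}\lesssim \frac{\sqrt{\eta}(1-\gamma)^2\arx}{|\calS|}$, i.e.\ it suffices to take $\err = \Theta\!\left(\frac{\eta(1-\gamma)^4\arx^2}{|\calS|^2}\right)$; choosing $\err$ as large as this allowed value is what minimizes the eventual sampling cost $\LL$. Setting the first (optimization) term to $\arx/2$ and solving the resulting self-referential inequality $T/\log T \gtrsim \frac{|\calS|^2}{\eta^2(1-\gamma)^4\arx^2}$ gives exactly $T = \widetilde{\Omega}\!\left(\frac{|\calS|^2}{\eta^2(1-\gamma)^4\arx^2}\right)$, with the $\log T$ absorbed into the $\widetilde{\Omega}$.

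It remains to convert the chosen $\err$ into a per-iteration sample count and to total the interactions. Substituting $\err = \Theta\!\left(\frac{\eta(1-\gamma)^4\arx^2}{|\calS|^2}\right)$ into the requirement $\LL = \widetilde{\Omega}\!\left(\frac{|\calA|^3+|\calB|^3}{(1-\gamma)\mu\err^3}\log^2(T/\delta)\right)$ of \pref{thm: samples}, and using $1/\err^3 = \Theta\!\left(\frac{|\calS|^6}{\eta^3(1-\gamma)^{12}\arx^6}\right)$, produces the stated $\LL = \widetilde{\Omega}\!\left(\frac{(|\calA|^3+|\calB|^3)|\calS|^6}{(1-\gamma)^{13}\mu\eta^3\arx^6}\log^2(T/\delta)\right)$. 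The total number of interactions is $T\cdot\LL$; keeping only the dependence on $\arx$, we have $T = \widetilde{\Omega}(\arx^{-2})$ and $\LL = \widetilde{\Omega}(\arx^{-6})$, so $T\cdot\LL = \widetilde{\Omega}(\arx^{-8})$, as claimed.

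I do not expect a genuine obstacle here, since the argument is essentially an algebraic substitution driven by the two ingredient theorems. The only points requiring a little care are: (i) the interface between the probabilistic guarantee of \pref{thm: samples}, which must hold uniformly over all $t\le T$ (already handled by the union bound baked into that theorem's $\log^2(T/\delta)$ factor), and the deterministic bound of \pref{thm: main 1}; (ii) the fact that \pref{thm: samples} controls the estimators only up to a constant factor $\order(\err)$ rather than exactly $\err$, which I absorb by slightly shrinking the target $\err$; and (iii) the implicit $\log T$ term when inverting the first bound for $T$, which is standard and contributes only to the hidden logarithmic factors.
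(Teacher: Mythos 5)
Your proposal is correct and follows essentially the same route as the paper: invoke \pref{thm: samples} for the high-probability $\order(\err)$-accuracy of the estimators, feed that into \pref{thm: main 1}, bound each of the two terms by $\arx$ to solve for $T$ and $\err$, and substitute the resulting $\err$ into the expression for $\LL$, with the total interaction count following from $T\cdot\LL$. The extra care you note about the $\order(\err)$ versus exact-$\err$ interface and the self-referential $\log T$ is a slightly more explicit treatment than the paper's, but the substance is identical.
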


\begin{corollary}\label{col: thm1}
    If \pref{assum:irreducibility} holds, then running \pref{algo: main alg} with estimators \pref{eq: est1}, \pref{eq: est2}, \pref{eq: est3} and $\LL=\widetilde{\Omega}\left(\frac{|\calA|^3+|\calB|^3}{(1-\gamma)^{10}\mu \eta^3C^6\arx^3}\log^2(T/\delta)\right)$ for $T=\Omega\left(\frac{|\calS|^2}{\eta^4C^4(1-\gamma)^4 \arx}\right)$ iterations ensures with probability at least $1-\delta$, $\frac{1}{|\calS|} \sum_{s\in\calS} \dist_\star^2(\zp_T^s)\le \arx$. Ignoring other dependence, this requires $\widetilde{\Omega}({1}/{\arx^4})$ interactions in total.
\end{corollary}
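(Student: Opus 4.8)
The plan is to read \pref{col: thm1} as a parameter-tuning combination of the two results already established: the last-iterate rate of \pref{thm: main 2} and the per-iteration sample complexity of \pref{thm: samples}. Concretely, \pref{thm: main 2} bounds $\frac{1}{|\calS|}\sum_{s} \dist_\star^2(\zp_T^s)$ by an optimization term that decays like $1/T$ plus an estimation term proportional to $\err$; my goal is to choose $\err$ and $T$ so that each of these is at most $\arx/2$, and then to invoke \pref{thm: samples} to translate the resulting accuracy requirement $\err$ into a number of interactions $\LL$ per iteration.

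First, to control the estimation term $\frac{\err}{\eta C^2(1-\gamma)^3}$, I would set $\err = \Theta\big(\eta C^2(1-\gamma)^3 \arx\big)$, which forces it below $\arx/2$. Next, to control the optimization term $\frac{|\calS|^2}{\eta^4 C^4 (1-\gamma)^4 T}$, I would require $T = \Omega\big(\frac{|\calS|^2}{\eta^4 C^4(1-\gamma)^4 \arx}\big)$, matching the stated horizon. With $\err$ fixed, I would then apply \pref{thm: samples}: substituting $\err^3 = \Theta\big(\eta^3 C^6 (1-\gamma)^9 \arx^3\big)$ into its bound $\LL = \widetilde\Omega\big(\frac{|\calA|^3+|\calB|^3}{(1-\gamma)\mu \err^3}\log^2(T/\delta)\big)$ collapses the $(1-\gamma)$ powers into $(1-\gamma)^{10}$ and reproduces exactly the stated $\LL$. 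On the high-probability event furnished by \pref{thm: samples} (which already holds simultaneously for all $t$), the estimation errors $\|\ell_t^s - Q_t^s y_t^s\|$, $\|r_t^s - Q_t^{s^\top} x_t^s\|$, and $|\pay_t^s - x_t^{s^\top} Q_t^s y_t^s|$ are all $\order(\err)$, so \pref{thm: main 2} applies and the two terms sum to at most $\arx$. Multiplying $T$ by $\LL$ and keeping only the $\arx$-dependence gives $\frac{1}{\arx}\cdot \frac{1}{\arx^3} = \widetilde\Omega(1/\arx^4)$ total interactions, as claimed.

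The argument is essentially bookkeeping, so I do not expect a genuine obstacle; the one place requiring care is the mismatch between \pref{thm: samples}, which guarantees errors of \emph{order} $\err$, and \pref{algo: main alg}/\pref{thm: main 2}, which assume errors bounded by $\err$ exactly. This is harmless: I would simply run \pref{thm: samples} with target accuracy $\err/c$ for the hidden constant $c$, changing $\LL$ only by a constant factor and leaving the final rate unchanged. I would also note that, because \pref{thm: samples} already unions over all $T$ iterations at confidence $1-\delta$, no additional union bound is needed, and the single high-probability event it produces is precisely the event on which the deterministic guarantee of \pref{thm: main 2} is instantiated.
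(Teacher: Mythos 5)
Your proposal is correct and matches the paper's own proof of \pref{col: thm1}: both set $\err=\order(\eta C^2(1-\gamma)^3\arx)$ and $T=\Omega(|\calS|^2/(\eta^4C^4(1-\gamma)^4\arx))$ to balance the two terms in \pref{thm: main 2}, then plug $\err$ into \pref{thm: samples} to obtain the stated $\LL$ and the $\widetilde\Omega(1/\arx^4)$ total interaction count. Your extra remarks on absorbing the hidden constant and on the union bound are fine but not needed beyond what the paper already does.
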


\subsection{Rationality}
Finally, we argue that from the perspective of a single player (take Player 1 as an example), our algorithm is also rational, in the sense that it allows Player 1 to converge to the best response to her opponent if Player 2 is not applying our algorithm but instead uses an arbitrary stationary policy.\footnote{The rationality defined by \cite{bowling2001rational} requires that the learner converges to the best response as long as the opponent \emph{converges} to a stationary policy. While our algorithm does handle this case, as a proof of concept, we only consider the simpler scenario where the opponent simply uses a stationary policy. }
We show this single-player-perspective version in \pref{algo: main alg single}, where Player 1 still follows the updates \pref{eq: update 1}, \pref{eq: update 2}, and \pref{eq: update 5}, while $y_t$ is fixed to a stationary policy $y$ used by Player 2.

In fact, thanks to the {\it agnostic} nature of our algorithm, rationality is essentially an implication of the convergence property.
To see this, consider a modified two-player Markov game with the difference being that the opponent has only a single action (call it $1$) on each state, 
the loss function is redefined as $\gametwo{\sig}(s,a,1)=\E_{b\sim y^s}[\sig(s,a,b)]$, and the transition kernel is redefined as $\gametwo{p}(s'|s,a,1)=\E_{b\sim y^s}[p(s'|s,a,b)]$. 
It is straightforward to see that following our algorithm, Player 1's behaviors in the original game and in the modified game are exactly the same.
On the other hand, in the modified game, since Player 2 has only one action (and thus one strategy), she can also be seen as using our algorithm.
Therefore, we can apply our convergent guarantees to the modified game, and since the minimax policy in the modified game is exactly the best response in the original game, we know that Player 1 indeed converges to the best response.
We summarize these rationality guarantees in the following theorem, with the formal proof deferred to \pref{app: rationality}.

\begin{theorem}\label{thm: rational-1}
    \pref{algo: main alg single} with the choice of $\alpha_t=\frac{H+1}{H+t}$ where $H=\frac{2}{1-\gamma}$ guarantees
    \begin{align*}
        \frac{1}{T}\sum_{t=1}^T \max_{s,x'} \left(V^s_{\xp_t, y^s} -V^s_{x', y^s} \right) = \order\left(\frac{|\calS|}{\eta(1-\gamma)^2}\sqrt{\frac{\log T}{T}} + \frac{|\calS|\sqrt{\err}}{\sqrt{\eta}(1-\gamma)^2}\right),
    \end{align*}
   and for $\calX_{BR} = \left\{x: V^s_{x,y} = \min_{x'} V^s_{x',y}, \forall s\in\calS \right\}$ and some problem-dependent constant $C' > 0$,
    \begin{align*}
        \frac{1}{|\calS|} \sum_{s\in\calS} \|\xp_T^s -  \Pi_{\calX_{BR}}\{\xp_T^s\}\|^2 = \order\left( \frac{|\calS|^2}{\eta^4C'^4(1-\gamma)^4 T} + \frac{\err }{\eta C'^2(1-\gamma)^3} \right).
    \end{align*}
\end{theorem}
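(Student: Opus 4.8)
The plan is to establish both displays through the reduction to a single-player game already outlined before the statement, so that each becomes a direct instance of \pref{thm: main 1} and \pref{thm: main 2}. First I would formally set up the modified Markov game $(\calS,\calA,\{1\},\gametwo{\sig},\gametwo{p},\gamma)$ in which Player 2 has a single action on every state, with $\gametwo{\sig}(s,a,1)=\E_{b\sim y^s}[\sig(s,a,b)]$ and $\gametwo{p}(s'|s,a,1)=\E_{b\sim y^s}[p(s'|s,a,b)]$. The central identity to record is that its game matrix satisfies $\gametwo{Q}_t^s(a,1)=(Q_t^s y^s)(a)$, obtained by pushing the expectation over $b\sim y^s$ through $Q_t^s(a,b)=\sig(s,a,b)+\gamma\E_{s'\sim p(\cdot|s,a,b)}[V_{t-1}^{s'}]$; in particular the value in the modified game with Player 2 pinned to action $1$ equals the value in the original game against $y$, i.e.\ $V^s_{x,1}=V^s_{x,y}$ for every $x$. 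Because Player 2's only strategy is the point mass on action $1$, the gradient driving Player 1 in the modified game is exactly $Q_t^s y^s=Q_t^s y_t^s$, the gradient used in \pref{algo: main alg single} with $y_t=y$ fixed, and $\gametwo{\pay}_t^s$ targets $x_t^{s^\top}Q_t^s y^s$. Hence the $\xp$, $x$, and $V$ sequences generated in the two settings agree iteration by iteration, and the $\err$-approximation conditions carry over unchanged.

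I would then read off the correspondence between the solution concepts. Since Player 2 has one action, both $\min_{\x}\max_{\y}$ and the $y'$-maximization collapse, so the minimax value of the modified game on state $s$ equals $\min_{x}V^s_{x,y}$, its set of minimax policies is precisely $\calX_{BR}$ (with per-state sets $\calX^s_\star=\{x^s:x\in\calX_{BR}\}$), and the maximin side is trivial with $\calY^s_\star$ a singleton. This identifies the constant $C'$ of \pref{thm: rational-1} with the constant $C$ of \pref{thm: main 2} instantiated on the modified game, whose existence is guaranteed there.

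With these identifications both displays follow without new estimates. For the first, the per-state duality gap $\max_{x',y'}(V^s_{\xp_t,y'}-V^s_{x',\yp_t})$ of \pref{thm: main 1} loses its trivial $y'$ and $\yp_t$ and, using $V^s_{\cdot,1}=V^s_{\cdot,y}$, reduces to $V^s_{\xp_t,y^s}-\min_{x'}V^s_{x',y^s}=\max_{x'}(V^s_{\xp_t,y^s}-V^s_{x',y^s})$, so \pref{thm: main 1} applied to the modified game yields exactly the claimed rate. For the second, $\gametwo{\yp}_T^s$ is the fixed point mass, so $\dist_\star(\zp_T^s)=\dist_\star(\xp_T^s)=\|\xp_T^s-\Pi_{\calX_{BR}}\{\xp_T^s\}\|$, and \pref{thm: main 2} transfers verbatim with $C'$ in place of $C$.

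The step I expect to require the most care is not any single inequality but making the iteration-by-iteration identification airtight: one must verify that each object in \pref{algo: main alg single}---the gradient estimators $\ell_t^s$, the value estimator $\pay_t^s$, the projections, and the schedule $\alpha_t$---maps exactly onto its counterpart in \pref{algo: main alg} run on the modified game, and that the accuracy $\err$ is preserved by the expectation-over-$y$ construction (so that Player 2, having nothing to learn, is vacuously running the same algorithm). Once this bookkeeping is in place, the conclusion is immediate from \pref{thm: main 1} and \pref{thm: main 2}.
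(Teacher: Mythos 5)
Your proposal is correct and follows essentially the same route as the paper: it constructs the modified game in which the fixed opponent has a single action, verifies the iteration-by-iteration identification of the algorithm's quantities (the paper does this via the induction in \pref{lem: rational-matching}), and then invokes \pref{thm: main 1} and \pref{thm: main 2} on the modified game with $\calX_\star$ identified with $\calX_{BR}$.
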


\section{Analysis Overview}\label{sec:analysis}
In this section, we give an overview of how we analyze \pref{algo: main alg} and prove \pref{thm: main 1} and \pref{thm: main 2}.
We start by giving a quick review of the analysis of \citep{wei2020linear} for matrix games, and then highlight how we overcome the challenges when generalizing it to Markov games.

\paragraph{Review for matrix games}
Recall the update in \pref{eq:OGDA_matrix_games} for a fixed matrix $Q$. \citet{wei2020linear} show the following two convergence guarantees:
\begin{enumerate}
    \item Average duality-gap convergence: 
    \begin{align}
        \frac{1}{T}\sum_{t=1}^T \Delta(\zp_t) = \order\left(\frac{1}{\eta \sqrt{T}}\right)   \label{eq: matrix game duality gap}
    \end{align}
    where $\Delta(z)=\max_{x', y'}\left( x^\top Qy' - x'^{\top}Qy \right)$ is the duality gap of $z=(x,y)$. 
    \item Last-iterate convergence: 
    \begin{align}
        \dist_{\star}^2(\zp_t) \leq C_1\dist_{\star}^2(\zp_1)\left(1+\eta^2 C^2\right)^{-t}    \label{eq: matrix game last itera}
    \end{align}
    where $\dist_{\star}(z)$ is the distance from $z$ to the set of equilibria, $C_1$ is a universal constant, and $C>0$ is a positive constant that depends on $Q$.\footnote{%
    This is not to be confused with the constant $C$ in \pref{thm: main 2}. We overload the notation because they indeed play the same role in the analysis.
}
\end{enumerate}
The analysis of \citep{wei2020linear} starts from the following single-step inequality that follows the standard Online Mirror Descent analysis and describes the relation between $\dist_\star^2(\zp_{t+1})$ and $\dist_\star^2(\zp_{t})$: 
\begin{align}
     \dist_\star^2(\zp_{t+1}) 
     &\leq \dist_\star^2(\zp_{t}) + \underbrace{\eta^2 \norm{z_t-z_{t-1}}^2}_{\text{instability penalty}} - \underbrace{\left(\norm{\zp_{t+1}-z_t}^2 + \norm{z_t-\zp_{t}}^2\right)}_{\text{instability bonus}}. \label{eq: single step for matrix game} 
\end{align}
The instability penalty term makes $\dist_\star^2(\zp_{t+1})$ larger if $\norm{z_t - z_{t-1}}$ is large, while the instability bonus term makes $\dist_\star^2(\zp_{t+1})$ smaller if either $\norm{\zp_{t+1}-z_t}$ or $\norm{z_t-\zp_t}$ is large. 
To obtain \pref{eq: matrix game duality gap}, \cite{wei2020linear} make the observation that the instability bonus term is lower bounded by a constant times the squared duality gap of $\zp_{t+1}$, that is, $\norm{\zp_{t+1}-z_t}^2 + \norm{z_t-\zp_{t}}^2 \gtrsim \eta^2 \Delta^2(\zp_{t+1})$, and thus
\begin{align}
     \dist_\star^2(\zp_{t+1}) 
     &\leq \dist_\star^2(\zp_{t}) + \underbrace{\eta^2 \norm{z_t-z_{t-1}}^2}_{\text{instability penalty}} - \underbrace{\frac{1}{2}\left(\norm{\zp_{t+1}-z_t}^2 + \norm{z_t-\zp_{t}}^2\right)}_{\frac{1}{2}\ \text{instability bonus}} - \Omega(\eta^2\Delta^2(\zp_{t+1})). \label{eq: single step for matrix game 2} 
\end{align}
By taking $\eta\leq \frac{1}{8}$, summing over $t$, canceling the penalty term with the bonus term, telescoping and rearranging, we get $\sum_{t=1}^T \Delta^2(\zp_t) \leq \order(1/\eta^2)$.
An application of Cauchy-Schwarz inequality then proves \pref{eq: matrix game duality gap}. 

To further obtain \pref{eq: matrix game last itera}, \cite{wei2020linear} prove that there exists some problem-dependent constant $C>0$ such that for all $z$, $\Delta(z)\geq C\dist_\star(z)$. This, when combined with \pref{eq: single step for matrix game 2}, shows 
\begin{align}
    \dist_\star^2(\zp_{t+1})\leq \frac{\dist_\star^2(\zp_t)}{1+\Omega(\eta^2C^2)} + \eta^2 \|z_t-z_{t-1}\|^2 - \Omega\left(\norm{\zp_{t+1}-z_t}^2 + \norm{z_t-\zp_{t}}^2\right).   \label{eq: matrix game last-iterate}
\end{align}
By upper bounding $\|z_t-z_{t-1}\|^2\leq 2\|z_t-\zp_t\|^2 + 2\|\zp_t - z_{t-1}\|^2$ and rearranging, they further obtain: 
\begin{align}
    \dist_\star^2(\zp_{t+1}) + c'\norm{\zp_{t+1}-z_t}^2 + c'\norm{z_t-\zp_{t}}^2\leq \frac{\dist_\star^2(\zp_{t}) + c'\norm{\zp_{t}-z_{t-1}}^2 + c'\norm{z_{t-1}-\zp_{t-1}}^2}{1+\Omega(\eta^2C^2)}   \label{eq: linear lastiterate}
\end{align}
for some universal constant $c'$, which clearly indicates the linear convergence of $\dist_\star^2(\zp_t)$ and hence proves \pref{eq: matrix game last itera}. 

\paragraph{Overview of our proofs}
We are now ready to show the high-level ideas of our analysis.
For simplicity, we consider the case with $\err = 0$ and also assume that there is a unique equilibrium $(\x_\star, \y_\star)$ (these assumptions are removed in the formal proofs).
Our analysis follows the steps below.

\paragraph{Step 1 (\pref{app: step-1})}
Similar to \pref{eq: single step for matrix game}, we conduct a single-step analysis for OGDA in Markov games (\pref{lemma: distance decrease}), which shows for all state $s$: 
\begin{align}
    \dist_\star^2(\zp_{t+1}^s)
    &\leq \dist_\star^2(\zp_{t}^s) + \eta^2 \norm{z_t^s-z_{t-1}^s}^2 - \left(\norm{\zp_{t+1}^s-z_t^s}^2 + \norm{z_t^s - \zp_{t}^s}^2\right) \nonumber \\
    & \qquad \qquad \qquad + 8\eta^2\norm{Q^s_t-Q^s_{t+1}}^2 + 4\eta\norm{Q_t^s - Q^s_\star}.   \label{eq: single step MG}
\end{align}
Comparing this with \pref{eq: single step for matrix game}, we see that, importantly, since the game matrix $Q_t^s$ is changing over time, we have two extra \emph{instability penalty} terms: $\eta^2\norm{Q_t^s-Q_{t-1}^s}^2$ and $\eta\norm{Q_t^s-Q_\star^s}$. 
Our hope is to further upper bound these two penalty terms by something related to $\|z_t^s-z_{t+1}^s\|^2$, so that they can again be canceled by the bonus term $- (\norm{\zp_{t+1}^s-z_t^s}^2 + \norm{z_t^s - \zp_{t}^s}^2)$. 
Indeed, in Steps 3-5, we show that part of them can be bounded by a weighted sum of $\{\|z_\tau^{s'}-z_{\tau+1}^{s'}\|^2\}_{s'\in\calS, \tau\leq t}$.


\paragraph{Step 2 (\pref{app: step-2}): Lower bounding $\norm{\zp_{t+1}^s-z_t^s}^2 + \norm{z_t^s - \zp_{t}^s}^2$. } 
As in \pref{eq: single step for matrix game 2}, we aim to lower bound the instability bonus term by the duality gap. However, since the updates are based on $Q^s_t$ instead of $Q^s_\star$, we can only relate the bonus term to the duality gap with respect to $Q^s_t$. To further relate this to the duality gap with respect to $Q^s_\star$, we pay a quantity related to $\|Q^s_t-Q^s_\star\|$. Formally, we show in \pref{lem: step-2}:
\begin{align*}
    \norm{\zp_{t+1}^s-z_t^s}^2 + \norm{z_t^s - \zp_{t}^s}^2 \gtrsim \Omega(\eta^2\Delta^2(\zp_{t+1}^s)) - \order(\eta\|Q_t^s - Q_\star^s\|), 
\end{align*}
where $\Delta(z^s)\triangleq \max_{x'^s, y'^s} \left( x^{s^\top} Q_\star^s y'^s - x'^{s^\top} Q_\star^s y^s \right)$ is the duality gap on state $s$ with respect to $Q^s_\star$.\footnote{Similar to the notation $\dist_\star(\cdot)$, we also omit writing the $s$ dependence for the function $\Delta(\cdot)$.} 

\paragraph{Step 3 (\pref{app: step-3}): Upper bounding $\norm{Q_{t+1}^s-Q_{t}^s}^2$. } $\norm{Q_{t+1}^s - Q^s_{t}}^2$ is upper bounded by $\gamma^2 \max_{s'}(V_{t}^{s'}-V_{t-1}^{s'})^2$ by the definition of $Q_t^s$. Furthermore, $V_{t}^{s'}-V_{t-1}^{s'}$ is a weighted sum of $\{\pay_{\tau}^{s'} - \pay_{\tau-1}^{s'}\}_{\tau=1}^{t-1}$ by the definition of $V_{t}^{s'}$, and also $\pay_{\tau}^{s'} - \pay_{\tau-1}^{s'}= x_\tau^{s'^\top}Q_\tau^{s'}y_\tau^{s'} - x_{\tau-1}^{s'^\top}Q_{\tau-1}^{s'}y_{\tau-1}^{s'}=\order(\|z_\tau^{s'}-z_{\tau-1}^{s'}\| + \|Q^{s'}_{\tau} - Q^{s'}_{\tau-1}\|)$. In sum, one can upper bound $\|Q^s_{t+1}-Q^s_{t}\|^2$ by a weighted sum of $ \|z_{\tau}^{s'}-z_{\tau-1}^{s'}\|^2$ and 
$ \|Q_{\tau}^{s'}-Q_{\tau-1}^{s'}\|^2$. After formalizing the above relations, we obtain the following inequality (see \pref{lem: key lemma 1}): 
\begin{align}
    \norm{Q^s_{t+1}-Q^s_t}^2 \leq \max_{s'} \frac{8\gamma^2}{(1-\gamma)^3} \sum_{\tau=1}^t \alpha^\tau_t \|z_\tau^{s'}-z_{\tau-1}^{s'}\|^2 + \max_{s'}\frac{2\gamma^2}{1+\gamma}\sum_{\tau=1}^t\alpha^\tau_t \|Q_\tau^{s'}-Q_{\tau-1}^{s'}\|^2   \label{eq: decompose qt+1 qt}
\end{align}
for some coefficient $\alpha^\tau_t$ defined in \pref{app: aux coeff}. With recursive expansion, the above implies that $\norm{Q^s_{t+1}-Q^s_t}^2$ can be upper bounded by a weighted sum of $\|z_\tau^{s'} - z_{\tau-1}^{s'}\|^2$ for $s'\in\calS$ and $\tau\leq t$. 

\paragraph{Step 4 (\pref{app: step-4-5}): Upper bounding $\|Q_t^s - Q_\star^s\|$ (Part 1). } 
We first upper bound $\|Q_t^s - Q_\star^s\|$ with respect to the following weighted-regret quantity
\begin{align*}
    \barReg_t\triangleq  \max_{s}\max\left\{\sum_{\tau=1}^t  \alpha^\tau_t(x_\tau^s - x_\star^s)^\top Q_\tau^s y_\tau^s,\ \ \sum_{\tau=1}^t  \alpha^\tau_t x_\tau^{s^\top} Q_\tau^s (y_\star^s - y_\tau^s) \right\}.
\end{align*}
To do so, we define $\gap_t=\max_s \|Q_t^s-Q_\star^s\|$ and show for the same coefficient $\alpha^\tau_t$ mentioned earlier,
\begin{align*}
    V^{s}_{t}
    &= \sum_{\tau=1}^{t}\alpha^\tau_{t} \pay_\tau^s = \sum_{\tau=1}^{t}\alpha^\tau_{t} x_{\tau}^{s^\top} Q_{\tau}^{s} y_{\tau}^{s} 
    \leq \sum_{\tau=1}^{t} \alpha^\tau_{t} x_{\star}^{s^\top} Q_{\tau}^{s}y_\tau^{s} + \barReg_{t}
    \leq \sum_{\tau=1}^{t} \alpha^\tau_{t}x_{\star}^{s^\top} Q_{\star}^{s}y_\tau^{s} + \sum_{\tau=1}^{t}  \alpha^{\tau}_{t}\gap_\tau + \barReg_{t} \\
    &\leq \sum_{\tau=1}^{t} \alpha^\tau_{t}x_{\star}^{s^\top} Q_{\star}^{s}y_\star^{s} + \sum_{\tau=1}^{t}  \alpha^{\tau}_{t}\gap_\tau + \barReg_{t} 
    = V^{s}_\star + \sum_{\tau=1}^{t}  \alpha^{\tau}_{t}\gap_\tau + \barReg_{t}  
\end{align*}
where the last inequality is by the fact $\sum_{\tau=1}^t\alpha^\tau_t=1$. Using the definition of $Q^s_t$ again, we then have $Q^s_{t+1}(a,b) - Q^s_\star(a,b)=\gamma \E_{s'\sim p(\cdot|s,a,b)}\left[ V_{t}^{s'} - V_\star^{s'} \right]\leq \gamma(\sum_{\tau=1}^{t}\alpha^\tau_{t}\gap_\tau + \barReg_t)$. By the same reasoning, we can also show $Q^s_{t+1}(a,b) - Q^s_\star(a,b) \geq -\gamma(\sum_{\tau=1}^{t}\alpha^\tau_{t}\gap_\tau + \barReg_t)$, and therefore we obtain the following recursive relation (\pref{lem: optimistic lemma discount})
\begin{align}
    \gap_{t+1} = \max_{s} \|Q^s_{t+1}-Q^s_\star\|\leq \gamma\left(\sum_{\tau=1}^{t} \alpha^{\tau}_{t}\gap_\tau + \barReg_{t} \right).  \label{eq: gap and Reg} 
\end{align}

\paragraph{Step 5 (\pref{app: step-4-5}): Upper bounding $\|Q_t^s - Q_\star^s\|$ (Part 2). }
In this step, we further relate $\barReg_t$ to $\{\|z_{\tau}^{s'}-z_{\tau-1}^{s'}\|^2\}_{\tau\leq t, s'\in\calS}$.
From a one-step regret analysis of OGDA, we have the following (for Player 1): 
\begin{align*}
        \left(x_t^s - x_{\star}^s\right)^\top Q^s_t y^s_{t} &\leq \frac{1}{2\eta}\Big(\dist_{\star}^2(\xp^s_t) - \dist_{\star}^2(\xp_{t+1}^s)\Big) + \frac{4\eta}{(1-\gamma)^2}  \|y^s_{t} - y^s_{t-1}\|^2 + 4\eta \|Q^s_{t} - Q^s_{t-1}\|^2. 
\end{align*}
Recall that $\barReg_t$ is defined via a weighted sum of the left-hand side above with weights $\alpha^\tau_t$. Therefore, we take the weighted sum of the above and bound $\sum_{\tau=1}^t \alpha^\tau_t (x_\tau^s - x_\star^s)^\top Q_\tau^s y_\tau^s $ by
\begin{align}
    &\;\; \frac{\alpha^1_t\dist_\star^2(\xp_1^s)}{2\eta} + \sum_{\tau=1}^t \frac{\alpha^\tau_t}{2\eta} \left(\dist_{\star}^2(\xp_\tau^s) - \dist_{\star}^2(\xp_{\tau+1}^s)\right)  \nonumber \\
     &\qquad\qquad    + \frac{4\eta}{(1-\gamma)^2} \sum_{\tau=1}^t  \alpha^\tau_t \|y_\tau^s-y_{\tau-1}^s\|^2 
           +  4\eta \sum_{\tau=1}^t \alpha^\tau_t \|Q_\tau^{s}-Q_{\tau-1}^s\|^2 \nonumber  \\
    &\leq \underbrace{\frac{1}{2\eta}\sum_{\tau=1}^t \alpha^\tau_t \alpha_{\tau-1} \dist_\star^2(\zp_\tau^s)}_{\term_1}  + \underbrace{\frac{4\eta}{(1-\gamma)^2} \sum_{\tau=1}^t  \alpha^\tau_t \|z_\tau^s-z_{\tau-1}^s\|^2}_{\term_2} 
           +  \underbrace{4\eta \sum_{\tau=1}^t \alpha^\tau_t \|Q_\tau^{s}-Q_{\tau-1}^s\|^2}_{\term_3}
\label{eq: upper bounde barReg}
\end{align}
where in the inequality we rearrange the first summation and use the fact $\alpha^\tau_{t} - \alpha^{\tau-1}_{t}\leq \alpha_{\tau-1}\alpha^\tau_t$ (see the formal proof in \pref{lem: interval regret discount nonuniform}). Since the case for $\sum_{\tau=1}^t \alpha^\tau_t x_\tau^{s^\top} Q_\tau^s (y_\star^s - y_\tau^s)$ is similar, by the definition of $\barReg_t$, we conclude that $\barReg_t$ is upper bounded by the maximum over $s$ of the sum of the three terms in \pref{eq: upper bounde barReg}. Note that, $\term_2$ is itself a weighted sum of $\{\|z_\tau^s - z_{\tau-1}^s\|^2\}_{\tau\leq t}$, and $\term_3$ can also be upper bounded by a weighted sum of $\{\|z_\tau^{s'} - z_{\tau-1}^{s'}\|^2\}_{\tau\leq t, s'\in\calS}$ as we already showed in Step 3. 

\paragraph{Combining all steps.} Summing up \pref{eq: single step MG} over all $s$, and based on all earlier discussions, we have
\begin{align}
    &\sum_{s}\dist_\star^2(\zp_{t+1}^s)
    \leq \sum_s \dist_{\star}^2(\zp_t^s) + \underbrace{\sum_{\tau=1}^t \sum_{s}\mu_\tau^{s}\alpha_{\tau-1}\dist_{\star}^2(\zp_\tau^{s})}_{\term_4} + \underbrace{\sum_{\tau=1}^t \sum_{s}\nu_\tau^{s}\|z_\tau^{s}-z_{\tau-1}^{s}\|^2}_{\term_5} \nonumber \\
    &\qquad \qquad \qquad - \underbrace{\frac{1}{2}\sum_s\left(\norm{\zp_{t+1}^s-z_t^s}^2 + \norm{z_t^s - \zp_{t}^s}^2\right)}_{\term_6} - \Omega\left(\eta^2\sum_{s} \Delta^2(\zp_{t+1}^s) \right)   \label{eq: aggre}
\end{align}
for some weights $\mu^{s}_\tau$ and $\nu^{s}_\tau$ (a large part of the analysis is devoted to precisely calculating these weights). Here, the $-\Omega\left(\eta^2\sum_{s} \Delta^2(\zp_{t+1}^s) \right)$ term comes from Step 2; $\term_4$ is a weighted sum of $\{\alpha_{\tau-1}\dist_{\star}^2(\zp_\tau^{s'})\}_{\tau\leq t, s'\in\calS}$ that comes from $\term_1$ in Step 5; $\term_5$ is a weighed sum of $\{\|z_\tau^{s'}-z_{
\tau-1}^{s'}\|^2\}_{\tau\leq t, s'\in\calS}$ that comes from all other terms we discuss in Steps 3-5. 

\paragraph{Obtaining average duality-gap bound} To obtain the average duality-gap bound in \pref{thm: main 1}, we sum \pref{eq: aggre} over $t$, and further argue that the sum of $\term_5$ over $t$ is smaller than the sum of $\term_6$ over $t$ (hence they are canceled with each other). Rearranging and telescoping leads to 
\begin{align*}
    \eta^2\sum_{t=1}^T \sum_s \Delta^2(\zp_{t+1}^s) = \order\left( \sum_{t=1}^T  \sum_{\tau=1}^t \sum_{s}\mu_\tau^{s}\alpha_{\tau-1}\dist_{\star}^2(\zp_\tau^{s}) \right) = \order\left( \sum_{t=1}^T  \sum_{\tau=1}^t \sum_{s}\mu_\tau^{s}\alpha_{\tau-1} \right). 
\end{align*}
As long as $\alpha_t$ is decreasing and going to zero, the right-hand side above can be shown to be sub-linear in $T$. Further relating $\max_{x',y'}\left(V^s_{\xp_t,y'} - V^s_{x',\yp_t}\right)$ to $\Delta(\zp_{t}^s)$ (\pref{lem: relate duality gap}) proves \pref{thm: main 1}.  

\paragraph{Obtaining last-iterate convergence bound}
Following the matrix game case, there is a problem-dependent constant $C>0$ such that
$\Delta(\zp_{t+1}^s) \geq C\dist_\star(\zp_{t+1}^s)$.
Similarly to how \pref{eq: matrix game last-iterate} is obtained, we use this in \pref{eq: aggre} and arrive at
\begin{align}
    &\sum_{s}\dist_\star^2(\zp_{t+1}^s)
    \leq \frac{1}{1+\Omega(\eta^2 C^2)} \sum_s \dist_{\star}^2(\zp_t^s) + \underbrace{\sum_{\tau=1}^t \sum_{s}\mu_\tau^{s}\alpha_{\tau-1}\dist_{\star}^2(\zp_\tau^{s})}_{\term_4} \nonumber \\
    &\qquad \qquad \qquad + \underbrace{\sum_{\tau=1}^t \sum_{s}\nu_\tau^{s}\|z_\tau^{s}-z_{\tau-1}^{s}\|^2}_{\term_5} 
    -  \Omega\Bigg(\underbrace{\sum_s\left(\norm{\zp_{t+1}^s-z_t^s}^2 + \norm{z_t^s - \zp_{t}^s}^2\right)}_{\term_6} \Bigg) \label{eq: aggre last iterate}
\end{align}
Then ideally we would like to follow a similar argument from \pref{eq: matrix game last-iterate} to \pref{eq: linear lastiterate} to obtain a last-iterate convergence guarantee. However, we face two more challenges here. 
First, we have an extra $\term_4$. 
Fortunately, this term vanishes when $t$ is large as long as $\alpha_{t}$ decreases and converges to zero. 
Second, in \pref{eq: matrix game last-iterate}, the indices of the negative term $\|\zp_{t+1}-z_t\|^2 + \|z_t-\zp_t\|^2$ and the positive term $\eta^2 \|z_t-z_{t-1}\|^2$ are only offset by $1$ so that a simple rearrangement is enough to get \pref{eq: linear lastiterate},
while in \pref{eq: aggre last iterate}, the indices in $\term_6$ and $\term_5$ are far from each other. To address this issue, we further introduce a set of weights and consider a weighted sum of \pref{eq: aggre last iterate} over $t$. We then show that the weighted sum of $\term_5$ can be canceled by the weighted sum of $\term_6$.  
Combining the above proves \pref{thm: main 2}.
Note that due to these extra terms, our last-iterate convergence rate is only sublinear (while \pref{eq: matrix game last itera} shows a linear rate for matrix games).



\section{Conclusion and Future Directions}
In this work, we propose the first decentralized algorithm for two-player zero-sum Markov games that is rational, convergent, agnostic, symmetric, and having a finite-time convergence rate guarantee at the same time. The algorithm is based on running OGDA on each state, together with a slowly changing critic that stabilizes the game matrix on each state. 

Our work studies the most basic tabular setting, and also requires a structural assumption when estimation is needed that sidesteps the difficulty of performing exploration over the state space. Important future directions include relaxing either of these assumptions, that is, extending our framework to allow function approximation and/or incorporating efficient exploration mechanisms. Studying OGDA-based algorithms beyond the two-player zero-sum setting is also an interesting future direction. 

\acks{
This work is supported by NSF Award IIS-1943607 and a Google Faculty Research Award.
}

\bibliography{ref}

\appendix

\section{Notations}\label{app: notation}
\subsection{Simplifications of the Notations}
We define the following notations to simplify the proofs: 
\begin{definition}
    $\xp^s_0=x^s_0=\mathbf{0}_{|\calA|}$ (zero vector with dimension $|\calA|$), $\yp^s_0=y^s_0=\mathbf{0}_{|\calB|}$, $Q^s_0=\mathbf{0}_{|\calA|\times |\calB|}$, $\ell^s_0=\mathbf{0}_{|\calA|}$, $r^s_0=\mathbf{0}_{|\calB|}$, $\pay^s_0=0$, $\alpha_0=1$.     
\end{definition}
Besides, for a matrix $Q$, we define $\norm{Q}=\max_{i,j}|Q_{ij}|$. 
To avoid cluttered notation, a product of the form $x^\top Qy$ is usually simply written as $xQy$. 

\subsection{Auxiliary Coefficients}
\label{app: aux coeff}
In this subsection, we define several coefficients that are related to the value learning rate $\{\alpha_t\}$. 
\begin{definition}
    \emphdef{$\alpha^\tau_t$}
    For non-negative integers $\tau$ and $t$ with $\tau\leq t$, define $\alpha^\tau_t = \alpha_\tau \prod_{i=\tau+1}^t (1-\alpha_i)$. 
\end{definition}

\begin{definition}
    \emphdef{$\delta^\tau_t$}  
    For non-negative integers $\tau$ and $t$ with $\tau\leq t$, define 
    $\delta^\tau_{t}\triangleq \prod_{i=\tau+1}^{t}(1-\alpha_i)$. 
\end{definition}

\begin{definition}
    \emphdef{$\beta^\tau_t$}
    For positive integers $\tau$ and $t$ with $\tau < t$, define $\beta^\tau_t = \alpha_\tau\prod_{i=\tau}^{t-1}(1-\alpha_i+\alpha_i\gamma)$. Define $\beta^t_t=1$. 
\end{definition}

\begin{definition}
    \emphdef{$\lambda_t$} For positive integers $t$, define $\lambda_t = \max\left\{\frac{\alpha_{t+1}}{\alpha_t}, 1-\frac{\alpha_t(1-\gamma)}{2}\right\}$.
\end{definition}

\begin{definition}
    \emphdef{$\lambda^\tau_t$}
    For positive integers $\tau$ and $t$ with $\tau < t$, define $\lambda^\tau_t = \alpha_\tau\prod_{i=\tau}^{t-1}\lambda_i$.  Define $\lambda^t_t=1$. 
\end{definition}

\subsection{Auxiliary Variables}
In this subsection, we define several auxiliary variables to be used in the later analysis. 
\begin{definition}
    \label{def: cumm path length}
    \emphdef{$\difz^s_t$}
    For every state $s\in\calS$, define the sequence $\{\difz_t^s\}_{t=1, 2, \ldots}$ by
    \begin{align*}
        \difz^s_1 &= \norm{z_1^s-z_0^s}^2, \\
        \difz^s_t &= (1-\alpha_t)\difz_{t-1}^s + \alpha_t \norm{\z_t^s - \z_{t-1}^s}^2, \qquad \forall t\geq 2.
    \end{align*}
    Furthermore, define $\difz_t\triangleq \max_{s} \difz_t^s$. 
\end{definition}

\begin{definition}
    \label{def: cumm Q diff}
    \emphdef{$\difq^s_t$}
    For every state $s\in\calS$, define the sequence $\{\difq_t^s\}_{t=1, 2, \ldots}$ by
    \begin{align*}
        \difq^s_1 &= \norm{Q_1^s - Q_0^s}^2, \\
        \difq^s_t &= (1-\alpha_t)\difq_{t-1}^s + \alpha_t \norm{\Q_t^s - \Q_{t-1}^s}^2, \qquad \forall t\geq 2.
    \end{align*}
    Furthermore, define $\difq_t\triangleq \max_{s} \difq_t^s$. 
\end{definition}

\begin{definition}
    \emphdef{$\xp_{t\star}^s, \yp_{t\star}^s, \zp_{t\star}^s$} Define $\xp_{t\star}^s=\Pi_{\calX_\star^s}(\xp_t^s)$, i.e., the projection of $\xp_t^s$ onto the set of optimal policy $\calX_\star^s$ on state $s$. Similarly, $\yp_{t\star}^s=\Pi_{\calY_\star^s}(\yp_t^s)$, and $\zp_{t\star}^s=\Pi_{\calZ_\star^s}(\zp_t^s) = (\xp_{t\star}^s, \yp_{t\star}^s)$. 
\end{definition}

\begin{definition}
    \label{def: duality t}
    \emphdef{$\Delta^s_t$}
    Define $\Delta^s_t=\max_{x', y'} \left(\xp_t^s Q_\star^s y'^s - x'^s Q_\star^s \yp_t^s \right)$ for all $t\geq 1$.  
\end{definition}

\begin{definition}
    \emphdef{$\barReg^s_t$}
    Define 
    \begin{align*}
        \barReg^s_t = \max\left\{\sum_{\tau=1}^{t}\alpha^\tau_{t} (x_{\tau}^{s}-\xp_{t\star}^{s}) Q_{\tau}^{s} y_{\tau}^{s}, \quad  \sum_{\tau=1}^{t}\alpha^\tau_{t} x_{\tau}^{s} Q_{\tau}^{s} (\yp_{t\star}^s - y_{\tau}^{s}) \right\}
    \end{align*}
    and $\barReg_t = \max_s \barReg_t^s$. 
\end{definition}

\begin{definition}\label{def: A recursive}
\emphdef{$\gap_t$}
Define $\gap_t=\max_{s}\norm{Q_t^s-Q_\star^s}$. 

\end{definition}

\begin{definition}
    \emphdef{$\theta^s_t$} Define $\theta_t^s = \frac{1}{16}\|\zp_{t}^s - z_{t-1}^s\|^2 + \frac{1}{16}\|z_{t-1}^s-\zp_{t-1}^s\|^2$
\end{definition}

\begin{definition}
    \emphdef{$Z_t$}
     Define 
$    Z_{t} = \max_s \sum_{\tau=1}^{t} \alpha^\tau_t\alpha_{\tau-1}  \dist_{\star}(\zp_\tau^s) 
$.  
\end{definition}


\subsection{\texorpdfstring{Assumptions on $\alpha_t$ and Simple Facts about $\alpha^\tau_t$}{}}
We require $\alpha_t$ to satisfy the following: 
\begin{itemize}
     \item $\alpha_1= 1$ 
     \item $0<\alpha_{t+1} \leq \alpha_t \leq 1$ 
     \item $\alpha_t \rightarrow 0$ as $t\rightarrow \infty$
\end{itemize} 
Furthermore, $\alpha_0\triangleq 1$. Below is an useful lemma that is used in many places: 
\begin{lemma}\label{lemma: recur1}
    If $\{h_t\}_{t=0,1,2,\ldots}$ and $\{k_t\}_{t=1,2,\ldots}$ are non-negative sequences that satisfy 
    $h_t = (1-\alpha_t)h_{t-1} + \alpha_t k_t$
    for $t\geq 1$, then $h_t = \sum_{\tau = 1}^{t}\alpha^{\tau}_t k_\tau$.   
\end{lemma}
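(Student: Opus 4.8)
The plan is to prove the identity by induction on $t$, with the whole argument driven by the simple recursive structure of the coefficients $\alpha^\tau_t = \alpha_\tau \prod_{i=\tau+1}^t (1-\alpha_i)$ defined in \pref{app: aux coeff}. First I would isolate the only two facts about these coefficients that the argument needs: (i) for any $\tau \le t-1$, peeling the last factor off the product gives $\alpha^\tau_t = (1-\alpha_t)\,\alpha^\tau_{t-1}$; and (ii) the diagonal term satisfies $\alpha^t_t = \alpha_t$, since the product $\prod_{i=t+1}^t(1-\alpha_i)$ is empty and equals $1$. These are exactly the two identities needed to match the given one-step recursion $h_t = (1-\alpha_t)h_{t-1} + \alpha_t k_t$ term by term.

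For the base case $t=1$, I would invoke the assumption $\alpha_1 = 1$ from the list of conditions on $\alpha_t$ in \pref{app: notation}: the recursion yields $h_1 = (1-\alpha_1)h_0 + \alpha_1 k_1 = k_1$, which conveniently annihilates any dependence on the (unspecified) initial value $h_0$, while the claimed formula gives $\sum_{\tau=1}^1 \alpha^\tau_1 k_\tau = \alpha^1_1 k_1 = \alpha_1 k_1 = k_1$ by fact (ii). Hence both sides agree.

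For the inductive step, assume $h_{t-1} = \sum_{\tau=1}^{t-1}\alpha^\tau_{t-1} k_\tau$. Substituting into the recursion and applying fact (i) to each summand, together with fact (ii) to rewrite $\alpha_t k_t = \alpha^t_t k_t$,
\begin{align*}
    h_t &= (1-\alpha_t)\sum_{\tau=1}^{t-1}\alpha^\tau_{t-1}k_\tau + \alpha_t k_t \\
    &= \sum_{\tau=1}^{t-1}\alpha^\tau_t k_\tau + \alpha^t_t k_t = \sum_{\tau=1}^t \alpha^\tau_t k_\tau,
\end{align*}
which closes the induction.

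Candidly, there is no genuine obstacle here: the statement is a routine unrolling of a first-order linear recursion, and the non-negativity hypothesis plays no role in the algebraic identity itself (it is presumably recorded only because the sequences arising in applications happen to be non-negative). The only points requiring a moment of care are the empty-product convention underlying fact (ii) and the use of $\alpha_1 = 1$ to make the base case independent of $h_0$. I would emphasize that the same pair of identities (i)--(ii) is what makes this lemma a workhorse later on, since the running averages $V_t^s$, $\difz_t^s$, and $\difq_t^s$ are all defined through precisely this $\alpha_t$-recursion and will be expanded into the weighted sums $\sum_\tau \alpha^\tau_t(\cdot)$ by direct appeal to it.
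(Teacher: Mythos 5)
Your proof is correct and follows essentially the same route as the paper: induction on $t$, with the base case resting on $\alpha_1=1$ and the inductive step on the identities $\alpha^\tau_t=(1-\alpha_t)\alpha^\tau_{t-1}$ and $\alpha^t_t=\alpha_t$. The only difference is cosmetic (you step from $t-1$ to $t$ while the paper steps from $t$ to $t+1$), and your explicit isolation of the two coefficient identities is a minor clarity improvement over the paper's inline computation.
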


\begin{proof}
    We prove it by induction. When $t=1$, since $\alpha_1=1$, $h_1= k_1 = \alpha^1_1k_1$. Assume that the formula is correct for $h_t$. Then
    \begin{align*} 
        h_{t+1} &= (1-\alpha_{t+1})h_t + \alpha_{t+1}k_{t+1} \\
        &= (1-\alpha_{t+1})\sum_{\tau=1}^t \alpha^\tau_t k_\tau + \alpha_{t+1}^{t+1}k_{t+1}\\
        &= \sum_{\tau=1}^t \alpha^\tau_{t+1} k_\tau+ \alpha_{t+1}^{t+1}k_{t+1} 
        = \sum_{\tau=1}^{t+1} \alpha^\tau_{t+1} k_\tau. 
    \end{align*}
\end{proof}

\begin{corollary}
    \label{cor: useful corollary}
    The following hold: 
    \begin{itemize}
        \item $V^s_t = \sum_{\tau=1}^t \alpha^\tau_t \pay^s_\tau$
        \item $\difz_t^s = \sum_{\tau=1}^t \alpha^\tau_t \norm{z_\tau^s-z_{\tau-1}^s}^2$ 
        \item $\difq_t^s = \sum_{\tau=1}^t \alpha^\tau_t \norm{Q_\tau^s - Q_{\tau-1}^s}^2$
    \end{itemize}
\end{corollary}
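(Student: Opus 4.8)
The plan is to observe that all three identities are immediate instances of \pref{lemma: recur1}, so the entire task reduces to verifying that each of the three sequences satisfies the one-step recurrence $h_t = (1-\alpha_t) h_{t-1} + \alpha_t k_t$ for an appropriate pair of non-negative sequences $\{h_t\}$ and $\{k_t\}$, together with a matching initialization. Once this is checked, the closed form $h_t = \sum_{\tau=1}^t \alpha^\tau_t k_\tau$ supplied by \pref{lemma: recur1} yields exactly the claimed expressions, so there is essentially nothing to compute beyond pattern-matching.

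For the first identity I would take $h_t = V^s_t$ and $k_t = \pay^s_t$. The update rule \pref{eq: update 5} states precisely $V^s_t = (1-\alpha_t) V^s_{t-1} + \alpha_t \pay^s_t$ for $t \geq 1$, and the initialization sets $V^s_0 = 0$, so both sequences are non-negative and the hypothesis of \pref{lemma: recur1} holds verbatim; the lemma then gives $V^s_t = \sum_{\tau=1}^t \alpha^\tau_t \pay^s_\tau$.

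For the remaining two identities I would take $h_t = \difz^s_t$, $k_t = \norm{z_t^s - z_{t-1}^s}^2$ and $h_t = \difq^s_t$, $k_t = \norm{Q_t^s - Q_{t-1}^s}^2$ respectively. The one point worth a short remark is the base case: \pref{def: cumm path length} and \pref{def: cumm Q diff} state the recurrence only for $t \geq 2$ while specifying $t=1$ directly as $\difz^s_1 = \norm{z_1^s - z_0^s}^2$ and $\difq^s_1 = \norm{Q_1^s - Q_0^s}^2$. Since $\alpha_1 = 1$, extending the recurrence to $t=1$ with any non-negative choice of $\difz^s_0$ (say $\difz^s_0 = 0$) gives $(1-\alpha_1)\difz^s_0 + \alpha_1 \norm{z_1^s - z_0^s}^2 = \norm{z_1^s - z_0^s}^2$, which coincides with the stated base case, and likewise for $\difq$. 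Hence the hypotheses of \pref{lemma: recur1} are met for all $t \geq 1$, and applying it delivers both identities.

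There is no real obstacle here: the corollary is a bookkeeping consequence of \pref{lemma: recur1}. The only mild subtlety is reconciling the directly-specified $t=1$ base cases of $\difz$ and $\difq$ with the recurrence form required by the lemma, and this is resolved at once by the normalization $\alpha_1 = 1$, which makes the $t=1$ recurrence independent of the (arbitrary) value assigned to $\difz^s_0$ and $\difq^s_0$.
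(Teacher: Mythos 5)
Your proposal is correct and matches the paper's own argument, which likewise derives all three identities as immediate instances of \pref{lemma: recur1} applied to the defining recurrences of $V^s_t$, $\difz^s_t$, and $\difq^s_t$. Your extra remark reconciling the directly-specified $t=1$ base cases with the recurrence via $\alpha_1=1$ is a harmless elaboration of a detail the paper leaves implicit.
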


\begin{proof}
    They immediately follow from \pref{lemma: recur1} and the definition of $\difz_t^s, \difq_t^s, V^s_t$. 
\end{proof}

\section{Proof for Step 1: Single-Step Inequality}\label{app: step-1}
\begin{lemma} \label{lem: one-step regret bound}
    For any state $s$ and $t$, 
    \begin{align*}
        \left(x_t^s - \xp_{t\star}^s\right) Q^s_t y^s_{t} &\leq \frac{1}{2\eta}\Big(\dist_{\star}^2(\xp^s_t) - \dist_{\star}^2(\xp_{t+1}^s) -  \|\xp_{t+1}^s-x_t^s\|^2 - \|x_t^s-\xp_t^s\|^2\Big) \\
        &\qquad \qquad \qquad \qquad + \frac{4\eta}{(1-\gamma)^2} \|y^s_{t} - y^s_{t-1}\|^2 + 4\eta \|Q^s_{t} - Q^s_{t-1}\|^2 + 3\err, \\ 
         x_t^s Q^s_t (\yp_{t\star}^s -  y^s_{t}) &\leq \frac{1}{2\eta}\Big(\dist_{\star}^2(\yp_t^s) - \dist_{\star}^2(\yp_{t+1}^s)- \|\yp_{t+1}^s-y_t^s\|^2 - \|y_t^s-\yp_t^s\|^2\Big) \\
        &\qquad \qquad \qquad \qquad + \frac{4\eta}{(1-\gamma)^2} \|x^s_{t} - x^s_{t-1}\|^2 + 4\eta \|Q^s_{t} - Q^s_{t-1}\|^2 + 3\err. 
    \end{align*}
\end{lemma}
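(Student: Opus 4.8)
The plan is to prove both displayed inequalities by a one-step analysis of optimistic mirror descent with the Euclidean regularizer, mirroring the matrix-game argument of \citet{wei2020linear} reviewed in \pref{sec:analysis} but carrying the two extra error/matrix-drift contributions. I focus on Player~1; the bound for Player~2 follows by the symmetric argument (ascent in place of descent, with the roles of $x$ and $y$ and of $Q$ and $Q^\top$ swapped). The two updates I use are $\xp_{t+1}^s=\Pi_{\Delta_\calA}\{\xp_t^s-\eta\ell_t^s\}$ from \pref{eq: update 1} and $x_t^s=\Pi_{\Delta_\calA}\{\xp_t^s-\eta\ell_{t-1}^s\}$ (which is \pref{eq: update 2} taken at iteration $t-1$). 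The crucial observation is that $x_t^s$ and $\xp_{t+1}^s$ are projections of the \emph{same} anchor $\xp_t^s$, using the predicted gradient $\ell_{t-1}^s$ and the true surrogate gradient $\ell_t^s$ respectively.

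First I would invoke the standard three-point (Pythagorean) projection inequality at the comparator $u=\xp_{t\star}^s\in\calX_\star^s\subseteq\Delta_\calA$. Combining the two projection inequalities in the usual optimistic-OMD way yields
\[
\begin{aligned}
\langle \ell_t^s,\, x_t^s-\xp_{t\star}^s\rangle
&\le \tfrac{1}{2\eta}\big(\|\xp_t^s-\xp_{t\star}^s\|^2-\|\xp_{t+1}^s-\xp_{t\star}^s\|^2\big) \\
&\quad -\tfrac{1}{2\eta}\big(\|\xp_{t+1}^s-x_t^s\|^2+\|x_t^s-\xp_t^s\|^2\big)+\langle \ell_t^s-\ell_{t-1}^s,\, x_t^s-\xp_{t+1}^s\rangle .
\end{aligned}
\]
Since $\xp_{t\star}^s=\Pi_{\calX_\star^s}(\xp_t^s)$, the first difference equals $\dist_\star^2(\xp_t^s)$, and since $\xp_{t\star}^s\in\calX_\star^s$ we have $\|\xp_{t+1}^s-\xp_{t\star}^s\|^2\ge \dist_\star^2(\xp_{t+1}^s)$; this produces exactly the telescoping distance terms of the statement. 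Replacing the surrogate gradient $\ell_t^s$ on the left by the exact gradient $Q_t^s y_t^s$ costs $\langle Q_t^s y_t^s-\ell_t^s,\,x_t^s-\xp_{t\star}^s\rangle\le \err\,\|x_t^s-\xp_{t\star}^s\|\le \sqrt2\,\err$, by Cauchy--Schwarz, the estimator accuracy $\|\ell_t^s-Q_t^sy_t^s\|\le\err$, and the simplex diameter.

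The key step — and the one that preserves the two negative ``instability bonus'' terms instead of cancelling them through Young's inequality — is to control the optimism cross term by non-expansiveness of the projection. Because $x_t^s=\Pi_{\Delta_\calA}\{\xp_t^s-\eta\ell_{t-1}^s\}$ and $\xp_{t+1}^s=\Pi_{\Delta_\calA}\{\xp_t^s-\eta\ell_t^s\}$ share the anchor $\xp_t^s$, we get $\|x_t^s-\xp_{t+1}^s\|\le \eta\|\ell_t^s-\ell_{t-1}^s\|$, hence $\langle \ell_t^s-\ell_{t-1}^s,\,x_t^s-\xp_{t+1}^s\rangle\le \eta\|\ell_t^s-\ell_{t-1}^s\|^2$, which never touches $\|\xp_{t+1}^s-x_t^s\|^2$ or $\|x_t^s-\xp_t^s\|^2$. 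It remains to bound the gradient drift: using the estimator accuracy twice, $\|\ell_t^s-\ell_{t-1}^s\|\le \|Q_t^s y_t^s-Q_{t-1}^s y_{t-1}^s\|+2\err$, and then $\|Q_t^s y_t^s-Q_{t-1}^s y_{t-1}^s\|\le \|Q_t^s(y_t^s-y_{t-1}^s)\|+\|(Q_t^s-Q_{t-1}^s)y_{t-1}^s\|$. The first piece gives the $\tfrac{1}{1-\gamma}\|y_t^s-y_{t-1}^s\|$ scaling after checking, by a short induction using $\sig\in[0,1]$, $0\le V_{t-1}^{s'}\le\tfrac{1}{1-\gamma}$, and $\err\le\tfrac{1}{1-\gamma}$, that the entries of $Q_t^s$ stay bounded, together with the fact that $\mathbf{1}^\top(y_t^s-y_{t-1}^s)=0$; the second piece is bounded by $\|Q_t^s-Q_{t-1}^s\|$ because $y_{t-1}^s$ is a probability vector. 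Squaring, distributing the $\eta$ factor, and collecting the residual $\err$ terms (with the $\sqrt2\,\err$ above) into the stated $3\err$ produces the $\tfrac{4\eta}{(1-\gamma)^2}\|y_t^s-y_{t-1}^s\|^2$ and $4\eta\|Q_t^s-Q_{t-1}^s\|^2$ terms.

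I expect the main obstacle to be this gradient-drift decomposition. Unlike the matrix game of \citet{wei2020linear}, where the payoff matrix is fixed and the drift is purely $\|Q(y_t-y_{t-1})\|$, here both the policy $y$ \emph{and} the game matrix $Q_t^s$ move, so the genuinely new term $\|(Q_t^s-Q_{t-1}^s)y_{t-1}^s\|$ appears and becomes the extra ``instability penalty'' $4\eta\|Q_t^s-Q_{t-1}^s\|^2$ that Step~3 must later absorb recursively. Obtaining the clean $(1-\gamma)$-dependence of the coefficient, rather than a looser bound, hinges on exploiting the zero-sum structure of $y_t^s-y_{t-1}^s$ and the uniform boundedness of the value iterates, so I would establish the entrywise bound on $Q_t^s$ first and keep the two drift contributions separate throughout, since each feeds a different downstream step of the analysis.
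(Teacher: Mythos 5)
Your proof is correct and follows essentially the same route as the paper's: the paper simply cites the standard optimistic one-step inequality (Lemma 1 of \citet{wei2020linear} / \citet{rakhlin2013optimization}) where you re-derive it from the two projection inequalities sharing the anchor $\xp_t^s$, and it uses the same decomposition of $\ell_t^s-\ell_{t-1}^s$ into estimation errors, matrix drift, and policy drift, with the same substitution $\|\xp_{t+1}^s-\xp_{t\star}^s\|^2\ge\dist_\star^2(\xp_{t+1}^s)$ and the same absorption of the $\eta\err^2$ residuals into $3\err$ via $\eta\err\le 1/8$. The only cosmetic difference is your grouping $(Q_t^s-Q_{t-1}^s)y_{t-1}^s$ versus the paper's $(Q_t^s-Q_{t-1}^s)y_t^s$, which is immaterial.
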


\begin{proof}
    By standard proof of OGDA (see, e.g., the proof of Lemma 1 in \citep{wei2020linear} or Lemma 1 in \citep{rakhlin2013optimization}), we have 
    \begin{align*}
        \left(x_t^s - \xp_{t\star}^s\right)^\top \ell^s_t &\leq \frac{1}{2\eta}\Big(\norm{\xp_t^s - \xp_{t\star}^s}^2 - \norm{\xp_{t+1}^s - \xp_{t\star}^s}^2  -  \|\xp_{t+1}^s-x_t^s\|^2 - \|x_t^s-\xp_t^s\|^2\Big) + \eta \|\ell_t^s-\ell_{t-1}^s\|^2.   
    \end{align*}
    Since $\norm{\xp_{t}^s - \xp_{t\star}^s}^2 = \dist^2_\star(\xp_{t}^s)$ and $\norm{\xp_{t+1}^s - \xp_{t\star}^s}^2 \geq \dist^2_\star(\xp_{t+1}^s)$ by the definition of $\dist_{\star}(\cdot)$, we further have
    \begin{align}
        \left(x_t^s - \xp_{t\star}^s\right)^\top \ell^s_t &\leq \frac{1}{2\eta}\Big(\dist^2_\star(\xp_t^s) - \dist^2_\star(\xp_{t+1}^s)  -  \|\xp_{t+1}^s-x_t^s\|^2 - \|x_t^s-\xp_t^s\|^2\Big) + \eta \|\ell_t^s-\ell_{t-1}^s\|^2.  \label{eq: x-regret}
    \end{align}
    By the definition of $\ell_t^s$, we have
    \begin{align*}
        &\eta \norm{\ell_t^s-\ell_{t-1}^s}^2 \\
        &\leq \eta \norm{\ell_t^s - Q_t^sy_t^s + (Q_t^s - Q_{t-1}^s)y_t^s + Q_{t-1}^s(y_t^s - y_{t-1}^s) + Q_{t-1}^s y_{t-1}^s - \ell_{t-1}^s}^2 \\
        &\leq 4\eta \norm{\ell_t^s - Q_t^s y_t^s}^2 + 4\eta \norm{(Q_t^s - Q_{t-1}^s)y_t^s }^2 +
        4\eta \norm{Q_{t-1}^s(y_t^s - y_{t-1}^s)}^2 + 
        4\eta\norm{Q_{t-1}^s y_{t-1}^s - \ell_{t-1}^s}^2 \\
        &\leq 4\eta \norm{Q_t^s-Q_{t-1}^s}^2 + \frac{4\eta}{(1-\gamma)^2}\norm{y_t^s-y_{t-1}^s}^2+ 8\eta\err^2
    \end{align*}
    and 
    \begin{align*}
        \left(x_t^s - \xp_{t\star}^s\right) Q_t^s y_t^s \leq \left(x_t^s - \xp_{t\star}^s\right) \ell^s_t  + 2\err. 
    \end{align*}
    Combining them with \pref{eq: x-regret} and the fact that $\eta\err\leq \frac{\eta}{1-\gamma}\leq \frac{1}{8}$, we get 
    the first inequality that we want to prove. The other inequality is similar. 
\end{proof}

\begin{lemma}
     \label{lemma: distance decrease}
     For all $t\geq 1$, 
     \begin{align*}
         \dist_{\star}^2(\zp_{t+1}^s) \leq \dist_{\star}^2(\zp_{t}^s) - 15\theta^s_{t+1} + \theta^s_t + 4\eta \gap_t + 8\eta^2\|Q_t^s-Q_{t-1}^s\|^2 + 6\eta\err.  
     \end{align*}
\end{lemma}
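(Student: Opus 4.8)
The plan is to build the single-step inequality by summing the two one-step regret bounds of \pref{lem: one-step regret bound} (one for each player), rewriting everything in the joint variable $z=(x,y)$, and then normalizing. First I would add the Player~1 and Player~2 inequalities in \pref{lem: one-step regret bound}. On the left, the two copies of $x_t^s Q_t^s y_t^s$ cancel and the remaining bilinear pieces combine into $x_t^s Q_t^s \yp_{t\star}^s - \xp_{t\star}^s Q_t^s y_t^s$. On the right, I would use the split $\dist_\star^2(\zp_t^s)=\dist_\star^2(\xp_t^s)+\dist_\star^2(\yp_t^s)$ together with the coordinatewise identities $\norm{\zp_{t+1}^s-z_t^s}^2=\norm{\xp_{t+1}^s-x_t^s}^2+\norm{\yp_{t+1}^s-y_t^s}^2$ (and likewise for $\norm{z_t^s-\zp_t^s}^2$ and $\norm{z_t^s-z_{t-1}^s}^2$), so that all terms collapse into $z$-variables. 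Multiplying through by $2\eta$ then normalizes $\dist_\star^2(\zp_{t+1}^s)$ to coefficient one and yields
\begin{align*}
\dist_\star^2(\zp_{t+1}^s) &\le \dist_\star^2(\zp_t^s) - \big(\norm{\zp_{t+1}^s - z_t^s}^2 + \norm{z_t^s - \zp_t^s}^2\big) + \tfrac{8\eta^2}{(1-\gamma)^2}\norm{z_t^s - z_{t-1}^s}^2 \\
&\quad + \order(\eta^2)\norm{Q_t^s - Q_{t-1}^s}^2 + \order(\eta\err) - 2\eta\big(x_t^s Q_t^s \yp_{t\star}^s - \xp_{t\star}^s Q_t^s y_t^s\big).
\end{align*}

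The crux, and the one genuinely new ingredient relative to the fixed-matrix analysis of \citep{wei2020linear}, is controlling the trailing bilinear term; here I would pass from $Q_t^s$ to $Q_\star^s$. Since $\xp_{t\star}^s\in\calX_\star^s$ and $\yp_{t\star}^s\in\calY_\star^s$ are equilibrium components of the matrix game $Q_\star^s$, the saddle-point inequalities give $x_t^s Q_\star^s \yp_{t\star}^s \ge V_\star^s \ge \xp_{t\star}^s Q_\star^s y_t^s$, hence $x_t^s Q_\star^s \yp_{t\star}^s - \xp_{t\star}^s Q_\star^s y_t^s \ge 0$. Replacing $Q_\star^s$ by $Q_t^s$ perturbs each bilinear form by at most $\norm{Q_t^s - Q_\star^s}\le \gap_t$ (because $x_t^s,y_t^s,\xp_{t\star}^s,\yp_{t\star}^s$ are probability vectors and $\norm{\cdot}$ is the max-entry norm), so $x_t^s Q_t^s \yp_{t\star}^s - \xp_{t\star}^s Q_t^s y_t^s \ge -2\gap_t$. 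Carrying the $-2\eta$ factor turns this into the extra instability penalty $+4\eta\gap_t$; unlike the matrix-game case, where the cross term is exactly nonnegative and disappears, the time-varying matrix forces this residual to remain.

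Finally I would recast the path-length penalty into the $\theta$-notation. By the triangle inequality $\norm{z_t^s-z_{t-1}^s}^2 \le 2\norm{z_t^s-\zp_t^s}^2 + 2\norm{\zp_t^s-z_{t-1}^s}^2$, and since $\eta$ is small enough (as guaranteed by the parameter choice, which makes $\tfrac{16\eta^2}{(1-\gamma)^2}\le\tfrac1{16}$), the penalty is at most $\tfrac1{16}\norm{z_t^s-\zp_t^s}^2 + \tfrac1{16}\norm{\zp_t^s-z_{t-1}^s}^2$. The first summand is one of the two nonnegative pieces of $\theta_{t+1}^s$, so adding it to the bonus $-\big(\norm{\zp_{t+1}^s-z_t^s}^2+\norm{z_t^s-\zp_t^s}^2\big)=-16\theta_{t+1}^s$ leaves $-15\theta_{t+1}^s$; the second summand is one of the two pieces of $\theta_t^s$ and is absorbed into $+\theta_t^s$. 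Collecting terms gives the claimed inequality. The main obstacle is precisely the cross-term step: it injects the $\gap_t=\max_s\norm{Q_t^s-Q_\star^s}$ term into the recursion, and taming its cumulative effect (via Steps 3--5 of \pref{sec:analysis}) is what makes the Markov-game analysis substantially harder than the matrix-game one.
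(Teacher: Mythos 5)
Your proposal is correct and follows essentially the same route as the paper's proof: sum the two one-step regret bounds from \pref{lem: one-step regret bound}, lower bound the resulting cross term $x_t^s Q_t^s \yp_{t\star}^s - \xp_{t\star}^s Q_t^s y_t^s$ by $-2\gap_t$ via the saddle-point optimality of $(\xp_{t\star}^s,\yp_{t\star}^s)$ for $Q_\star^s$ plus the max-entry perturbation bound, and absorb the path-length penalty $\norm{z_t^s-z_{t-1}^s}^2$ into the $\theta$ terms using the smallness of $\eta/(1-\gamma)$. The constant bookkeeping (the $\tfrac{1}{16}$ splits yielding $-15\theta_{t+1}^s+\theta_t^s$) also matches the paper's.
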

\begin{proof}
    Summing up the two inequalities in \pref{lem: one-step regret bound}, we get 
    \begin{align*}
        & 2\eta(x_t^s-\xp_{t\star}^s)Q_t^s y_t^s + 2\eta x_t^s Q_t^s (\yp_{t\star}^s-y_t^s) \\
        & \le \dist_{\star}^2(\zp_t^s)-\dist_{\star}^2(\zp_{t+1}^s)+\frac{4\eta^2}{(1-\gamma)^2}\|z_t^s-z_{t-1}^s\|^2 + 8\eta^2\|Q_t^s-Q_{t-1}^s\|^2-\|\zp_{t+1}^s-z_t^s\|^2-\|z_t^s-\zp_t^s\|^2 + 6\eta\err\\
        &\le \dist_{\star}^2(\zp_t^s)-\dist_{\star}^2(\zp_{t+1}^s)+\frac{1}{32}\|z_t^s-z_{t-1}^s\|^2 + 8\eta^2\|Q_t^s-Q_{t-1}^s\|^2-\|\zp_{t+1}^s-z_t^s\|^2-\|z_t^s-\zp_t^s\|^2 + 6\eta\err\\ 
        &\le \dist_{\star}^2(\zp_t^s)-\dist_{\star}^2(\zp_{t+1}^s)+\frac{1}{16}\left(\|z_t^s-\zp_{t}^s\|^2+\|\zp_t^s-z_{t-1}^s\|^2\right) \\
        &\qquad \qquad \qquad + 8\eta^2\|Q_t^s-Q_{t-1}^s\|^2-\|\zp_{t+1}^s-z_t^s\|^2-\|z_t^s-\zp_t^s\|^2 + 6\eta\err\\
        &= \dist_{\star}^2(\zp_t^s) - \dist_{\star}^2(\zp_{t+1}^s) + 8\eta^2\|Q_t^s-Q_{t-1}^s\|^2 - \frac{15}{16}\|z_t^s-\zp_t^s\|^2 - \|\zp_{t+1}^s-z_t^s\|^2 + \frac{1}{16}\| \zp_t^s-z_{t-1}^s\|^2  + 6\eta\err
    \end{align*}
    The left-hand side above, can be lower bounded by 
    \begin{align*}
        2\eta (x_t^s-\xp_{t\star}^s)Q_t^s y_t^s + 2\eta x_t^s Q_t^s (\yp_{t\star}^s-y_t^s) 
        &= 2\eta x_t^s Q^s_t \yp_{t\star}^s -2\eta \xp_{t\star}^s Q^s_t y_t^s \\
        &\geq 
        2\eta x_t^s Q^s_\star \yp_{t\star}^s - 2\eta \xp_{t\star}^s Q^s_\star y_t^s - 4\eta \gap_t\\
        &\geq - 4\eta \gap_t. \tag{by the optimality of $\xp_{t\star}^s$ and $\yp_{t\star}^s$}
    \end{align*} 
    Combining the inequalities and using the definition of $\theta^s_t$ finish the proof. 
%
\end{proof}

\section{\texorpdfstring{Proof for Step 2: Lower Bounding $\|\zp_{t+1}^s-z_t^s\|^2+\|z_t^s-\zp_t^s\|^2$}{}}\label{app: step-2}
\begin{lemma}\label{lem: step-2}
    For all $t\geq 1$, we have $20\theta_{t+1}^s+\eta\gap_t+2\eta^2\epsilon^2\geq\frac{\eta^2}{64}\left(\Delta_{t+1}^s\right)^2$.
\end{lemma}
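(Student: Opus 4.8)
The plan is to reduce the claim to a bound on the duality gap of $\zp_{t+1}^s$ with respect to the \emph{current} matrix $Q_t^s$ in terms of the one-step iterate movement, at the cost of a $\gap_t$ correction for replacing $Q_\star^s$ by $Q_t^s$. I first record the identity $16\theta_{t+1}^s = \|\zp_{t+1}^s - z_t^s\|^2 + \|z_t^s - \zp_t^s\|^2$, so that the two displacements entering $\theta_{t+1}^s$ are exactly the movement terms I will produce; the whole argument then amounts to showing $\eta\,\Delta_{t+1}^s \lesssim \|\zp_{t+1}^s - z_t^s\| + \|z_t^s - \zp_t^s\| + \eta\err$ up to the $\gap_t$ correction, after which squaring and tracking constants recovers the factors $20$, $64$, and $2$.

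The first reduction passes from $Q_\star^s$ to $Q_t^s$. Define the current-matrix gap $\Delta_{t+1}^s(Q_t^s) = \max_{x',y'}\big(\xp_{t+1}^s Q_t^s y'^s - x'^s Q_t^s \yp_{t+1}^s\big)$. Since $\xp_{t+1}^s,\yp_{t+1}^s,x',y'$ are probability vectors and $\|Q_t^s - Q_\star^s\|\le\gap_t$ in the entrywise max-norm, each bilinear term changes by at most $\gap_t$, so $\Delta_{t+1}^s \le \Delta_{t+1}^s(Q_t^s) + 2\gap_t$ and hence $(\Delta_{t+1}^s)^2 \le 2\big(\Delta_{t+1}^s(Q_t^s)\big)^2 + 8\gap_t^2$; the $8\gap_t^2$ piece is absorbed into $\eta\gap_t$ using $\eta\gap_t \le \eta/(1-\gamma)=\order(1)$.

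The core step is to bound $\Delta_{t+1}^s(Q_t^s)$ by the movement. Write $g(z)=(Q_t^s y,\,-Q_t^{s^\top}x)$ for the exact bilinear game gradient of $Q_t^s$ and $\widehat g_t^s=(\ell_t^s,-r_t^s)$ for its estimate, so that $\|\widehat g_t^s - g(z_t^s)\|\le\sqrt2\,\err$ and, by updates \pref{eq: update 1}--\pref{eq: update 4}, $\zp_{t+1}^s=\Pi_{\Delta_\calA\times\Delta_\calB}\{\zp_t^s-\eta\widehat g_t^s\}$. I would use the variational inequality for this projection, $\langle\eta\widehat g_t^s,\ \zp_{t+1}^s-u\rangle \le \langle \zp_t^s-\zp_{t+1}^s,\ \zp_{t+1}^s-u\rangle$ for all $u=(x',y')$ in the product simplex, together with the identity $\langle g(\zp_{t+1}^s),\ \zp_{t+1}^s-u\rangle = \xp_{t+1}^s Q_t^s y'^s - x'^s Q_t^s \yp_{t+1}^s$. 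Decomposing $\langle g(\zp_{t+1}^s),\zp_{t+1}^s-u\rangle = \langle\widehat g_t^s,\zp_{t+1}^s-u\rangle + \langle g(\zp_{t+1}^s)-\widehat g_t^s,\zp_{t+1}^s-u\rangle$ and maximizing over $u$ (whose $\ell_2$-distance to $\zp_{t+1}^s$ is at most the product-simplex diameter $2$), the first term is at most $\tfrac2\eta\|\zp_t^s-\zp_{t+1}^s\| \le \tfrac2\eta(\|\zp_{t+1}^s-z_t^s\|+\|z_t^s-\zp_t^s\|)$, producing \emph{exactly} the two movements defining $\theta_{t+1}^s$. For the second term I split $g(\zp_{t+1}^s)-\widehat g_t^s = \big(g(\zp_{t+1}^s)-g(z_t^s)\big)+\big(g(z_t^s)-\widehat g_t^s\big)$; linearity of $g$ with the operator bound $\|Q_t^s v\|\le\tfrac1{1-\gamma}\|v\|$ (the same one used in \pref{lem: one-step regret bound}) gives $\|g(\zp_{t+1}^s)-g(z_t^s)\|\le\tfrac1{1-\gamma}\|\zp_{t+1}^s-z_t^s\|$, while the second piece contributes $\sqrt2\,\err$. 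This yields
\[
\Delta_{t+1}^s(Q_t^s) \le \tfrac{2}{\eta}\big(\|\zp_{t+1}^s - z_t^s\| + \|z_t^s - \zp_t^s\|\big) + \tfrac{2}{1-\gamma}\|\zp_{t+1}^s - z_t^s\| + 2\sqrt{2}\,\err,
\]
and since $\eta$ is chosen small enough that $\tfrac{1}{1-\gamma}$ is dominated by $\tfrac1\eta$, the Lipschitz term folds into the leading one.

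To finish, I would square this bound, apply $(\|\zp_{t+1}^s-z_t^s\|+\|z_t^s-\zp_t^s\|)^2 \le 2(\|\zp_{t+1}^s-z_t^s\|^2+\|z_t^s-\zp_t^s\|^2)=32\theta_{t+1}^s$, and combine with the $\gap_t$-reduction from the second paragraph to arrive at $\tfrac{\eta^2}{64}(\Delta_{t+1}^s)^2 \le 20\theta_{t+1}^s + \eta\gap_t + 2\eta^2\err^2$; the player-2 half of the gap is handled identically since it sits inside the same joint variational inequality. I expect the main obstacle to be the third step: one must route the projection inequality through $\zp_{t+1}^s$ (rather than $z_{t+1}^s$) so that the stability quantity is $\|\zp_t^s-\zp_{t+1}^s\|$, which alone splits cleanly into the two displacements of $\theta_{t+1}^s$, and one must keep every estimate tight enough --- relying on the smallness of $\eta$ to swallow each $\tfrac1{1-\gamma}$ factor and the $2\gap_t$ shift --- for the precise constants $20$, $64$, $2$ to emerge rather than merely order-optimal ones.
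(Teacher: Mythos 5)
Your proposal is correct and follows essentially the same route as the paper's proof: both exploit the first-order optimality condition of the projection defining $\zp_{t+1}^s$ to bound the duality gap with respect to $Q_t^s$ by $\frac{1}{\eta}\left(\|\zp_{t+1}^s-z_t^s\|+\|z_t^s-\zp_t^s\|\right)$ plus an $\err$ term, pay a $\gap_t$ correction to pass to $Q_\star^s$, absorb the $\frac{1}{1-\gamma}\|\zp_{t+1}^s-z_t^s\|$ Lipschitz correction using $\frac{\eta}{1-\gamma}\leq\frac{1}{16}$, and finish by Cauchy--Schwarz squaring. The only (cosmetic) difference is that you run the variational inequality jointly in $z$ and evaluate the gap at $\zp_{t+1}^s$ directly, whereas the paper treats the $x$- and $y$-updates separately at $z_t^s$ and shifts the evaluation point afterward; the resulting correction terms and constants work out the same way.
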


\begin{proof}
By \pref{eq: update 1} and the optimality condition for $\xp_{t+1}^s$, we have 
    \begin{align}
        (\xp_{t+1}^s-\xp_{t}^s + \eta \ell_t^s)\cdot (x'^s-\xp_{t+1}^s) \geq 0 \label{eq: temp100}
    \end{align}
    for any $x'^s\in \Delta_{\calA}$. Then
    by the definition of $\ell^s_t$, 
    \begin{align}
        (\xp_{t+1}^s-\xp_{t}^s + \eta Q_{t}^sy_{t}^s)\cdot (x'^s-\xp_{t+1}^s) \geq 
        (\xp_{t+1}^s-\xp_{t}^s + \eta \ell_t^s)\cdot (x'^s-\xp_{t+1}^s) - 2\eta\err \geq -2\eta\err\label{eq: temp200}
    \end{align}
    where in the last inequality we use \pref{eq: temp100}. 
     Thus we have for any $x'^s\in \Delta_{\calA}$, 
    \begin{align*}
        &\sqrt{2}(\|\xp_{t+1}^s-x_t^s\|+\|x_t^s-\xp_t^s\|) \\
        &\geq \sqrt{2}\|\xp_{t+1}^s-\xp_t^s\|\\
        &\geq \|\xp_{t+1}^s-\xp_t^s\|^2\\
        &\geq
        (\xp_{t+1}^s-\xp_{t}^s)\cdot (x'^s-\xp_{t+1}^s)\\ 
        &\geq \eta (\xp_{t+1}^s-x'^s) Q_{t}^s y_{t}^s - 2\eta\err \tag{by \pref{eq: temp200}}\\
        &= \eta (x_{t}^s-x'^s) Q_{t}^s y_{t}^s + \eta(\xp_{t+1}^s-x_t^s)Q_t^sy_t^s - 2\eta\err\\
        &\geq \eta (x_{t}^s-x'^s) Q_{t}^s y_{t}^s - \frac{\eta \|\xp_{t+1}^s - x_{t}^s\|}{1-\gamma}- 2\eta\err.
    \end{align*}
    Using the fact that $\frac{\eta}{1-\gamma} \le \frac{1}{16}$, we get 
    \begin{align*}
        \|\xp_{t+1}^s - x_{t}^s\| + \|x_{t}^s - \xp_{t}^s\| + \sqrt{2}\eta\err \geq \frac{1}{\sqrt{2}+\frac{1}{16}}\left(\eta \max_{x'}(x_{t}^s-x'^s) Q_{t}^s y_{t}^s\right)\geq \frac{\eta}{2} \max_{x'}(x_{t}^s-x'^s) Q_{t}^s y_{t}^s. 
    \end{align*}
    Similarly, we have $\|\yp_{t+1}^s-y_t^s\|+\|y_t^s-\yp_t^s\| + \sqrt{2}\eta\err \ge \frac{\eta}{2}\max_{y'}x_t^s Q_t^s(y'^s-y_t^s)$.
    Combining them and using $\norm{z-z'}\geq \frac{1}{2}\norm{x-x'} + \frac{1}{2}\norm{y-y'}$, we get 
    \begingroup
    \allowdisplaybreaks
    \begin{align}
        &\|\zp_{t+1}^s - z_{t}^s\| + \|z_{t}^s - \zp_{t}^s\| + \sqrt{2}\eta\err \nonumber  \\  
        &\geq \frac{\eta}{4}\left(  \max_{x'}(x_{t}^s-x'^s) Q_{t}^s y_{t}^s + \max_{y'}x_{t}^{s} Q_{t}^s (y'^s-y_{t}^s) \right) \nonumber \\
        &\geq \frac{\eta}{4} \Big( \max_{y'} x_{t}^{s}Q_{t}^sy'^s - \min_{x'} x'^sQ_{t}^sy_{t}^s \Big) \nonumber \\
        &= \frac{\eta}{4}\max_{y'}\Big(\xp_{t+1}^sQ_\star^sy'^s+x_t^s(Q_t^s-Q_\star^s)y'^s+(x_t^s-\xp_{t+1}^s)Q_\star^sy'^s\Big)   \nonumber\\
        &\qquad \qquad -\frac{\eta}{4}\min_{x'}\Big(x'^sQ_\star^s\yp_{t+1}^s+x'^s(Q_t^s-Q_\star^s)y_t^s+x'Q_\star^s(y_t^s-\yp_{t+1}^s)\Big)   \nonumber \\
        &\geq \frac{\eta}{4} \max_{x',y'}\Big( \xp_{t+1}^{s}Q_{\star}^s y'^s - x'^s Q_{\star}^s\yp_{t+1}^s \Big) - \frac{\eta \gap_t}{2} - \frac{\eta}{4(1-\gamma)}\left(\|\xp_{t+1}^s-x_t^s\|+\|\yp_{t+1}^s-y_t^s\|\right)  \tag{$\|Q^s_\star\|\leq \frac{1}{1-\gamma}$}  \\
        &\geq \frac{\eta}{4} \Delta_{t+1}^s - \frac{\eta \gap_t}{2} - \frac{\eta}{2(1-\gamma)}\|\zp_{t+1}^s-z_t^s\| \tag{by the definition of $\Delta^s_{t+1}$} \\
        &\geq \frac{\eta}{4} \Delta_{t+1}^s - \frac{\eta \gap_t}{2} - \frac{1}{16}\|\zp_{t+1}^s-z_t^s\|.   \label{eq: standard optimality analysis}
    \end{align}
    \endgroup 
    Then notice that we have 
    \begin{align*}
        &20\theta^s_{t+1} + \eta \gap_t + 2\eta^2 \err^2  \\
        &\geq \frac{289}{256}\|\zp_{t+1}^s - z_{t}^s\|^2 + \|z_{t}^s - \zp_{t}^s\|^2 + \frac{\eta^2 \gap_t^2}{4} + 2\eta^2 \err^2   \tag{by the definition of $\theta^s_{t+1}$ and that $\eta\gap_t \leq \frac{\eta}{1-\gamma}\leq 1$}    \\
        &\geq \frac{1}{4}\left(\frac{17}{16}\|\zp_{t+1}^s - z_{t}^s\| + \|z_{t}^s - \zp_{t}^s\| + \frac{\eta \gap_t}{2} + \sqrt{2}\eta\err \right)^2    \tag{Cauchy-Schwarz inequality}\\
        &\geq \frac{\eta^2}{64} \left( \Delta_{t+1}^s \right)^2 \tag{by \pref{eq: standard optimality analysis} and notice that $\Delta_{t+1}^s\geq 0$}.
    \end{align*}
\end{proof}

\begin{lemma}
\label{theorem: main theorem discount}
\textbf{\emph{(Key Lemma for Average Duality-gap Bounds)}\ \ } 
For all $t\geq 1$, we have
\begin{align*}
    \dist_{\star}^2(\zp_{t+1}^s) 
        &\leq \dist_{\star}^2(\zp_t^s) - 5\theta_{t+1}^s + \theta_t^s - \frac{\eta^2}{128} (\Delta_{t+1}^s)^2 + 5\eta \gap_t + 8\eta^2\|\Q_t^s-\Q_{t-1}^s\|^2 + 7\eta\err.  
\end{align*}
\end{lemma}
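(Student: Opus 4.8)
The plan is to derive this key lemma by simply combining the two results already established in Steps 1 and 2, namely \pref{lemma: distance decrease} and \pref{lem: step-2}. Notice that \pref{lemma: distance decrease} already has almost the right shape: it gives
\[
\dist_{\star}^2(\zp_{t+1}^s) \leq \dist_{\star}^2(\zp_t^s) - 15\theta^s_{t+1} + \theta^s_t + 4\eta\gap_t + 8\eta^2\|Q_t^s-Q_{t-1}^s\|^2 + 6\eta\err,
\]
whereas the target has only $-5\theta^s_{t+1}$ but an additional negative duality-gap term $-\frac{\eta^2}{128}(\Delta^s_{t+1})^2$. The whole point is therefore to ``spend'' part of the abundant instability-bonus budget $-15\theta^s_{t+1}$ to manufacture the duality-gap term, which is exactly what \pref{lem: step-2} lets us do.

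Concretely, I would first split $-15\theta^s_{t+1} = -5\theta^s_{t+1} - 10\theta^s_{t+1}$, keeping the first chunk and converting the second. Dividing the inequality of \pref{lem: step-2} by $2$ gives
\[
\frac{\eta^2}{128}(\Delta^s_{t+1})^2 \leq 10\theta^s_{t+1} + \tfrac{\eta}{2}\gap_t + \eta^2\err^2,
\]
equivalently $-10\theta^s_{t+1} \leq -\frac{\eta^2}{128}(\Delta^s_{t+1})^2 + \frac{\eta}{2}\gap_t + \eta^2\err^2$. Substituting this bound for $-10\theta^s_{t+1}$ into the displayed consequence of \pref{lemma: distance decrease} immediately produces the desired $-5\theta^s_{t+1}$ and $-\frac{\eta^2}{128}(\Delta^s_{t+1})^2$ terms, at the cost of the extra nonnegative error contributions $\frac{\eta}{2}\gap_t$ and $\eta^2\err^2$.

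The only remaining work is constant bookkeeping on the two leftover types of terms. For the $\gap_t$ terms, the conversion contributes $\frac{\eta}{2}\gap_t$, which added to the $4\eta\gap_t$ already present gives $\frac{9}{2}\eta\gap_t \leq 5\eta\gap_t$, matching the claim. For the error terms, I would collect $\eta^2\err^2 + 6\eta\err$ and observe that the step-size constraint $\eta \leq \frac{1}{10^4}\sqrt{(1-\gamma)^5/S}$ together with $\err \leq \frac{1}{1-\gamma}$ forces $\eta\err \leq \frac{\eta}{1-\gamma} \leq 1$, so that $\eta^2\err^2 = (\eta\err)(\eta\err) \leq \eta\err$; hence $\eta^2\err^2 + 6\eta\err \leq 7\eta\err$, which is precisely the stated error budget. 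There is no genuine obstacle here—everything is a one-line substitution plus constant chasing—so the ``hard part'' is only making sure the $\theta$-budget split ($15 = 5 + 10$) and the halving of \pref{lem: step-2} line up so that the $\Delta^s_{t+1}$ coefficients ($\frac{\eta^2}{128}$ on both sides) match exactly.
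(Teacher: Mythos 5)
Your proposal is correct and is exactly the paper's own argument: the paper likewise splits $-15\theta^s_{t+1}$ into $-5\theta^s_{t+1}-10\theta^s_{t+1}$, uses \pref{lem: step-2} (halved) to convert the $-10\theta^s_{t+1}$ into $-\frac{\eta^2}{128}(\Delta^s_{t+1})^2 + \frac{\eta}{2}\gap_t + \eta^2\err^2$, and then absorbs the leftovers via $4\eta\gap_t+\frac{\eta}{2}\gap_t\leq 5\eta\gap_t$ and $\eta\err\leq 1$ so that $6\eta\err+\eta^2\err^2\leq 7\eta\err$. No gaps.
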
 
\begin{proof}
    Combining \pref{lem: step-2} with \pref{lemma: distance decrease}, we get 
    \begin{align*}
        \dist_{\star}^2(\zp_{t+1}^s) 
        &\leq \dist_{\star}^2(\zp_{t}^s) - 5\theta^s_{t+1} - 10\theta^s_{t+1} + \theta^s_t 
         + 4\eta \gap_t + 8\eta^2\|Q_t^s-Q_{t-1}^s\|^2 + 6\eta\err \\
        &\leq \dist_{\star}^2(\zp_t^s) - 5\theta^s_{t+1} - \left(\frac{\eta^2}{128}\left(\Delta^s_{t+1}\right)^2 - \frac{1}{2}\eta\gap_t - \eta^2\err^2\right) + \theta^s_t 
         + 4\eta \gap_t + 8\eta^2\|Q_t^s-Q_{t-1}^s\|^2 + 6\eta\err\\
        &\leq \dist_{\star}^2(\zp_t^s) - 5\theta^s_{t+1} + \theta^s_t  - \frac{\eta^2}{128}\left(\Delta^s_{t+1}\right)^2   
         + 5\eta \gap_t + 8\eta^2\|Q_t^s-Q_{t-1}^s\|^2 + 7\eta\err.   \tag{$\eta\err\leq 1$}
    \end{align*} 
    
\end{proof}

\begin{lemma}
    \label{lemma: using SPRSI}
    \textbf{\emph{(Key Lemma for Point-wise Convergence Bounds)}\ \ }
    There exists a constant $C'>0$ (which depends on the transition and the loss/payoff functions) such that for all $t\geq 1$, 
    \begin{align*}
        \dist_{\star}(\zp_{t+1}^s) + 4.5\theta_{t+1}^s
        &\leq  \frac{1}{1+\eta^2 C'^2}(\dist_{\star}(\zp_t^s) + 4.5\theta_{t}^s) + 5\eta \gap_t + 8\eta^2\|Q_t^s-Q_{t-1}^s\|^2 - 3\theta_{t}^s + 7\eta\err.  
\end{align*}
\end{lemma}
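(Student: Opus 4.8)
The plan is to upgrade the average--duality-gap inequality of \pref{theorem: main theorem discount},
\begin{align*}
    \dist_{\star}^2(\zp_{t+1}^s) \leq \dist_{\star}^2(\zp_t^s) - 5\theta_{t+1}^s + \theta_t^s - \tfrac{\eta^2}{128}(\Delta_{t+1}^s)^2 + 5\eta \gap_t + 8\eta^2\|Q_t^s-Q_{t-1}^s\|^2 + 7\eta\err,
\end{align*}
into a genuine one-step contraction on the potential $\dist_\star^2(\zp_t^s)+4.5\theta_t^s$. The only term in this inequality that can drive contraction is $-\tfrac{\eta^2}{128}(\Delta_{t+1}^s)^2$, so the crux is to lower bound the duality gap $\Delta_{t+1}^s$ by the distance $\dist_\star(\zp_{t+1}^s)$; everything else is rearrangement.

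First I would establish the Saddle-Point Restricted Secant Inequality (SP-RSI): there is a problem-dependent $\widetilde C>0$ with $\Delta_{t+1}^s\geq \widetilde C\,\dist_\star(\zp_{t+1}^s)$ for all $s,t$. The key observation is that $\Delta_{t+1}^s=\max_{x',y'}(\xp_{t+1}^s Q_\star^s y'^s - x'^s Q_\star^s \yp_{t+1}^s)$ is \emph{exactly} the duality gap of $\zp_{t+1}^s$ in the \emph{fixed} matrix game with payoff matrix $Q_\star^s$, while $\dist_\star(\zp_{t+1}^s)$ is the distance of $\zp_{t+1}^s$ to $\calZ_\star^s=\calX_\star^s\times\calY_\star^s$. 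Since (as noted in \pref{sec:prelim}) $\calX_\star^s$ and $\calY_\star^s$ are precisely the minimax/maximin strategy sets of the matrix $Q_\star^s$, the set $\calZ_\star^s$ is exactly its saddle-point set, so the matrix-game secant inequality of \citep{wei2020linear} applies verbatim to each $Q_\star^s$ and gives a constant $C_s>0$. As $\calS$ is finite and every $Q_\star^s$ is fixed, $\widetilde C=\min_s C_s>0$ works uniformly, and this is also the source of the dependence of $C'$ on the transition and loss functions.

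Given SP-RSI, squaring yields $(\Delta_{t+1}^s)^2\geq \widetilde C^2\dist_\star^2(\zp_{t+1}^s)$; substituting this and writing $C'^2\triangleq \widetilde C^2/128$, I would move the contraction term to the left-hand side to get
\begin{align*}
    (1+\eta^2 C'^2)\dist_\star^2(\zp_{t+1}^s) \leq \dist_\star^2(\zp_t^s) - 5\theta_{t+1}^s + \theta_t^s + 5\eta \gap_t + 8\eta^2\|Q_t^s-Q_{t-1}^s\|^2 + 7\eta\err.
\end{align*}
Adding $4.5(1+\eta^2 C'^2)\theta_{t+1}^s$ to both sides turns the $\theta_{t+1}^s$-coefficient on the right into $-5+4.5(1+\eta^2C'^2)=-\tfrac12+4.5\eta^2C'^2\leq 0$ (using that $\eta^2C'^2$ is small under the choice of $\eta$ in \pref{algo: main alg}), so that term is dropped; dividing by $(1+\eta^2C'^2)$ and using $\tfrac{1}{1+\eta^2C'^2}\leq 1$ to pull the three nonnegative error terms out of the contraction gives
\begin{align*}
    \dist_\star^2(\zp_{t+1}^s)+4.5\theta_{t+1}^s \leq \tfrac{1}{1+\eta^2C'^2}\big(\dist_\star^2(\zp_t^s)+\theta_t^s\big)+5\eta \gap_t+8\eta^2\|Q_t^s-Q_{t-1}^s\|^2+7\eta\err.
\end{align*}
Finally, to match the stated potential and expose the spare bonus $-3\theta_t^s$, I would invoke the elementary bound $\tfrac{1}{1+\eta^2C'^2}\theta_t^s\leq \tfrac{4.5}{1+\eta^2C'^2}\theta_t^s-3\theta_t^s$, which is equivalent to $\eta^2C'^2\leq \tfrac16$ and hence holds for small $\eta$; this produces exactly the claimed recursion.

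The main obstacle is the SP-RSI of the second step, and specifically the identification of $\calZ_\star^s$ with the saddle-point set of the fixed matrix $Q_\star^s$, which is what lets the matrix-game result transfer state by state; the delicate case is when this set is not a singleton, so that a set-valued (generalized) version of the secant inequality is required. By contrast, the remaining manipulations are routine and only rely on $\eta$ being small enough, which the parameter constraints guarantee.
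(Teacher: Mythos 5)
Your proposal is correct and follows essentially the same route as the paper: the paper likewise invokes the matrix-game saddle-point metric subregularity result (Theorem 5 of \citep{wei2020linear}, or Lemma 3 of \citep{gilpin2012first}) applied to each fixed $Q^s_\star$ to get $\Delta^s_{t+1}\geq C\,\dist_\star(\zp_{t+1}^s)$, sets $C'^2=C^2/128$, and then performs the same rearrangement of \pref{theorem: main theorem discount}, including the final trick $\frac{1}{1+\eta^2C'^2}\leq\frac{4.5}{1+\eta^2C'^2}-3$ valid for small $\eta C'$. Your handling of the $\theta^s_{t+1}$ coefficient (adding $4.5(1+\eta^2C'^2)\theta^s_{t+1}$ and dropping the nonpositive remainder) is algebraically equivalent to the paper's observation that $\frac{5}{1+\eta^2C'^2}\geq 4.5$.
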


\begin{proof}
By Theorem 5 of \citep{wei2020linear} or Lemma 3 of \citep{gilpin2012first}, we have 
    \begin{align*}
        \Delta_{t+1}^s \geq C\dist_{\star}(\zp_{t+1}^s)
    \end{align*}
    for some problem-dependent constant $0<C\leq \frac{1}{1-\gamma}$ ($C$ depends on $\{Q^s_\star\}_{s}$). 
    Thus \pref{theorem: main theorem discount} implies
    \begin{align*}
        \dist_{\star}^2(\zp_{t+1}^s) + 5\theta_{t+1}^s 
        &\leq \dist_{\star}^2(\zp_{t}^s) + \theta_{t}^s - \frac{\eta^2 C^2}{128} \dist_{\star}^2(\zp_{t+1}^s) + 5\eta \gap_t + 8\eta^2\|Q_t^s-Q_{t-1}^s\|^2 + 7\eta\err. 
    \end{align*}
    By defining $C'^2 = \frac{C^2}{128}$, we further get
    \begin{align*}
    \dist_{\star}^2(\zp_{t+1}^s) + \frac{5}{1+\eta^2C'^2}\theta_{t+1}^s
        &\leq \frac{1}{1+\eta^2 C'^2} \left(\dist_{\star}^2(\zp_t^s) + \theta_{t}^s + 5\eta \gap_t + 8\eta^2\|Q_t^s-Q_{t-1}^s\|^2 + 7\eta\err \right) \\
        &\leq  \frac{1}{1+\eta^2 C'^2}(\dist_{\star}^2(\zp_t^s) + \theta_{t}^s) + 5\eta \gap_t + 8\eta^2\|Q_t^s-Q_{t-1}^s\|^2 + 7\eta\err.  
    \end{align*}
    Notice that $\frac{5}{1+\eta^2 C'^2}\geq \frac{5}{1+\frac{1}{16^2}\times\frac{1}{128}}\geq 4.5$. Thus we further have 
    \begin{align*}
        \dist_{\star}^2(\zp_{t+1}^s) + 4.5\theta_{t+1}^s
        &\leq  \frac{1}{1+\eta^2 C'^2}(\dist_{\star}^2(\zp_t^s) + 4.5\theta_{t}^s) + 5\eta \gap_t + 8\eta^2\|Q_t^s-Q_{t-1}^s\|^2 - 3\theta_{t}^s + 7\eta\err  
    \end{align*}
    where in the last inequality we use $\frac{1}{1+\eta^2C'^2} \leq \frac{4.5}{1+\eta^2C'^2}-3$ because $\eta^2C'^2\leq \frac{1}{16^2}\times \frac{1}{128}$. 
\end{proof}

\section{\texorpdfstring{Proof for Step 3: Bounding $\|Q_t^s-Q_{t-1}^s\|^2$}{}}\label{app: step-3}
\begin{lemma} 
\label{lem: key lemma 1}
     We have for $t\geq 2$ and all $s\in\calS$, 
     \begin{align*}
         \|\Q_{t}^s - \Q_{t-1}^s\|^2   \leq  \frac{8\gamma^2}{(1-\gamma)^3}\difz_{t-1} + \frac{2\gamma^2}{1+\gamma} \difq_{t-1} + \frac{16\gamma^2\err^2}{1-\gamma}. 
     \end{align*}
\end{lemma}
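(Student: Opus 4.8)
The plan is to trace the change in the game matrix back to the change in the critic's value estimates and then unroll the averaging recursion \pref{eq: update 5}. First I would use the definition of $Q_t^s$ to cancel the time-invariant loss term: since $Q_t^s(a,b)-Q_{t-1}^s(a,b)=\gamma\,\E_{s'\sim p(\cdot|s,a,b)}[V_{t-1}^{s'}-V_{t-2}^{s'}]$ is an average over $s'$, Jensen's inequality gives $\norm{Q_t^s-Q_{t-1}^s}^2\le\gamma^2\max_{s'}(V_{t-1}^{s'}-V_{t-2}^{s'})^2$. It thus suffices to bound the squared one-step change of each value sequence, which is where the slowly-moving critic pays off.

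The key step is a clean single-step recursion for $V_t^s-V_{t-1}^s$, which is delicate because the step size $\alpha_t$ varies. Introducing $g_t^s=\pay_t^s-V_{t-1}^s$, the update \pref{eq: update 5} gives the two exact identities $V_t^s-V_{t-1}^s=\alpha_t g_t^s$ and $g_t^s=(1-\alpha_{t-1})g_{t-1}^s+(\pay_t^s-\pay_{t-1}^s)$; eliminating $g$ yields
\[
V_t^s-V_{t-1}^s=\frac{\alpha_t(1-\alpha_{t-1})}{\alpha_{t-1}}\,(V_{t-1}^s-V_{t-2}^s)+\alpha_t(\pay_t^s-\pay_{t-1}^s).
\]
Both coefficients are non-negative and sum to $\alpha_t/\alpha_{t-1}\le 1$, so by Cauchy-Schwarz the square of this sub-convex combination is at most $\frac{\alpha_t(1-\alpha_{t-1})}{\alpha_{t-1}}(V_{t-1}^s-V_{t-2}^s)^2+\alpha_t(\pay_t^s-\pay_{t-1}^s)^2$; bounding the first coefficient by $1-\alpha_t$ (again using $\alpha_t\le\alpha_{t-1}$) puts this in the exact form of \pref{lemma: recur1}. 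Since the base case holds ($\alpha_1=1$ gives $V_1^s-V_0^s=\pay_1^s-\pay_0^s$), the monotone comparison with the equality recursion of \pref{lemma: recur1} gives $(V_{t-1}^{s'}-V_{t-2}^{s'})^2\le\sum_{\tau=1}^{t-1}\alpha_{t-1}^\tau(\pay_\tau^{s'}-\pay_{\tau-1}^{s'})^2$.

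Finally I would bound each per-step payoff difference. Using that $\pay_\tau^s$ is an $\err$-approximation of $x_\tau^s Q_\tau^s y_\tau^s$, I write $|\pay_\tau^s-\pay_{\tau-1}^s|\le|x_\tau^s Q_\tau^s y_\tau^s-x_{\tau-1}^s Q_{\tau-1}^s y_{\tau-1}^s|+2\err$ and split the value difference into $(x_\tau^s-x_{\tau-1}^s)Q_\tau^s y_\tau^s$, $x_{\tau-1}^s(Q_\tau^s-Q_{\tau-1}^s)y_\tau^s$, and $x_{\tau-1}^s Q_{\tau-1}^s(y_\tau^s-y_{\tau-1}^s)$. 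The first and third are at most $\frac{1}{1-\gamma}\norm{x_\tau^s-x_{\tau-1}^s}$ and $\frac{1}{1-\gamma}\norm{y_\tau^s-y_{\tau-1}^s}$ (as $\norm{Q_\tau^s}\le\frac{1}{1-\gamma}$, exactly as in the proof of \pref{lem: one-step regret bound}), with sum at most $\frac{\sqrt2}{1-\gamma}\norm{z_\tau^s-z_{\tau-1}^s}$; the middle term is at most $\norm{Q_\tau^s-Q_{\tau-1}^s}$ since $x_{\tau-1}^s,y_\tau^s$ are distributions. The crucial move is to square with the weighted inequality $(a+b)^2\le(1+\rho^{-1})a^2+(1+\rho)b^2$, taking $b=\norm{Q_\tau^s-Q_{\tau-1}^s}$ and $\rho=\frac{1-\gamma}{1+\gamma}$, so that the coefficient on $\norm{Q_\tau^s-Q_{\tau-1}^s}^2$ is exactly $1+\rho=\frac{2}{1+\gamma}$ while the factor $1+\rho^{-1}=\frac{2}{1-\gamma}$ is absorbed into the $z$- and $\err$-terms. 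This produces $(\pay_\tau^s-\pay_{\tau-1}^s)^2\le\frac{8}{(1-\gamma)^3}\norm{z_\tau^s-z_{\tau-1}^s}^2+\frac{2}{1+\gamma}\norm{Q_\tau^s-Q_{\tau-1}^s}^2+\frac{16}{1-\gamma}\err^2$. Plugging this into the weighted sum, recognizing $\sum_\tau\alpha_{t-1}^\tau\norm{z_\tau^{s'}-z_{\tau-1}^{s'}}^2=\difz_{t-1}^{s'}$ and $\sum_\tau\alpha_{t-1}^\tau\norm{Q_\tau^{s'}-Q_{\tau-1}^{s'}}^2=\difq_{t-1}^{s'}$ (\pref{cor: useful corollary}) and $\sum_\tau\alpha_{t-1}^\tau=1$ for the $\err$-term, then taking $\max_{s'}$ and multiplying by $\gamma^2$, yields the claim.

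The two delicate points---and the main obstacles---are (i) extracting the clean single-step recursion for $V_t^s-V_{t-1}^s$ despite the time-varying $\alpha_t$, which relies entirely on the auxiliary variable $g_t^s$ together with $\alpha_t$ being non-increasing; and (ii) obtaining the \emph{tight} constant $\frac{2}{1+\gamma}$ on the $\difq_{t-1}$ term. The latter is not cosmetic: after the $\gamma^2$ factor the coefficient becomes $\frac{2\gamma^2}{1+\gamma}<1$, and it is precisely this strict contraction that later lets the recursive expansion of \pref{eq: decompose qt+1 qt} in Step 3 converge; a naive bound such as $(a+b+c)^2\le 3(a^2+b^2+c^2)$ would give a coefficient exceeding $1$ and break the argument, which is why the weighted split with $\rho=\frac{1-\gamma}{1+\gamma}$ is essential.
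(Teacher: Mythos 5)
Your proof is correct and follows essentially the same route as the paper's: reduce $\|Q_t^s-Q_{t-1}^s\|^2$ to $\gamma^2\max_{s'}(V_{t-1}^{s'}-V_{t-2}^{s'})^2$, bound that by the $\alpha$-weighted sum $\sum_\tau\alpha_{t-1}^\tau(\pay_\tau-\pay_{\tau-1})^2$, and then control each $(\pay_\tau-\pay_{\tau-1})^2$ with exactly the same five-way split and the same weighted Cauchy--Schwarz that puts the coefficient $\frac{2}{1+\gamma}$ on the $\|Q_\tau-Q_{\tau-1}\|^2$ term. The one place you genuinely diverge is the middle step: the paper writes $V_t-V_{t-1}=\alpha_t\sum_{\tau}\alpha_{t-1}^\tau(\pay_t-\pay_\tau)$, telescopes, swaps sums via the $\delta^\tau_t$ coefficients (\pref{lemma: delta formula}), and applies Cauchy--Schwarz twice, whereas you derive the one-step recursion $(V_t-V_{t-1})^2\le(1-\alpha_t)(V_{t-1}-V_{t-2})^2+\alpha_t(\pay_t-\pay_{t-1})^2$ directly via the auxiliary variable $g_t=\pay_t-V_{t-1}$ and then invoke the (inequality form of) \pref{lemma: recur1}. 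Both land on the identical intermediate bound; your derivation is somewhat cleaner and avoids the $\delta^\tau_t$ bookkeeping, at the cost of needing the monotone-comparison version of \pref{lemma: recur1}, which is immediate by induction. Your observation about why $\frac{2\gamma^2}{1+\gamma}<1$ matters for Step 3 matches the paper's later use of the lemma.
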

\begin{proof}
    It is equivalent to prove that for all $t\geq 1$, 
    \begin{align*}
         \|\Q_{t+1}^s -  \Q_{t}^s\|^2   \leq  \frac{8\gamma^2}{(1-\gamma)^3}\difz_{t} + \frac{2\gamma^2}{1+\gamma} \difq_{t} + \frac{16\gamma^2\err^2}{1-\gamma}. 
    \end{align*}
    By definition, 
    \[Q_t^s(a,b) = 
           \sigma(s,a,b) + \gamma \E_{s'\sim p(\cdot|s,a,b)} \left[V_{t-1}^{s'} \right],\]
    we have
    \begin{align}
        &\|\Q_{t+1}^s - \Q_{t}^s\|^2  
        = \max_{a,b} (Q_{t+1}^s(a,b) - Q_{t}^s(a,b))^2  
        \leq \gamma^2 \max_{s'}\left( V_{t}^{s'} - V_{t-1}^{s'} \right)^2    \label{eq: relate Q to V}
    \end{align}
    Now it suffices to upper bound $\left( V_{t}^{s} - V_{t-1}^{s} \right)^2$ for any $s$. 
    By \pref{cor: useful corollary}, we have $V_{t-1}^s = \sum_{\tau=1}^{t-1}\alpha^\tau_{t-1} \pay^s_\tau$. 
    Therefore, 
    \begin{align*}
        V_{t}^s - V_{t-1}^s 
        &= \alpha_t \left(\pay_t^s -V_{t-1}^s\right)\\
        &=\alpha_t\left(\pay_t^s - \sum_{\tau=0}^{t-1} \alpha^{\tau}_{t-1} \pay_\tau^s \right)  \\
        &= \alpha_t \left(\sum_{\tau = 0}^{t-1}\alpha_{t-1}^{\tau}\left(\pay_t^s -\pay_\tau^s\right) \right) \tag{because $\sum_{\tau=0}^{t-1}\alpha_{t-1}^\tau = 1$}
    \end{align*}
    In the following calculation, we omit the superscript $s$ for simplicity.  
    By defining $\diff_h \triangleq |\pay_h - \pay_{h-1}|$, we have
    \begingroup
    \allowdisplaybreaks
    \begin{align*}
        (V_{t} - V_{t-1})^2 
        &\leq (\alpha_t)^2 \left(\sum_{\tau=0}^{t-1}  \alpha^\tau_{t-1} (\pay_t - \pay_\tau) \right)^2 \\
        &\leq (\alpha_t)^2 \left(\sum_{\tau=0}^{t-1} \alpha^\tau_{t-1} \sum_{h=\tau+1}^t \left(\pay_h-\pay_{h-1}\right)  \right)^2 \\
        &\leq (\alpha_t)^2 \left(\sum_{\tau=0}^{t-1} \alpha^\tau_{t-1} \sum_{h=\tau+1}^t \diff_h \right)^2 \\
        &= (\alpha_t)^2 \left( \sum_{h=1}^{t}\sum_{\tau=0}^{h-1}\alpha^\tau_{t-1} \diff_h \right)^2 \\
        &\leq (\alpha_t)^2  \left(\sum_{h=1}^t \delta^{h-1}_{t-1} \diff_h \right)^2.  \tag{by \pref{lemma: delta formula}}
    \end{align*}
    \endgroup
    Then we continue: 
    \begingroup
    \allowdisplaybreaks
    \begin{align*}
        &(V_t - V_{t-1})^2 \\ 
        &\leq  (\alpha_t)^2\left(\sum_{h=1}^{t}\delta^{h-1}_{t-1} \diff_h\right)^2  \\
        &\leq (\alpha_t)^2 \left(\sum_{h=1}^t \delta^{h-1}_{t-1} \right)\left(\sum_{h=1}^{t} \delta^{h-1}_{t-1} \diff_h^2 \right)    \tag{Cauchy-Schwarz inequality}\\
        &\leq \left(\sum_{h=1}^t \alpha_h \delta^{h-1}_{t-1} \right)\left(\sum_{h=1}^{t} \alpha_h\delta^{h-1}_{t-1} \diff_h^2 \right)  \tag{$\alpha_t\leq \alpha_h$ for $h\leq t$}\\
        &\leq \sum_{\tau=1}^{t} \alpha^\tau_{t} \diff_\tau^2   \tag{note that $\alpha_h\delta^{h-1}_{t-1}=\alpha_h \prod_{\tau=h}^{t-1}(1-\alpha_\tau)\leq \alpha_h \prod_{\tau=h+1}^{t}(1-\alpha_\tau)= \alpha^h_{t}$}\\ 
        &= \sum_{\tau=1}^{t} \alpha^\tau_t \Big(\pay_t - x_\tau Q_\tau y_\tau + \x_\tau (\Q_\tau-\Q_{\tau-1}) \y_\tau + (\x_\tau - \x_{\tau-1}) \Q_{\tau-1} \y_\tau \\
        &\qquad \qquad \qquad + \x_{\tau-1} \Q_{\tau-1} (\y_\tau - \y_{\tau-1}) + x_{\tau-1}Q_{\tau-1}y_{\tau-1}-\pay_{t-1}\Big)^2 \\
        &\leq \sum_{\tau=1}^{t} \alpha^\tau_t \left(\frac{8\err^2}{1-\gamma} + \frac{2}{1+\gamma}\|\Q_\tau-\Q_{\tau-1}\|^2 + \frac{8}{(1-\gamma)^3}\|\x_\tau-\x_{\tau-1}\|^2 + \frac{8}{(1-\gamma)^3}\|\y_\tau-\y_{\tau-1}\|^2 +\frac{8\err^2}{1-\gamma} \right), 
    \end{align*}
    where we use $(a+b+c+d+e)^2\leq \frac{8}{1-\gamma}a^2 +
    \frac{2}{1+\gamma}b^2 + \frac{8}{1-\gamma}c^2 + \frac{8}{1-\gamma}d^2 + \frac{8}{1-\gamma}e^2$ which is due to Cauchy-Schwarz inequality.  
    \endgroup
    By \pref{lemma: recur1} and the definitions of $\difz^s_t$, $\difq^s_t$, $\difz_t$, $\difq_t$ in \pref{def: cumm path length} and \pref{def: cumm Q diff},
    \begin{align*}
         &\sum_{\tau=1}^t\alpha_t^\tau \|\Q_\tau^s-\Q_{\tau-1}^s\|^2 = \difq^s_t \leq \difq_t, \\
         &\sum_{\tau=1}^t\alpha_t^\tau\|\z_\tau^s - \z_{\tau-1}^s\|^2 \leq \difz^s_t \leq \difz_t. 
    \end{align*}
Combining them with the previous upper bound for $(V_t^s-V_{t-1}^s)^2$, we get
    \begin{align*}
        (V_t^s-V_{t-1}^s)^2 &\leq \frac{8}{(1-\gamma)^3}\difz_t +  \frac{2}{1+\gamma}\difq_t +  \frac{16\err^2}{1-\gamma} 
    \end{align*}
    for all $s$. 
    Further combining this with \pref{eq: relate Q to V}, we get
    \begin{align*}
        &\|\Q_{t+1}^s - \Q_{t}^s\|^2 
        \leq  \frac{8\gamma^2}{(1-\gamma)^3}\difz_{t} +  \frac{2\gamma^2}{1+\gamma} \difq_{t} + \frac{16\gamma^2\err^2}{1-\gamma} . 
    \end{align*}
\end{proof}

\section{\texorpdfstring{Proof for Steps 4 and 5: Bounding $\|Q_t^s-Q_\star^s\|$}{}}\label{app: step-4-5}


\begin{lemma}
    \label{lem: optimistic lemma discount}
    For all $t\geq 2$, 
    \begin{align*}
        \gap_t \leq \gamma \left(\sum_{\tau=1}^{t-1} \alpha^{\tau}_{t-1}\gap_\tau + \barReg_{t-1} + \err\right).
    \end{align*}
\end{lemma}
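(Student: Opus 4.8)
The plan is to follow the outline of Step~4 in \pref{sec:analysis}: first establish two-sided bounds on the critic value $V_t^s$ in terms of the optimal value $V_\star^s$, and then propagate them through the Bellman-type definition of $Q_{t+1}^s$ to obtain the recursion on $\gap_t$. It is cleanest to prove the equivalent claim that $\gap_{t+1} \leq \gamma\left(\sum_{\tau=1}^t \alpha^\tau_t \gap_\tau + \barReg_t + \err\right)$ holds for all $t\geq 1$, and then reindex $t\mapsto t-1$.

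For the upper bound on $V_t^s$, I would begin from $V_t^s = \sum_{\tau=1}^t \alpha^\tau_t \pay_\tau^s$ (\pref{cor: useful corollary}), apply $\pay_\tau^s \leq x_\tau^s Q_\tau^s y_\tau^s + \err$ together with $\sum_{\tau=1}^t \alpha^\tau_t = 1$ to reach $\sum_{\tau=1}^t \alpha^\tau_t x_\tau^s Q_\tau^s y_\tau^s + \err$. The first term in the definition of $\barReg_t^s$ then lets me replace each $x_\tau^s$ by the optimal projection $\xp_{t\star}^s$ at an additive cost of $\barReg_t^s$. Next I swap $Q_\tau^s$ for $Q_\star^s$ using $\xp_{t\star}^s (Q_\tau^s - Q_\star^s) y_\tau^s \leq \|Q_\tau^s - Q_\star^s\| \leq \gap_\tau$, which is valid because $\xp_{t\star}^s$ and $y_\tau^s$ are probability vectors and $\|\cdot\|$ is the entrywise max-norm; this contributes $\sum_{\tau=1}^t \alpha^\tau_t \gap_\tau$. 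Finally, since $\xp_{t\star}^s\in\calX_\star^s$ is a minimax strategy of the matrix $Q_\star^s$, we have $\xp_{t\star}^s Q_\star^s y_\tau^s \leq V_\star^s$ for every $\tau$, and averaging with the weights $\alpha^\tau_t$ gives $\sum_{\tau=1}^t \alpha^\tau_t \xp_{t\star}^s Q_\star^s y_\tau^s \leq V_\star^s$. Chaining these estimates yields $V_t^s \leq V_\star^s + \sum_{\tau=1}^t \alpha^\tau_t \gap_\tau + \barReg_t + \err$.

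The matching lower bound $V_t^s \geq V_\star^s - \sum_{\tau=1}^t \alpha^\tau_t \gap_\tau - \barReg_t - \err$ is entirely symmetric: use $\pay_\tau^s \geq x_\tau^s Q_\tau^s y_\tau^s - \err$, the second term of $\barReg_t^s$ to introduce $\yp_{t\star}^s$, the same $\gap_\tau$ estimate to pass from $Q_\tau^s$ to $Q_\star^s$, and the maximin property $x^s Q_\star^s \yp_{t\star}^s \geq V_\star^s$ of $\yp_{t\star}^s\in\calY_\star^s$. Because $\gap_\tau$ and $\barReg_t$ are defined as maxima over states, both bounds hold uniformly in $s$, so $|V_t^{s'} - V_\star^{s'}| \leq \sum_{\tau=1}^t \alpha^\tau_t \gap_\tau + \barReg_t + \err$ for every state $s'$.

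To conclude, I subtract the definitions of $Q_{t+1}^s$ and $Q_\star^s$ (whose $\sig(s,a,b)$ terms cancel) to get $Q_{t+1}^s(a,b) - Q_\star^s(a,b) = \gamma\,\E_{s'\sim p(\cdot|s,a,b)}\!\left[V_t^{s'} - V_\star^{s'}\right]$, and bound the expectation by the uniform estimate above, obtaining $|Q_{t+1}^s(a,b) - Q_\star^s(a,b)| \leq \gamma\left(\sum_{\tau=1}^t \alpha^\tau_t \gap_\tau + \barReg_t + \err\right)$. Taking the maximum over $a,b$ and then over $s$ gives the stated recursion for $\gap_{t+1}$, and reindexing proves the lemma. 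I expect the only genuinely delicate point to be this minimax/maximin step: one must invoke the fact from \pref{sec:prelim} that any policy whose component at each state lies in $\calX_\star^s$ (resp.\ $\calY_\star^s$) is a minimax (resp.\ maximin) policy, so that $\xp_{t\star}^s$ and $\yp_{t\star}^s$ are legitimate optimal strategies of the single matrix game $Q_\star^s$ and the inequalities $\xp_{t\star}^s Q_\star^s y^s \leq V_\star^s \leq x^s Q_\star^s \yp_{t\star}^s$ hold; everything else is bookkeeping with the convex weights $\alpha^\tau_t$.
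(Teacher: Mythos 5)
Your proposal is correct and follows essentially the same route as the paper's proof: bound $V_t^{s'}-V_\star^{s'}$ from both sides using the $\err$-accuracy of $\pay_\tau^{s'}$, the definition of $\barReg_t$, the definition of $\gap_\tau$, and the minimax/maximin optimality of $\xp_{t\star}^{s'}$ and $\yp_{t\star}^{s'}$, then propagate through $Q_{t+1}^s(a,b)-Q_\star^s(a,b)=\gamma\,\E_{s'}[V_t^{s'}-V_\star^{s'}]$. The only cosmetic difference is that you isolate the uniform bound on $|V_t^{s'}-V_\star^{s'}|$ before plugging it in, whereas the paper writes the chain of inequalities directly on $Q_{t+1}^s(a,b)$.
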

\begin{proof}
We proceed with
    \begingroup
    \allowdisplaybreaks
    \begin{align*} 
        &Q_{t+1}^s(a,b) \\ 
        &= \sig(s,a,b) + \gamma\E_{s'\sim p(\cdot|s,a,b)}\left[ V_{t}^{s'} \right]  \\
        &= \sig(s,a,b) + \gamma\E_{s'\sim p(\cdot|s,a,b)}\left[ \sum_{\tau=1}^t \alpha^\tau_t \pay^{s'}_\tau \right] \tag{\pref{cor: useful corollary}} \\
        &\leq \sig(s,a,b) + \gamma\E_{s'\sim p(\cdot|s,a,b)} \left[\sum_{\tau=1}^{t}\alpha^\tau_{t} x_{\tau}^{s'} Q_{\tau}^{s'} y_{\tau}^{s'} + \err \right]  \tag{by the definition of $\pay^{s'}_\tau$ and that $\sum_{\tau=1}^t \alpha^\tau_t=1$ for $t\geq 1$}\\
        &\leq \sig(s,a,b) + \gamma\E_{s'\sim p(\cdot|s,a,b)} \left[ \sum_{\tau=1}^{t} \alpha^\tau_{t}  \xp_{t\star}^{s'} Q_{\tau}^{s'}y_\tau^{s'} + \barReg_{t} + \err \right]    \tag{by the definition of $\barReg_t$}\\
        &\leq \sig(s,a,b) + \gamma\E_{s'\sim p(\cdot|s,a,b)}\left[  \sum_{\tau=1}^{t} \alpha^\tau_{t}\xp_{t\star}^{s'} Q_{\star}^{s'}y_\tau^{s'} + \sum_{\tau=1}^{t}  \alpha^{\tau}_{t}\gap_\tau + \barReg_{t}+  \err \right]  \tag{by the definition of $\gap_\tau$} \\
        &\leq \sig(s,a,b) + \gamma\E_{s'\sim p(\cdot|s,a,b)}\left[  \sum_{\tau=1}^{t} \alpha^\tau_{t}\xp_{t\star}^{s'} Q_{\star}^{s'} y_\star^{s'}\right] + \gamma\left(\sum_{\tau=1}^{t} \alpha^{\tau}_{t}\gap_\tau + \barReg_{t} + \err \right) \tag{by definition of $y_\star^{s'}$} \\
        &= \sig(s,a,b) + \gamma \E_{s'\sim p(\cdot|s,a,b)}\left[V^{s'}_{\star}\right] + \gamma\left(\sum_{\tau=1}^{t} \alpha^{\tau}_{t}\gap_\tau + \barReg_{t} + \err \right)   \tag{$\sum_{\tau=1}^t \alpha^\tau_t=1$ for $t\geq 1$} \\
        &= Q^{s}_\star(a,b) + \gamma\left(\sum_{\tau=1}^{t} \alpha^{\tau}_{t}\gap_\tau + \barReg_{t} + \err \right). 
    \end{align*} 
    Similarly, 
    \begin{align*}
        Q^s_{t+1}(a,b)\geq Q^s_\star(a,b) - \gamma\left(\sum_{\tau=1}^{t} \alpha^{\tau}_{t}\gap_\tau + \barReg_{t} + \err \right). 
    \end{align*}
    They jointly imply
    \begin{align*}
        \gap_{t+1}\leq \gamma\left(\sum_{\tau=1}^{t} \alpha^{\tau}_{t}\gap_\tau + \barReg_{t} + \err \right).
    \end{align*}
    \endgroup
\end{proof}

\begin{lemma}
    \label{lem: interval regret discount nonuniform}
    For any state $s$ and time $t\geq 1$, 
    \begin{align*}
        \barReg_t
        \leq \frac{1}{2\eta}Z_t + \frac{4\eta}{(1-\gamma)^2} \difz_{t}  + 4\eta \difq_t + 3\err.
    \end{align*}
\end{lemma}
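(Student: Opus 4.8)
The plan is to bound the two arguments of the maximum defining $\barReg^s_t$ separately, since they are symmetric; I describe the argument for Player~1's term $\sum_{\tau=1}^t \alpha^\tau_t (x^s_\tau - \xp^s_{t\star}) Q^s_\tau y^s_\tau$, the bound for Player~2 being identical with the roles of $x$ and $y$ exchanged. The starting point is the single-step regret inequality of \pref{lem: one-step regret bound}, applied at each round $\tau\le t$ against the equilibrium comparator: it controls $(x^s_\tau - \xp^s_{\tau\star})Q^s_\tau y^s_\tau$ by a telescoping difference $\tfrac{1}{2\eta}\bigl(\dist_\star^2(\xp^s_\tau) - \dist_\star^2(\xp^s_{\tau+1})\bigr)$ plus the two ``movement'' penalties $\tfrac{4\eta}{(1-\gamma)^2}\|y^s_\tau - y^s_{\tau-1}\|^2$ and $4\eta\|Q^s_\tau - Q^s_{\tau-1}\|^2$ and an additive $3\err$, after discarding the nonpositive instability terms $-\|\xp^s_{\tau+1}-x^s_\tau\|^2 - \|x^s_\tau - \xp^s_\tau\|^2$.

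I would then take the $\alpha^\tau_t$-weighted sum over $\tau = 1,\dots,t$, which by the definition of $\barReg^s_t$ is exactly the quantity to be bounded. The movement penalties are absorbed immediately: since $\|y^s_\tau - y^s_{\tau-1}\|^2 \le \|z^s_\tau - z^s_{\tau-1}\|^2$, \pref{cor: useful corollary} gives $\sum_\tau \alpha^\tau_t \|y^s_\tau - y^s_{\tau-1}\|^2 \le \difz^s_t \le \difz_t$ and $\sum_\tau \alpha^\tau_t \|Q^s_\tau - Q^s_{\tau-1}\|^2 = \difq^s_t \le \difq_t$, while $\sum_\tau \alpha^\tau_t\,3\err = 3\err$ because $\sum_\tau \alpha^\tau_t = 1$. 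This reproduces the last three terms of the claimed bound, so everything reduces to the weighted telescoping term $\tfrac{1}{2\eta}\sum_\tau \alpha^\tau_t\bigl(\dist_\star^2(\xp^s_\tau) - \dist_\star^2(\xp^s_{\tau+1})\bigr)$.

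This telescoping step is where I expect the real work, and it is the main obstacle: because the weights $\alpha^\tau_t$ are not uniform in $\tau$, the sum does not collapse directly as in the fixed-step matrix-game analysis. I would resolve it by summation by parts (Abel summation), after which the coefficient multiplying $\dist_\star^2(\xp^s_\tau)$ becomes $\alpha^\tau_t - \alpha^{\tau-1}_t$, together with a nonpositive endpoint contribution $-\alpha^t_t\,\dist_\star^2(\xp^s_{t+1})$ that I simply drop. The crucial estimate $\alpha^\tau_t - \alpha^{\tau-1}_t \le \alpha_{\tau-1}\alpha^\tau_t$ — which follows from the product form $\alpha^\tau_t = \alpha_\tau\prod_{i=\tau+1}^t(1-\alpha_i)$ and the monotonicity $\alpha_\tau \le \alpha_{\tau-1}$ — then bounds the telescoping term by $\tfrac{1}{2\eta}\sum_\tau \alpha_{\tau-1}\alpha^\tau_t\,\dist_\star^2(\xp^s_\tau)$.

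Finally I would match this against $Z_t$. Since $\dist_\star(\xp^s_\tau) \le \dist_\star(\zp^s_\tau)$ and the iterates lie in the probability simplex (so the relevant distances are bounded), one has $\dist_\star^2(\xp^s_\tau) \le \dist_\star(\zp^s_\tau)$, whence $\tfrac{1}{2\eta}\sum_\tau \alpha_{\tau-1}\alpha^\tau_t\,\dist_\star(\zp^s_\tau) \le \tfrac{1}{2\eta}Z_t$. The symmetric Player~2 computation yields the identical bound with $\dist_\star^2(\yp^s_\tau)$, which is likewise at most $\dist_\star(\zp^s_\tau)$; taking the maximum of the two arguments and then the maximum over $s$ collapses everything to $\barReg_t \le \tfrac{1}{2\eta}Z_t + \tfrac{4\eta}{(1-\gamma)^2}\difz_t + 4\eta\difq_t + 3\err$, as claimed.
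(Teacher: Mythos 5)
Your proposal is correct and follows essentially the same route as the paper's proof: an $\alpha^\tau_t$-weighted sum of \pref{lem: one-step regret bound} with the instability terms dropped, absorption of the movement penalties via \pref{cor: useful corollary} and $\sum_\tau \alpha^\tau_t = 1$, Abel summation combined with the estimate $\alpha^\tau_t - \alpha^{\tau-1}_t \le \alpha_{\tau-1}\alpha^\tau_t$, and identification of the result with $Z_t$. The one delicate point — replacing $\dist_\star^2(\xp^s_\tau)$ by the unsquared $\dist_\star(\zp^s_\tau)$ appearing in the definition of $Z_t$, which strictly costs a factor $\sqrt{2}$ since simplex distances are bounded by $\sqrt{2}$ rather than $1$ — is handled with exactly the same silent substitution in the paper's own proof, so it is a shared constant-factor imprecision rather than a deviation.
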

\begin{proof}
    Summing the first bound in \pref{lem: one-step regret bound} over $\tau=1,\ldots, t$ with weights $\alpha^\tau_t$, and dropping negative terms $-\|\xp_{t+1}^s-x_t^s\|^2 - \|x_t^s - \xp_t^2\|^2$, we get
    \begin{align}
        &\sum_{\tau=1}^t \alpha^\tau_t \left(x_\tau^s - \xp_{\tau\star}^s\right) Q^s_\tau y^s_{\tau} \nonumber \\
        &\leq \sum_{\tau=1}^t \frac{\alpha^\tau_t}{2\eta} \left(\dist_{\star}^2(\xp_\tau^s) - \dist_{\star}^2(\xp_{\tau+1}^s)\right) 
         + \frac{4\eta}{(1-\gamma)^2} \sum_{\tau=1}^t  \alpha^\tau_t \|y_\tau^s-y_{\tau-1}^s\|^2 
           +  4\eta \sum_{\tau=1}^t \alpha^\tau_t \|Q_\tau^{s}-Q_{\tau-1}^s\|^2 + 3\err   \nonumber  \\
        &\leq  \frac{\alpha^1_t}{2\eta}\dist_{\star}^2(\xp_1^s) + \sum_{\tau=2}^{t} \frac{\alpha^{\tau}_t - \alpha^{\tau-1}_t}{2\eta}\dist_{\star}(\xp_{\tau}^s)   + \frac{4\eta}{(1-\gamma)^2} \sum_{\tau=1}^t  \alpha^\tau_t \|y_\tau^s-y_{\tau-1}^s\|^2 
           +  4\eta \sum_{\tau=1}^t \alpha^\tau_t \|Q_\tau^{s}-Q_{\tau-1}^s\|^2+ 3\err   \nonumber  \\
        &\leq \frac{\alpha^1_t}{2\eta}\dist_{\star}^2(\xp_1^s) + \sum_{\tau=2}^{t} \frac{\alpha^{\tau}_t - \alpha^{\tau-1}_t}{2\eta}\dist_{\star}(\xp_{\tau}^s) + \frac{4\eta}{(1-\gamma)^2} \difz^s_{t} + 4\eta \difq^s_t + 3\err.     \label{eq: temp 167} 
    \end{align}
    Observe that by definition, we have for $\tau\geq 2$, 
    \begin{align*}
        \alpha_t^{\tau} - \alpha_t^{\tau-1} 
        &= \alpha^\tau_t \left(1-\frac{\alpha_{\tau-1}(1-\alpha_\tau)}{\alpha_\tau}\right) = \alpha^\tau_t \times \frac{\alpha_\tau - \alpha_{\tau-1} + \alpha_{\tau-1}\alpha_\tau}{\alpha_\tau} \leq \alpha_{\tau-1}\alpha^\tau_t 
    \end{align*}
    where in the inequality we use $\alpha_\tau\leq \alpha_{\tau-1}$. 
    Using this in \pref{eq: temp 167}, we get
    \begin{align*}
        \sum_{\tau=1}^t \alpha^\tau_t \left(x_\tau^s - \xp_{\tau*}^s\right) Q^s_\tau y^s_{\tau} 
        &\leq \frac{\alpha^1_t}{2\eta}\dist_{\star}(\xp_1^s) + \frac{1}{2\eta}\sum_{\tau=2}^t \alpha^\tau_t \alpha_{\tau-1} \dist_{\star}(\xp_\tau^s)  + \frac{4\eta}{(1-\gamma)^2} \difz^s_{t}  + 4\eta \difq^s_t + 3\err\\
        &= \frac{1}{2\eta}\sum_{\tau=1}^t \alpha^\tau_t \alpha_{\tau-1} \dist_{\star}(\xp_\tau^s)  + \frac{4\eta}{(1-\gamma)^2} \difz^s_{t}  + 4\eta \difq^s_t + 3\err  \tag{recall that $\alpha_0=1$}
    \end{align*}
    Using $\difz_n^s\leq \difz_n, \difq_n^s\leq \difq_n$, and the definition of $Z_t, \barReg_t$ finishes the proof. 
\end{proof}

\section{Combining Lemmas to Show Last-iterate Convergence}\label{app: combining}
In this section, we provide proofs for \pref{thm: main 1} and \pref{thm: main 2}. To achieve so, we first prove \pref{lemma: ut bound} by combining the results in \pref{app: step-3} and \pref{app: step-4-5}. Then we combine \pref{theorem: main theorem discount}, \pref{lemma: using SPRSI}, and \pref{lemma: ut bound} to prove \pref{thm: main 1} and \pref{thm: main 2}.

\begin{lemma}\label{lemma: ut bound}
    For any $s$ and $t\geq 1$, 
    \begin{align*}
        &5\eta \gap_t + 8\eta^2 \norm{Q_t^s-Q^s_{t-1}}^2\\ 
        &\leq \max_{s'} \left(\frac{C_1\eta^2}{(1-\gamma)^4} \sum_{\tau=1}^{t-1} \beta^{\tau}_t (\theta_\tau^{s'} + \theta_{\tau+1}^{s'})  + C_2 \sum_{\tau=1}^{t-1}\beta^\tau_t \alpha_{\tau-1} \dist_{\star}^2(\zp_\tau^{s'})\right) + 80\beta^1_t  + \frac{80\eta \err}{(1-\gamma)^2}. 
    \end{align*}
    where $C_1=1152\times 80$ and $C_2=10$. 
\end{lemma}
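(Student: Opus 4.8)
The plan is to bound the two pieces of the left-hand side separately and then funnel everything through the single weight $\beta^\tau_t$. For the second piece I would apply \pref{lem: key lemma 1} verbatim, which already bounds $8\eta^2\norm{Q_t^s-Q_{t-1}^s}^2$ by a multiple of $\difz_{t-1}$, $\difq_{t-1}$, and $\err^2$. For the first piece $5\eta\gap_t$ I would unroll the recursive inequality of \pref{lem: optimistic lemma discount}, namely $\gap_\tau\le\gamma(\sum_{h}\alpha^h_{\tau-1}\gap_h+\barReg_{\tau-1}+\err)$. Solving this linear recursion is exactly what $\beta^\tau_t=\alpha_\tau\prod_{i=\tau}^{t-1}(1-\alpha_i(1-\gamma))$ is designed for: since each step combines the value-learning weights $\alpha^h_{\tau-1}$ with the discount $\gamma$, an induction analogous to \pref{lemma: recur1} yields a bound of the form $\gap_t\le\sum_{\tau}\beta^{\tau}_t\,\gamma(\barReg_{\tau-1}+\err)+\beta^1_t\gap_1$, where the boundary uses $\gap_1=\max_s\norm{Q_1^s-Q_\star^s}\le\frac{1}{1-\gamma}$ and will feed the $80\beta^1_t$ term. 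I would then substitute \pref{lem: interval regret discount nonuniform} to replace each $\barReg_{\tau-1}$ by $\frac{1}{2\eta}Z_{\tau-1}+\frac{4\eta}{(1-\gamma)^2}\difz_{\tau-1}+4\eta\difq_{\tau-1}+3\err$; the factor $5\eta$ cancels the $\frac{1}{2\eta}$ in front of $Z$, leaving an $\eta$-free coefficient so that the $Z$ contribution (carrying $\dist_\star^2(\zp_\tau^{s'})$, consistent with the telescoping in the proof of \pref{lem: interval regret discount nonuniform}) lands in the $C_2$-term of the claim.

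After these substitutions every remaining summand is a $\difz$, a $\difq$, a $Z$, or an $\err$ term, each carried by a product of $\alpha$- and $\beta$-weights. The next step is to eliminate the $\difq$ occurrences, which are self-referential: the definition $\difq_\tau^s=(1-\alpha_\tau)\difq_{\tau-1}^s+\alpha_\tau\norm{Q_\tau^s-Q_{\tau-1}^s}^2$ together with \pref{lem: key lemma 1} gives $\difq_\tau\le(1-\alpha_\tau)\difq_{\tau-1}+\alpha_\tau(\frac{8\gamma^2}{(1-\gamma)^3}\difz_{\tau-1}+\frac{2\gamma^2}{1+\gamma}\difq_{\tau-1}+\frac{16\gamma^2\err^2}{1-\gamma})$. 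Because $\frac{2\gamma^2}{1+\gamma}<1$ for $\gamma<1$, this recursion is contractive, so unrolling it bounds every $\difq_\tau$ by a weighted sum of $\{\difz_h\}_{h\le\tau}$ plus an $\err^2$ remainder, removing $\difq$ entirely. At that point all path-dependent mass sits in $\difz_h=\max_{s'}\sum_{h'}\alpha^{h'}_h\norm{z_{h'}^{s'}-z_{h'-1}^{s'}}^2$, which I convert to the $\theta$-variables via $\norm{z_{h'}^{s'}-z_{h'-1}^{s'}}^2\le2\norm{z_{h'}^{s'}-\zp_{h'}^{s'}}^2+2\norm{\zp_{h'}^{s'}-z_{h'-1}^{s'}}^2\le32(\theta_{h'}^{s'}+\theta_{h'+1}^{s'})$, which is immediate from the definition of $\theta$; this produces the $\theta$-sum in the $C_1$-term.

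The final step is the weight bookkeeping. Each term now has the shape $\sum_{\tau}\beta^\tau_t\sum_{h}\alpha^h_\tau(\cdots)_h$ (a double sum from the unrolling) or a bare $\alpha^h_{t-1}$-weighted sum (from applying \pref{lem: key lemma 1} directly to the second piece). I would collapse these into single $\beta^{h}_t$-weighted sums using elementary inequalities among the auxiliary coefficients: since $1-\alpha_i\le1-\alpha_i(1-\gamma)$ and $\gamma\ge\frac12$, one gets $\alpha^h_{t-1}\le\frac{\beta^h_t}{1-\alpha_h(1-\gamma)}\le\frac{1}{\gamma}\beta^h_t\le2\beta^h_t$, and an analogous domination bound $\sum_{\tau\ge h}\beta^\tau_t\alpha^h_\tau\lesssim\beta^h_t$ (which follows because the ratio $\frac{1-\alpha_i}{1-\alpha_i(1-\gamma)}<1$ makes the reorganized double sum a convergent geometric-type sum dominated by $\frac{1}{\gamma}\beta^h_t$). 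Collecting the resulting constants yields $C_1=1152\times80$ and $C_2=10$, while all $\err$ and $\err^2$ contributions merge into $\frac{80\eta\err}{(1-\gamma)^2}$ (using $\err\le\frac{1}{1-\gamma}$ to turn $\err^2$ into $\frac{\err}{1-\gamma}$ and the smallness of $\eta$), and the decaying boundary mass merges into $80\beta^1_t$.

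I expect the main obstacle to be precisely this last bookkeeping layer: two independent unrollings (the $\gap$-recursion and the $\difq$-recursion) generate nested products of $\alpha^\tau_t$, $\delta^\tau_t$, and $\beta^\tau_t$, and consolidating all of them into a single clean $\beta^\tau_t$ weight — while simultaneously matching the exact constants $C_1,C_2,80$ and avoiding the creation of any $t$-independent constant that cannot be absorbed into $80\beta^1_t$ or $\frac{80\eta\err}{(1-\gamma)^2}$ — requires careful index manipulation and several monotonicity and telescoping identities for the auxiliary coefficients. By contrast, the analytic content of the statement is entirely supplied by \pref{lem: key lemma 1}, \pref{lem: optimistic lemma discount}, and \pref{lem: interval regret discount nonuniform}, so the remaining work is combinatorial rather than conceptual.
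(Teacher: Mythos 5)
Your plan follows the paper up through \pref{lem: key lemma 1}, \pref{lem: optimistic lemma discount} and \pref{lem: interval regret discount nonuniform}, but it departs from the paper at the one genuinely delicate step. The paper does \emph{not} unroll the $\gap$-recursion and then separately resolve the self-referential $\difq$-terms: it forms the single combined variable $u_t=\eta^2\max_s\norm{Q^s_t-Q^s_{t-1}}^2+\frac{1-\gamma}{16}\eta\gap_t$, where the weight $\frac{1-\gamma}{16}$ is chosen precisely so that the total coefficient of $\difq_{t-1}$ contributed by \pref{lem: key lemma 1} and by the $\barReg$-bound is at most $\gamma$; the term $\gamma\eta^2\difq_{t-1}=\gamma\max_s\sum_\tau\alpha^\tau_{t-1}\eta^2\norm{Q^s_\tau-Q^s_{\tau-1}}^2$ is then absorbed into $\gamma\sum_\tau\alpha^\tau_{t-1}u_\tau$, after which a single application of \pref{lemma: double recursion} followed by \pref{lemma: triple recursion} finishes, and the statement follows from $5\eta\gap_t+8\eta^2\norm{Q^s_t-Q^s_{t-1}}^2\le\frac{80}{1-\gamma}u_t$. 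Your treatment of the $Z$-contribution (the $5\eta$ cancelling the $\frac{1}{2\eta}$, then a \pref{lemma: triple recursion}-type collapse) is fine and consistent with $C_2=10$.

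The gap is in your handling of $\difq$. Your contraction $\difq_\tau\le\bigl(1-\alpha_\tau\tfrac{(1-\gamma)(1+2\gamma)}{1+\gamma}\bigr)\difq_{\tau-1}+\alpha_\tau(\cdots)$ is correct, but unrolling it produces weights of the form $w^h_\tau=\alpha_h\prod_i\bigl(1-\alpha_i\tfrac{(1-\gamma)(1+2\gamma)}{1+\gamma}\bigr)$, which decay at essentially the same rate as $\beta^h_\tau$ — \emph{not} at the much faster rate of $\alpha^h_\tau=\alpha_h\prod_i(1-\alpha_i)$. The collapse inequality you invoke, $\sum_{\tau\ge h}\beta^\tau_t\alpha^h_{\tau}\lesssim\beta^h_t$, is \pref{lemma: triple recursion} and holds only because of that fast decay; for two $\beta$-like systems with the same rate one has the exact identity $\sum_{\tau=h}^{t}\beta^\tau_t\beta^h_\tau=\beta^h_t\sum_{\tau=h}^{t}\alpha_\tau$, which diverges logarithmically under $\alpha_t=\frac{H+1}{H+t}$, so the "geometric domination" heuristic you give does not supply the needed bound. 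Your route is rescued only by the strict margin $\tfrac{1+2\gamma}{1+\gamma}\ge\tfrac43>1$ in the contraction rate, from which one can prove $\sum_{\tau}\beta^\tau_t w^h_\tau\lesssim\frac{1}{1-\gamma}\beta^h_t$ by an induction in the style of \pref{lem: lambda beta lambda}; that is an additional lemma you would need to state and prove, and it is exactly the difficulty the paper's $u_t$ bundling is designed to sidestep. Relatedly, the constants $C_1=1152\times 80$ and the $(1-\gamma)^{-4}$ power in the $\theta$-term are artifacts of the paper's route; with your extra $\frac{1}{1-\gamma}$-costing collapse you would have to re-track them to confirm the bound still lands in the stated form rather than asserting they "collect" to the same values.
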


\begin{proof}
By \pref{lem: key lemma 1}, for all $t\geq 2$, 
\begin{align}
         \eta^2\|\Q_{t}^s - \Q_{t-1}^s\|^2   \leq  \frac{8\eta^2 \gamma^2}{(1-\gamma)^3}\difz_{t-1} + \frac{2\eta^2\gamma^2}{1+\gamma} \difq_{t-1} + \frac{16 \eta^2 \err^2}{1-\gamma}. \label{eq: final 1}
\end{align}
By \pref{lem: optimistic lemma discount} and \pref{lem: interval regret discount nonuniform}, for all $t\geq 2$, 
\begin{align}
    \eta\gap_t \leq \gamma \sum_{\tau=1}^{t-1} \alpha^{\tau}_{t-1}\eta \gap_\tau + \frac{4\eta^2}{(1-\gamma)^2}\difz_{t-1} + 4\eta^2 \difq_{t-1} + \frac{1}{2}Z_{t-1} + 4\eta\err.   \label{eq: final 2}   
\end{align}
%
Now, multiply \pref{eq: final 2} with $\frac{1-\gamma}{16}$, and then add it to \pref{eq: final 1}. Then we get that for $t\geq 2$, 
\begin{align*}
    &\eta^2 \|Q^s_{t} - Q^s_{t-1}\|^2 + \frac{1-\gamma}{16}\eta\gap_t \\ 
    &\leq \gamma \sum_{\tau=1}^{t-1} \alpha^{\tau}_{t-1}\left(\frac{1-\gamma}{16}\eta \gap_\tau\right)  + \left( \frac{8\gamma^2}{(1-\gamma)^3} + \frac{1}{4(1-\gamma)}  \right)\eta^2\difz_{t-1} + \\
    &\tag*{$\displaystyle\left(\frac{2\gamma^2}{1+\gamma} + \frac{1-\gamma}{4}\right) \eta^2 \difq_{t-1} +  \frac{(1-\gamma)Z_{t-1}}{32} + \frac{\eta\err}{2}$}   \\
    &\leq \gamma \sum_{\tau=1}^{t-1} \alpha^{\tau}_{t-1}\left(\frac{1-\gamma}{16}\eta \gap_\tau\right)  + \frac{9}{(1-\gamma)^3} \eta^2\difz_{t-1} + \gamma \eta^2 \difq_{t-1} +  \frac{(1-\gamma)Z_{t-1}}{32} + \frac{\eta\err}{2} \tag{see explanation below}\\
    &\leq \gamma \sum_{\tau=1}^{t-1} \alpha^{\tau}_{t-1}\left(\frac{1-\gamma}{16}\eta \gap_\tau + \eta^2 \max_{s'}\|Q^{s'}_\tau - Q^{s'}_{\tau-1}\|^2\right) + \frac{9}{(1-\gamma)^3}\eta^2\difz_{t-1} +  \frac{(1-\gamma)Z_{t-1}}{32} + \frac{\eta\err}{2},  
\end{align*}
where in the second inequality we use that $\frac{2\gamma^2}{1+\gamma}+\frac{1-\gamma}{4}-\gamma = (1-\gamma)\left(\frac{1}{4}-\frac{\gamma}{1+\gamma}\right)\leq 0$ since $\gamma\geq \frac{1}{2}$, and in the last inequality, we use $\difq^s_{t-1}=\sum_{\tau=1}^{t-1}\alpha^\tau_{t-1}\|Q^s_{\tau}-Q^s_{\tau-1}\|^2$. 
Define the new variable
\begin{align*}
    u_t = \eta^2 \max_s \|Q^s_t-Q^s_{t-1}\|^2 + \frac{1-\gamma}{16}\eta \gap_t. 
\end{align*}
Then the above implies that for all $t\geq 2$, 
\begin{align}
    u_{t}  \leq \gamma\sum_{\tau=1}^{t-1} \alpha^\tau_{t-1}u_\tau + \frac{9}{(1-\gamma)^3}\eta^2\difz_{t-1} +  \frac{(1-\gamma)Z_{t-1}}{32} + \frac{\eta\err}{2}.   \label{eq: key 1}
\end{align}  
Observe that \pref{eq: key 1} is in the form of \pref{lemma: double recursion} with the following choices: 
\begin{align*}
    g_t &= u_t, \ \ \forall t\geq 1, \\
    h_t &= \begin{cases}
         u_t + \frac{\eta\err}{2} &\text{for\ } t=1 \\
         \frac{9\eta^2}{(1-\gamma)^3}\difz_{t-1} + \frac{(1-\gamma)}{32}Z_{t-1} + \frac{\eta\err}{2}  &\text{for\ } t\geq 2
    \end{cases}
\end{align*}
and get that for $t\geq 2$, 
\begin{align*}
    u_t 
    &\leq \frac{9\eta^2}{(1-\gamma)^3} \sum_{\tau=2}^{t} \beta^{\tau}_t \difz_{\tau-1} + \frac{1-\gamma}{32} \sum_{\tau=2}^{t}\beta^\tau_t Z_{\tau-1} + \beta^1_t u_1 + \frac{\eta \err}{2} \sum_{\tau=1}^t \beta^\tau_t \\
    &\leq \frac{9\eta^2}{(1-\gamma)^3} \sum_{\tau=2}^{t} \beta^{\tau}_t \difz_{\tau-1} + \frac{1-\gamma}{32} \sum_{\tau=2}^{t}\beta^\tau_t Z_{\tau-1} + (1-\gamma)\beta^1_t + \frac{\eta \err}{1-\gamma}  \tag{by \pref{lemma: sum1 of beta}}
\end{align*}
because $u_t\leq \frac{\eta^2}{(1-\gamma)^2} + \frac{1-\gamma}{16}\leq \frac{1-\gamma}{2}$. 
Further using \pref{lemma: triple recursion} on the first two terms on the right-hand side, and noticing that $\frac{1}{\gamma^2}\leq 4$, we further get that for $t\geq 2$, 
\begin{align}  
    u_t 
    &\leq \max_s \frac{36\eta^2}{(1-\gamma)^3} \sum_{\tau=1}^{t-1} \beta^{\tau}_t \norm{z_{\tau}^s-z_{\tau-1}^s}^2 + \frac{1-\gamma}{8}\sum_{\tau=1}^{t-1}\beta^\tau_t \alpha_{\tau-1}  \dist_{\star}^2(\zp_\tau^s) + (1-\gamma)\beta^1_t  + \frac{\eta \err}{1-\gamma}\nonumber   \\
    &\leq \max_s \frac{72 \eta^2}{(1-\gamma)^3}   \sum_{\tau=1}^{t-1} \beta^{\tau}_t ( \|z_\tau^s-\zp_{\tau}^s\|^2 + \|\zp_{\tau}^s - z_{\tau-1}^s\|^2) +\nonumber\\
    &\tag*{$\displaystyle\frac{1-\gamma}{8} \sum_{\tau=1}^{t-1}\beta^\tau_t \alpha_{\tau-1} \dist_{\star}^2(\zp_\tau^s)+ (1-\gamma)\beta^1_t    + \frac{\eta \err}{1-\gamma} $}\nonumber \\
    &\leq \max_s \frac{1152\eta^2}{(1-\gamma)^3} \sum_{\tau=1}^{t-1} \beta^{\tau}_t (\theta_{\tau+1}^s + \theta_\tau^s)  + \frac{1-\gamma}{8}\sum_{\tau=1}^{t-1}\beta^\tau_t \alpha_{\tau-1} \dist_{\star}^2(\zp_\tau^s)+ (1-\gamma)\beta^1_t  + \frac{\eta \err}{1-\gamma}.\label{eq: key 3}
\end{align}
    Finally, notice that according to the definition of $u_t$, we have $5\eta \gap_t + 8\eta^2 \norm{Q_t^s-Q^s_{t-1}}^2 \leq \frac{80}{1-\gamma}u_t$. Combining \pref{eq: key 3}, we finish the proof for case for $t\geq 2$. The case for $t=1$ is trivial since $5\eta \gap_t + 8\eta^2 \norm{Q_t^s-Q^s_{t-1}}^2 \leq 1\leq 80=80\beta^1_1$.  
\end{proof}

\begin{proof}\textbf{of \pref{thm: main 1}. }\ \ \ 
    Define $\constA(T)\triangleq 1+\sum_{t=1}^T \alpha_t$ and let $\constB$ be an upper bound of $\sum_{t=\tau}^\infty \beta^\tau_t$ for any $\tau$. With the choice of $\alpha_t$ specified in the theorem, we have $C_\alpha(T)=1+\sum_{t=1}^T \frac{H+1}{H+t}=\order\left(H\log T\right)=\order\left(\frac{\log T}{1-\gamma}\right)$. By \pref{lem: sum of beta 1}, we have $C_\beta\leq \frac{2}{1-\gamma}+3$. Define $S=|\calS|$. 

    Combining \pref{lemma: ut bound} and \pref{theorem: main theorem discount}, we get that for $t\geq 1$, 
    \begin{align*}
        \frac{\eta^2}{128} (\Delta_{t+1}^s)^2 
        &\leq \dist_{\star}^2(\zp_t^s) - \dist_{\star}^2(\zp_{t+1}^s) - 5\theta_{t+1}^s + \theta_t^s \\
        &\hspace{-10pt} + \max_{s'}\left( \frac{C_1\eta^2}{(1-\gamma)^4} \sum_{\tau=1}^{t-1} \beta^{\tau}_t (\theta_\tau^{s'}+\theta_{\tau+1}^{s'}) +  C_2\sum_{\tau=1}^{t-1}\beta^\tau_t \alpha_{\tau-1} \dist_{\star}^2(\zp_\tau^{s'})\right) + 80\beta^1_t + \frac{87\eta\err}{(1-\gamma)^2}.
    \end{align*}
    Summing the above over $s\in \calS$ and $t\in [T-1]$, and denoting $\Theta_t=\sum_s \theta_t^s $, we get 
    \begin{align} 
        \frac{\eta^2}{128}\sum_{t=1}^T\sum_s (\Delta_t^s)^2 
        &\leq \order(S) - \sum_{t=1}^T 4\Theta_{t} + \frac{C_1S\eta^2}{(1-\gamma)^4}\sum_{t=1}^T \sum_{\tau=1}^{t-1} \beta^\tau_t (\Theta_\tau+\Theta_{\tau+1}) \nonumber \\
        &\qquad + \order\left(S\sum_{t=1}^T   \sum_{\tau=1}^{t-1}\beta^\tau_t\alpha_{\tau-1} + S\sum_{t=1}^T \beta^1_t + \frac{S\eta\err T}{(1-\gamma)^2}\right)  \label{eq: tmp bound}
    \end{align}
    since $\dist^2_\star(\zp_\tau^s)=\order(1)$ and $\Theta_1=\order(S)$. 
    Notice that the following hold
    \begin{align*}
        \sum_{t=1}^T \sum_{\tau=1}^{t-1} \beta^\tau_t (\Theta_\tau + \Theta_{\tau+1}) 
        &\leq \sum_{\tau=1}^{T-1}  \sum_{t=\tau}^T \beta^\tau_t (\Theta_\tau + \Theta_{\tau+1}) 
        \leq 2\constB \sum_{\tau=1}^T \Theta_\tau, 
    \end{align*}
    \begin{align*}
        &\sum_{t=1}^T \sum_{\tau=1}^{t-1} \beta^\tau_t \alpha_{\tau-1} \leq \sum_{\tau=1}^{T}\sum_{t=\tau}^{T}\beta^\tau_t \alpha_{\tau-1} \leq \constB \sum_{\tau=1}^{T}\alpha_{\tau-1} = \constA(T)\constB,
    \end{align*}
    and 
$
        \sum_{t=1}^T \beta^1_t \leq \constB
$.
Combining these three inequalities with \pref{eq: tmp bound}, we get 
    \begin{align*}
        \sum_{t=1}^T \sum_s (\Delta^s_t)^2 
        &= \frac{128}{\eta^2}\sum_{t=1}^T \left(-4+2\constB \frac{C_1  S\eta^2}{(1-\gamma)^4}\right)\Theta_t  + \order\left(\frac{S  \constA(T)\constB}{\eta^2} + \frac{S\err T}{\eta(1-\gamma)^2}\right) \\
        &= \order\left(\frac{S\constA(T) \constB}{\eta^2} + \frac{S\err T}{\eta(1-\gamma)^2}\right). \tag{by our choice of $\eta$, we have $2\constB \frac{C_1S\eta^2}{(1-\gamma)^4}\leq \frac{10C_1S\eta^2}{(1-\gamma)^5}\leq 4$} 
    \end{align*}
    By Cauchy-Schwarz inequality, we further have
    \begin{align*}
        \sum_{t=1}^T \sum_s \Delta^s_t \leq \sqrt{ST}\left(\sum_{t=1}^T\sum_s  (\Delta^s_t)^2\right)^{\frac{1}{2}} = \order\left(\frac{S\sqrt{\constA(T)\constB T}}{\eta} + \frac{ST\sqrt{\epsilon}}{\sqrt{\eta}(1-\gamma)}\right).  
    \end{align*}
    Finally, by \pref{lem: relate duality gap}, we get 
    \begin{align*}
        \frac{1}{T}\sum_{t=1}^T \max_{s,x',y'} \left(V^s_{\xp_t, y'} -V^s_{x', \yp_t} \right) 
        &\leq \frac{2}{1-\gamma}\frac{1}{T}\sum_{t=1}^T \max_s \Delta^s_t = \order\left(\frac{S\sqrt{\constA(T)\constB }}{\eta(1-\gamma)\sqrt{T}} + \frac{S\sqrt{\epsilon}}{\sqrt{\eta}(1-\gamma)^2}\right)\\
        &= \order\left(\frac{S\sqrt{\log T}}{\eta(1-\gamma)^2\sqrt{T}} + \frac{S\sqrt{\epsilon}}{\sqrt{\eta}(1-\gamma)^2}\right).
    \end{align*}
\end{proof}

\begin{proof}\textbf{of \pref{thm: main 2}. }\ \ \ 
Combining \pref{lemma: using SPRSI} and \pref{lemma: ut bound}, we get that for all $t\geq 1$, 
\begin{align*}
    &\dist_{\star}^2(\zp_{t+1}^s) + 4.5\theta_{t+1}^s \\
    &\leq \frac{1}{1 + \eta^2 C'^{2}} \left(\dist_{\star}^2(\zp_{t}^s) + 4.5\theta_{t}^s\right) \\
    &\hspace{-2pt}  +  \max_{s'} \left(\frac{C_1\eta^2}{(1-\gamma)^4} \sum_{\tau=1}^{t-1} \beta^{\tau}_t (\theta_\tau^{s'} + \theta_{\tau+1}^{s'})  + C_2 \sum_{\tau=1}^{t-1}\beta^\tau_t \alpha_{\tau-1} \dist_{\star}^2(\zp_\tau^{s'})\right) + 80\beta^1_t - 3\theta^s_{t} + \frac{87\eta\err}{(1-\gamma)^2}. 
\end{align*}
Summing the above inequality over $s\in \calS$, and denoting $L_t=\sum_{s} \dist_{\star}^2(\zp_{t}^s)$, $\Theta_t=\sum_{s} \theta_t^s$, we get that for all $t$, 
\begin{align}
    L_{t+1}+4.5\Theta_{t+1}&\leq \frac{1}{1+\eta^2 C'^2} (L_t+4.5\Theta_t) + \frac{C_1S\eta^2}{(1-\gamma)^4}\sum_{\tau=1}^{t-1} \beta^\tau_t (\Theta_\tau + \Theta_{\tau+1}) +\nonumber \\
    & \qquad C_2S\sum_{\tau=1}^{t-1}\beta^\tau_t \alpha_{\tau-1} L_\tau + 80S\beta^1_t - 3\Theta_{t} + \frac{87S\eta\err}{(1-\gamma)^2}.  \label{eq: key 4}
\end{align} 

The key idea of the following analysis is to use the negative (bonus) term $- 3\Theta_{t}$ to cancel the positive (penalty) term $\frac{C_1S\eta^2}{(1-\gamma)^4}\sum_{\tau=1}^t \beta^\tau_t \Theta_\tau$. Since the time indices do not match, we perform \emph{smoothing over time} to help. Consider the following weighted sum of $L_\tau+4.5\Theta_\tau$ with weights $\lambda^\tau_{t+1}$: 
\begingroup
\allowdisplaybreaks
\begin{align}  
    &\sum_{\tau=2}^{t+1} \lambda^{\tau}_{t+1}(L_\tau+4.5\Theta_\tau) \nonumber \\
    &=\sum_{\tau=1}^t \lambda^{\tau+1}_{t+1} (L_{\tau+1}+4.5\Theta_{\tau+1}) \tag{re-indexing} \\
    &\leq \sum_{\tau=1}^t \lambda^{\tau}_{t} (L_{\tau+1}+4.5\Theta_{\tau+1})  \tag{\pref{lem: lambda ratio consecutive}}   \\
    &\leq \frac{1}{1+\eta^2C'^2}\sum_{\tau=1}^t \lambda^{\tau}_t (L_\tau + 4.5\Theta_\tau) + \frac{C_1S\eta^2}{(1-\gamma)^4}\sum_{\tau=1}^t\lambda^{\tau}_t\sum_{i=1}^{\tau-1}\beta^i_{\tau} (\Theta_i + \Theta_{i+1})   \nonumber  \\
    &\qquad \qquad + C_2 S\sum_{\tau=1}^{t} \lambda^{\tau}_t \sum_{i=1}^{\tau-1}\beta^i_\tau  \alpha_{i-1} L_i + 80S\sum_{\tau=1}^t \lambda^\tau_t\beta^1_\tau  -3\sum_{\tau=1}^t \lambda^{\tau}_t \Theta_{\tau} + \frac{87S\eta\err}{(1-\gamma)^2}\sum_{\tau=1}^t \lambda^\tau_t \nonumber  \\
    &\leq \frac{1}{1+\eta^2C'^2}\sum_{\tau=1}^t  \lambda^\tau_t (L_\tau+4.5\Theta_\tau) + \frac{C_1S\eta^2}{(1-\gamma)^4}\sum_{i=1}^{t-1}\left(\sum_{\tau=i}^t\lambda^\tau_t \beta^i_\tau\right) (\Theta_i + \Theta_{i+1}) \nonumber  \\
    &\qquad \qquad + C_2 S\sum_{i=1}^{t-1} \left(\sum_{\tau=i}^t \lambda^\tau_t \beta^i_\tau\right) \alpha_{i-1} L_i + 80S\sum_{\tau=1}^t \lambda^\tau_t\beta^1_\tau   -3\sum_{\tau=1}^t \lambda^\tau_t \Theta_{\tau} + \frac{87S\eta\err}{(1-\gamma)^2}\sum_{\tau=1}^t \lambda^\tau_t   \nonumber \\
    &\leq \frac{1}{1+\eta^2C'^2}\sum_{\tau=1}^t \lambda^\tau_t (L_\tau+4.5\Theta_\tau) + \frac{3C_1S\eta^2}{(1-\gamma)^5}\sum_{\tau=1}^{t-1} \lambda^\tau_t (\Theta_\tau + \Theta_{\tau+1})  \nonumber \\
    &\qquad \qquad + \frac{3C_2 S}{1-\gamma}\sum_{\tau=1}^{t-1} \lambda^\tau_t \alpha_{\tau-1} L_\tau + \frac{240S}{1-\gamma}\lambda^1_t -3\sum_{\tau=1}^t \lambda^\tau_t \Theta_{\tau} + \frac{87S\eta\err}{(1-\gamma)^2}\sum_{\tau=1}^t \lambda^\tau_t \tag{by \pref{lem: lambda beta lambda}}\\
    &\leq \frac{1}{1+\eta^2C'^2}\sum_{\tau=1}^t \lambda^\tau_t (L_\tau+4.5\Theta_\tau) + \frac{6C_1S\eta^2}{(1-\gamma)^5}\sum_{\tau=1}^{t} \lambda^\tau_t \Theta_\tau  \nonumber \\
    &\qquad \qquad + \frac{3C_2 S}{1-\gamma}\sum_{\tau=1}^{t-1} \lambda^\tau_t \alpha_{\tau-1} L_\tau + \frac{240S}{1-\gamma}\lambda^1_t -3\sum_{\tau=1}^t \lambda^\tau_t \Theta_{\tau} + \frac{87S\eta\err}{(1-\gamma)^2}\sum_{\tau=1}^t \lambda^\tau_t \nonumber \\
    &\leq \frac{1}{1+\eta^2C'^2}\sum_{\tau=1}^t \lambda^\tau_t (L_\tau+4.5\Theta_\tau) 
    + \frac{3C_2 S}{1-\gamma}\sum_{\tau=1}^{t-1} \lambda^\tau_t \alpha_{\tau-1} L_\tau  + \frac{240S}{1-\gamma}\lambda^1_t   + \frac{87S\eta\err}{(1-\gamma)^2}\sum_{\tau=1}^t \lambda^\tau_t    \tag{by our choice of $\eta$,  $\frac{6C_1S\eta^2}{(1-\gamma)^5}\leq 3$ }   \nonumber 
\end{align}
\endgroup
where in the second-to-last inequality we use \pref{lem: consecutive lambda for special choice}: with the special choice of $\alpha_t$ specified in the theorem, we have $\lambda^{\tau}_t = \alpha_t \leq \lambda^{\tau+1}_t$ for $\tau\leq t-1$.

Let $t_0=\min\left\{\tau: \frac{3C_2S}{1-\gamma}\alpha_{\tau}\leq \frac{\eta^2 C'^2}{2}\right\}$. Then we have 
\begin{align*}
    &\sum_{\tau=2}^{t+1} \lambda^{\tau}_{t+1}(L_\tau+4.5\Theta_\tau) \\
    &\leq \left(\frac{1}{1+\eta^2C'^2}+\frac{\eta^2C'^2}{2}\right)\sum_{\tau=1}^t \lambda^\tau_t (L_\tau+4.5\Theta_\tau) 
    + \frac{3C_2 S}{1-\gamma}\sum_{\tau=1}^{\min\{t_0, t\}} \lambda^\tau_t \alpha_{\tau-1} L_\tau + \frac{240S}{1-\gamma}\lambda^1_t +\frac{87S\eta\err}{(1-\gamma)^2}\sum_{\tau=1}^t \lambda^\tau_t  \\
    &\leq \frac{1}{1+0.1\eta^2C'^2}\sum_{\tau=3}^t \lambda^\tau_t (L_\tau+4.5\Theta_\tau) 
    + \frac{12C_2 S^2}{1-\gamma}\sum_{\tau=1}^{\min\{t_0, t\}} \lambda^\tau_t \alpha_{\tau-1}  + \frac{240S}{1-\gamma}\lambda^1_t + \frac{87S\eta\err}{(1-\gamma)^2}\sum_{\tau=1}^t \lambda^\tau_t.    \tag{$\eta C'\leq 2^{-15}$ according to \pref{lemma: using SPRSI}, $L_\tau\leq S\cdot \max_{z,z'}\|z-z'\|^2\leq 4S$}
\end{align*}
Finally, we add $\lambda^1_{t+1}(L_1+4.5\Theta_1)$ to both sides, and note that 
\begin{align*}
    \lambda^1_{t+1}(L_1+4.5\Theta_1) = \alpha_{t+1}(L_1+4.5\Theta_1)\leq \alpha_t(L_1+4.5\Theta_1)\leq \alpha_t\cdot 22S=22S\lambda_t^1,    
\end{align*}
where the first and second equality is by \pref{lem: consecutive lambda for special choice}. Then we get 
\begin{align*}
    &\sum_{\tau=1}^{t+1} \lambda^{\tau}_{t+1}(L_\tau+4.5\Theta_\tau) \\
    &\leq \frac{1}{1+0.1\eta^2C'^2}\sum_{\tau=1}^t \lambda^\tau_t (L_\tau+4.5\Theta_\tau) 
    + \frac{12C_2 S^2}{1-\gamma}\sum_{\tau=1}^{\min\{t_0, t\}} \lambda^\tau_t \alpha_{\tau-1}  + \frac{240S}{1-\gamma}\lambda^1_t + 22S\lambda^1_t + \frac{87S\eta\err}{(1-\gamma)^2}\sum_{\tau=1}^t \lambda^\tau_t \\
    &\leq \frac{1}{1+0.1\eta^2C'^2}\sum_{\tau=1}^t \lambda^\tau_t (L_\tau+4.5\Theta_\tau) 
    + \frac{274C_2 S^2}{1-\gamma}\sum_{\tau=1}^{\min\{t_0, t\}} \lambda^\tau_t \alpha_{\tau-1} + \frac{87S\eta\err}{(1-\gamma)^2}\sum_{\tau=1}^t \lambda^\tau_t. 
\end{align*}
Define
\begin{align*}
    Y_t \triangleq \sum_{\tau=1}^t \lambda^{\tau}_t (L_\tau+4.5\Theta_\tau).  
\end{align*}
Then we can further write that for $t\geq 1$, 
\begin{align*}
    Y_{t+1}\leq \frac{1}{1+0.1\eta^2C'^2}Y_t + \frac{274C_2S^2 t_0}{1-\gamma}\lambda^{\min\{t_0, t\}}_t + \frac{87S\eta\err}{(1-\gamma)^2}\sum_{\tau=1}^t \lambda^\tau_t.  \tag{upper bounding $\alpha_{\tau-1}$ by $1$}
\end{align*} 
Applying \pref{lemma: final recursion} with $c=\frac{0.1\eta^2C'^2}{1+0.1\eta^2C'^2}$, $g_t=Y_{t+1}$, $h_t=\frac{274C_2S^2t_0}{1-\gamma}\lambda_t^{\min\{t_0,t\}}+\frac{87S\eta\err}{(1-\gamma)^2}\sum_{\tau=1}^t\lambda_t^{\tau}$, we get 
\begin{align*} 
    Y_t 
    &\leq Y_1(1+0.1\eta^2 C'^2)^{-t} + \frac{2}{0.1\eta^2 C'^2} \left(\frac{274C_2S^2t_0}{1-\gamma} + \frac{87S\eta\err}{(1-\gamma)^2} \sup_{t'\in[1,\frac{t}{2}]} \sum_{\tau=1}^{t'} \lambda^\tau_t \right)(1+0.1\eta^2 C'^2)^{-\frac{t}{2}} \\
    & \qquad \qquad + \frac{2}{0.1\eta^2 C'^2} \left(\frac{274C_2S^2t_0}{1-\gamma}\sup_{t'\in[ \frac{t}{2},t]}\lambda_{t'}^{\min\{t_0, t'\}} +  \frac{87S\eta\err}{(1-\gamma)^2} \sup_{t'\in[\frac{t}{2},t]} \sum_{\tau=1}^{t'} \lambda^\tau_t \right) \tag{$\lambda_t^{\tau} \leq 1$}. 
\end{align*}
With the choice of $\alpha_t=\frac{H+1}{H+t}$ where $H=\frac{2}{1-\gamma}$, we have 
\begin{align*}
    t_0 &=\Theta\left(\frac{6C_2 S}{(1-\gamma)\eta^2C'^2}(H+1)\right)=\Theta\left(\frac{S}{(1-\gamma)^2\eta^2C'^2}\right)\\
    \sup_{t'\in[1, t]} \sum_{\tau=1}^{t'}\lambda^\tau_t &\leq \sup_{t'\in[1, t]} t'\alpha_t \leq \frac{t(H+1)}{H+t}\leq H+1 = \order\left(\frac{1}{1-\gamma}\right)  \tag{\pref{lem: consecutive lambda for special choice}}  \\
    \sup_{t'\in[ \frac{t}{2},t]}\lambda_{t'}^{\min\{t_0, t'\}}    \tag{\pref{lem: consecutive lambda for special choice}}
    &=\begin{cases}
        1  &\text{if $\frac{t}{2}\leq t_0$} \\
        \alpha_{\frac{t}{2}} &\text{else}  
    \end{cases}
\end{align*}
Combining them and noticing that $(1+0.1\eta^2C'^2)^{-\frac{t}{2}}= \order(\frac{1}{t})$ when $t\geq \frac{20}{\eta^2C'^2}$, we get that for $t\geq 2t_0=\Theta\left( \frac{S}{(1-\gamma)^2\eta^2C'^2} \right)$, 
\begin{align*}
    Y_t = \order\left( \frac{S^3}{\eta^4C'^4(1-\gamma)^3 }\alpha_t + \frac{S\err }{\eta C'^2(1-\gamma)^3} \right) = \order\left( \frac{S^3}{\eta^4C'^4(1-\gamma)^4 t} + \frac{S\err }{\eta C'^2(1-\gamma)^3} \right). 
\end{align*}
Since $Y_t\leq 22S+22S(t-1)\alpha_t\leq \order(\frac{S\log t}{1-\gamma})$, the above bound also trivially holds for $t\leq 2t_0$. 
Then noticing that $L_t \leq Y_t$ finishes the proof. 
%
\end{proof}

\begin{lemma}\label{lem: relate duality gap}
    For any policy pair $x, y$, the duality gap on the game can be related to the duality gap on individual states as follows: 
    \begin{align*}
        \max_{s, x', y'} \left(V^s_{x,y'} - V^s_{x', y}\right) \leq \frac{2}{1-\gamma} \max_{s,x',y'}\left(x^sQ_\star^sy'^s - x'^s Q_\star^sy^s\right).
    \end{align*}
\end{lemma}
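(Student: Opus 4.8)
The plan is to reduce the global, value-function duality gap to the single-state matrix-game duality gaps by passing through best-response value functions and exploiting a $\gamma$-contraction. First I would observe that the left-hand side factorizes: since $V^s_{x,y'}$ depends only on $y'$ and $V^s_{x',y}$ only on $x'$, we have $\max_{s,x',y'}\left(V^s_{x,y'}-V^s_{x',y}\right) = \max_s\left(\overV^s - \underV^s\right)$, where $\overV^s \triangleq \max_{y'}V^s_{x,y'}$ is the value when Player 2 best-responds to the fixed $x$ and $\underV^s \triangleq \min_{x'}V^s_{x',y}$ is the value when Player 1 best-responds to the fixed $y$. Each of these is the optimal value of a standard single-agent MDP (the opponent's policy being frozen), so it satisfies a Bellman optimality equation; defining $\overQ^s(a,b) = \sig(s,a,b)+\gamma\E_{s'\sim p(\cdot|s,a,b)}[\overV^{s'}]$ and $\underQ^s(a,b)=\sig(s,a,b)+\gamma\E_{s'\sim p(\cdot|s,a,b)}[\underV^{s'}]$, I get $\overV^s = \max_{y'}x^s\overQ^s y'^s$ and $\underV^s = \min_{x'}x'^s\underQ^s y^s$.

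Next I would bound $\overV^s - V^s_\star$ and $V^s_\star - \underV^s$ separately and recursively. Writing $\overv^s \triangleq \max_{y'}x^s Q_\star^s y'^s$ and $\underv^s \triangleq \min_{x'}x'^s Q_\star^s y^s$ (so the right-hand side of the lemma is $\tfrac{2}{1-\gamma}\max_s(\overv^s-\underv^s)$), the crucial elementary fact is the sandwich $\underv^s \le V^s_\star \le \overv^s$: because $V^s_\star$ is the value of the matrix game $Q_\star^s$, best-responding against any fixed strategy is never worse for the responder than the game value. For the upper side, set $G=\max_s(\overV^s-V^s_\star)$; since every entry of $\overQ^s - Q_\star^s$ equals $\gamma\E_{s'}[\overV^{s'}-V^{s'}_\star]\le\gamma G$, evaluating $\overV^s$ at its maximizing $y'$ gives $\overV^s \le \overv^s + \gamma G$, hence $\overV^s - V^s_\star \le (\overv^s - V^s_\star)+\gamma G \le (\overv^s-\underv^s)+\gamma G$. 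Taking $\max_s$ yields $G \le \max_s(\overv^s-\underv^s)+\gamma G$, i.e. $G \le \tfrac{1}{1-\gamma}\max_s(\overv^s-\underv^s)$. A symmetric computation, using that $\underQ^s - Q_\star^s$ has entries $\ge -\gamma\max_s(V^s_\star-\underV^s)$ together with $V^s_\star\le\overv^s$, gives $\max_s(V^s_\star-\underV^s)\le\tfrac{1}{1-\gamma}\max_s(\overv^s-\underv^s)$.

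Finally I would combine the two bounds: for every $s$, $\overV^s-\underV^s = (\overV^s-V^s_\star)+(V^s_\star-\underV^s) \le G + \max_s(V^s_\star-\underV^s) \le \tfrac{2}{1-\gamma}\max_s(\overv^s-\underv^s)$, and since $\max_s(\overv^s-\underv^s) = \max_{s,x',y'}(x^s Q_\star^s y'^s - x'^s Q_\star^s y^s)$, this is exactly the claimed inequality. The main obstacle --- and the only part that is not bookkeeping --- is setting up the recursion correctly: one must recognize the best-response values as solutions of MDP Bellman optimality equations, introduce the matrix-game best-response quantities $\overv^s,\underv^s$ as intermediaries, and verify the sandwich $\underv^s\le V^s_\star\le\overv^s$ that links them to the minimax value. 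Once those ingredients are in place, the factor $\gamma<1$ closes each recursion and the two halves add up to the factor $\tfrac{2}{1-\gamma}$.
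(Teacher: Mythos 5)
Your proposal is correct and follows essentially the same route as the paper: both split the state-wise gap into $(\max_{y'}V^s_{x,y'}-V^s_\star)+(V^s_\star-\min_{x'}V^s_{x',y})$, bound each half by $\tfrac{1}{1-\gamma}\max_{s,x',y'}(x^sQ_\star^sy'^s-x'^sQ_\star^sy^s)$ via a $\gamma$-contraction through the Q-function identity together with the fact that $V^s_\star$ sits between the two matrix-game best-response values, and then add the halves. The only cosmetic difference is that you phrase the recursion via the Bellman optimality equation of the frozen-opponent MDP, while the paper unrolls the recursion directly on $V^s_{x,y'}-V^s_\star$.
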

\begin{proof}
       Notice that for any policy $x$ and state $s$, 
    \begin{align*}
        \max_{y'} V_{x, y'}^s - V_{\star}^s 
        &= \sum_{a,b}x^s(a)y'^s(b)Q^s_{x, y'}(a,b) - \sum_{a,b} x^s_\star(a)y^s_\star(b)Q_\star^s(a,b)\\
        &= \sum_{a,b}x^s(a)y'^s(b)\left(Q^s_{x, y'}(a,b) - Q^s_\star(a,b)\right) + \sum_{a,b}\left(x^s(a)y'^s(b) -  x^s_\star(a)y^s_\star(b)\right)Q_\star^s(a,b) \\
        &= \gamma\sum_{a,b}x^s(a)y'^s(b)p(s'|s,a,b)\left(V^{s'}_{x, y'} - V^{s'}_\star\right) + x^s Q_\star^s y'^s - x_\star^s Q_\star^s y_\star^s \\
        &\leq \gamma \max_{s', y'}\left(V^{s'}_{x, y'} - V^{s'}_\star\right) + x^s Q_\star^s y'^s - x_\star^s Q_\star^s y_\star^s. 
    \end{align*}
    Taking max over $s$ on two sides and rearranging, we get  
    \begin{align*}
        \max_{s, y'}\left(V^{s}_{x, y'} - V^{s}_\star\right) \leq \frac{1}{1-\gamma}\max_{s,y'}\left(x^s Q_\star^s y'^s - x_\star^s Q_\star^s y_\star^s\right) \leq \frac{1}{1-\gamma}\max_{s,x',y'}\left(x^s Q_\star^s y'^s - x'^s Q_\star^s y^s\right). 
    \end{align*}
    Similarly, 
    \begin{align*}
        \max_{s, x'}\left( V^{s}_\star - V^{s}_{x', y}\right) \leq \frac{1}{1-\gamma}\max_{s,x',y'}\left(x^s Q_\star^s y'^s - x'^s Q_\star^s y^s\right). 
    \end{align*}
    Combining the two inequalities, we get 
    \begin{align*}
        \max_{s, x', y'} \left(V^s_{x,y'} - V^s_{x', y}\right) \leq \frac{2}{1-\gamma} \max_{s,x',y'}\left(x^sQ_\star^sy'^s - x'^s Q_\star^sy^s\right). 
    \end{align*}
\end{proof}

\section{Auxiliary Lemmas}
\label{app: aux lemmas}

\subsection{Interactions between the Auxiliary Coefficients}\label{app: interaction}

\begin{lemma}\label{lemma: double recursion}
    Let $\{g_t\}_{t=1,2,\ldots}, \{h_t\}_{t=1,2,\ldots}$ be non-negative sequences that satisfy
    $g_{t} \leq \gamma\sum_{\tau=1}^{t-1} \alpha^{\tau}_{t-1} g_\tau + h_t$ for all $t\geq 1$. Then 
    $g_t \leq \sum_{\tau=1}^t \beta^\tau_t h_\tau$. 
\end{lemma}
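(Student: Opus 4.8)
The plan is to unroll the recursion by induction on $t$, reducing everything to a single comparison between the coefficient families $\alpha^\tau_t$ and $\beta^\tau_t$. For the base case $t=1$ the hypothesis is just $g_1\le h_1$ (the $\tau$-sum is empty), which matches the target $g_1\le\beta^1_1h_1=h_1$ since $\beta^1_1=1$. For the inductive step with $t\ge 2$, I would substitute the inductive hypothesis $g_\tau\le\sum_{j=1}^{\tau}\beta^j_\tau h_j$, valid for all $\tau\le t-1$, into the recursion and exchange the order of summation. Because all $h_j\ge 0$, it then suffices to control the coefficient attached to each $h_j$; the term $h_t$ already appears with coefficient $\beta^t_t=1$, so what remains is the purely combinatorial inequality
\begin{equation*}
    \gamma\sum_{\tau=j}^{t-1}\alpha^\tau_{t-1}\beta^j_\tau \;\le\; \beta^j_t,\qquad 1\le j\le t-1 .
\end{equation*}

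Writing $c^j_t$ for the left-hand side, I would prove this by a second induction on $t$ (with $j$ fixed). The telescoping identities $\alpha^\tau_{t-1}=(1-\alpha_{t-1})\alpha^\tau_{t-2}$ for $\tau\le t-2$ and $\alpha^{t-1}_{t-1}=\alpha_{t-1}$ give the one-step recursion
\begin{equation*}
    c^j_t=(1-\alpha_{t-1})\,c^j_{t-1}+\gamma\,\alpha_{t-1}\,\beta^j_{t-1},\qquad t\ge j+2,
\end{equation*}
while the base case $t=j+1$ is the explicit bound $c^j_{j+1}=\gamma\alpha_j\le\alpha_j(1-\alpha_j+\alpha_j\gamma)=\beta^j_{j+1}$, which holds because, setting $\psi_i=1-\alpha_i+\alpha_i\gamma$, one has $\gamma\le\psi_i$ for every $\alpha_i\in(0,1]$ and $\gamma\in[\tfrac12,1)$. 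For $t\ge j+2$ I would combine the recursion with the inductive hypothesis $c^j_{t-1}\le\beta^j_{t-1}$ and $1-\alpha_{t-1}\ge 0$ to get $c^j_t\le(1-\alpha_{t-1}+\gamma\alpha_{t-1})\beta^j_{t-1}=\psi_{t-1}\beta^j_{t-1}$, and then invoke the factorization $\beta^j_t=\psi_{t-1}\beta^j_{t-1}$, which is valid precisely when $j<t-1$.

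The main obstacle is exactly this last alignment of boundary indices: the factorization $\beta^j_t=\psi_{t-1}\beta^j_{t-1}$ fails for $j=t-1$ (there $\beta^{t-1}_{t-1}=1$ but $\beta^{t-1}_t=\alpha_{t-1}\psi_{t-1}$), which is why the case $j=t-1$, equivalently $t=j+1$, must be treated separately and directly rather than through the generic recursive step. The one arithmetic fact doing all the real work is $\gamma\le\psi_i=1-\alpha_i(1-\gamma)$, i.e. the extra discount factor $\gamma$ is absorbed by the slack in $\psi_i$; everything else is non-negativity bookkeeping, using $\beta^\tau_t>0$, $\alpha^\tau_t\ge 0$, and the conventions $\beta^\tau_\tau=1$ and $c^j_j=0$.
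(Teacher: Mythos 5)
Your proof is correct. The outer structure coincides with the paper's: induction on $t$, substitution of the inductive hypothesis, exchange of summation, and reduction to the coefficient inequality $\gamma\sum_{\tau=j}^{t-1}\alpha^\tau_{t-1}\beta^j_\tau\le\beta^j_t$. Where you diverge is in how that inequality is established. The paper fixes $j$ and $t$ and inducts on the truncation index $r$ of the partial sum $\zeta_r=\gamma\sum_{\tau=j}^{r}\alpha^\tau_{t-1}\beta^j_\tau$, carrying the two-product invariant $\zeta_r\le\alpha_j\prod_{\tau=j}^{r}(1-\alpha_\tau+\alpha_\tau\gamma)\prod_{\tau=r+1}^{t-1}(1-\alpha_\tau)$; you instead fix $j$ and induct on $t$, using the one-step linear recursion $c^j_t=(1-\alpha_{t-1})c^j_{t-1}+\gamma\alpha_{t-1}\beta^j_{t-1}$ (which follows from $\alpha^\tau_{t-1}=(1-\alpha_{t-1})\alpha^\tau_{t-2}$) together with the factorization $\beta^j_t=(1-\alpha_{t-1}+\alpha_{t-1}\gamma)\beta^j_{t-1}$ for $j<t-1$. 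Your version maintains a simpler invariant ($c^j_{t'}\le\beta^j_{t'}$) at the cost of having to isolate the boundary case $t=j+1$, where the factorization of $\beta^j_t$ fails because $\beta^{t-1}_{t-1}=1$ by convention; you correctly handle that case directly via $\gamma\alpha_j\le\alpha_j(1-\alpha_j+\alpha_j\gamma)$. Both arguments ultimately rest on the same arithmetic fact $(1-\alpha_i)(1-\gamma)\ge 0$, and neither is more general than the other, but your organization of the inner induction is arguably cleaner since the recursion for $c^j_t$ mirrors the recursion defining $\alpha^\tau_t$ itself.
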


\begin{proof}
    We prove it by induction. When $t=1$, the condition guarantees $g_1\leq h_1 = \beta^1_1 h_1$. Suppose that it holds for $1, \ldots, t-1$.  Then
    \begin{align*}
        g_{t}&\leq \gamma\sum_{\tau=1}^{t-1} \alpha_{t-1}^\tau g_\tau + h_{t}  \\
        &\leq \gamma\sum_{\tau=1}^{t-1} \alpha^\tau_{t-1} \left(\sum_{i=1}^{\tau} \beta^i_\tau h_i\right) + h_t \\
        &= \sum_{i=1}^{t-1}\left(\sum_{\tau=i}^{t-1}\gamma\alpha^\tau_{t-1} \beta^i_\tau \right)h_i + h_t
    \end{align*}
    It remains to prove that $\sum_{\tau=i}^{t-1}\gamma\alpha^\tau_{t-1} \beta^i_\tau \leq \beta^i_t$ for all $i\leq t-1$. We use another induction to show this. Fix $i$ and $t$, and define the partial sum $\zeta_{r}=\sum_{\tau=i}^r \gamma\alpha^\tau_{t-1} \beta^i_\tau$ for $r\in[i, t-1]$. Below we show that 
    \begin{align}
        \zeta_r\leq \alpha_i \prod_{\tau=i}^r (1-\alpha_\tau+\alpha_\tau \gamma) \prod_{\tau=r+1}^{t-1}(1-\alpha_{\tau}).    \label{eq: double induction}
    \end{align}
    Notice that the right-hand side above is $\beta^i_t$ when $r=t-1$, which is exactly what we want to prove. 
    
    When $r=i$, $\zeta_r=\gamma\alpha^{i}_{t-1} = \gamma\alpha_i \prod_{\tau=i+1}^{t-1}(1-\alpha_\tau)\leq \alpha_i (1-\alpha_i + \alpha_i \gamma)\prod_{\tau=i+1}^{t-1}(1-\alpha_\tau)$ where the inequality is because $1-\alpha_i + \alpha_i\gamma -\gamma=(1-\alpha_i)(1-\gamma)\geq 0$.  Now assume that \pref{eq: double induction} holds up to $r$ for some $r\geq i$. Then 
    \begin{align*}
        \zeta_{r+1} 
        &= \zeta_r + \beta^{i}_{r+1}\gamma\alpha^{r+1}_{t-1} \\
        &\leq \alpha_i \prod_{\tau=i}^r(1-\alpha_\tau+\alpha_\tau \gamma)\prod_{\tau=r+1}^{t-1}(1-\alpha_\tau) +  \left(\alpha_i\prod_{\tau=i}^r(1-\alpha_\tau + \alpha_\tau \gamma)\right)\gamma\alpha_{r+1}\prod_{\tau=r+2}^{t-1}(1-\alpha_\tau)   \\
        &= \alpha_i  \prod_{\tau=i}^{r+1}(1-\alpha_\tau+\alpha_\tau \gamma)\prod_{\tau=r+2}^{t-1}(1-\alpha_\tau).  
    \end{align*}
    This finishes the induction. 
\end{proof}

\begin{lemma}\label{lemma: triple recursion}
    Let $\{h_t\}_{t=1,2,\ldots}$ and $\{k_t\}_{t=1,2,\ldots}$ be non-negative sequences that satisfy $h_t = \sum_{\tau=1}^t \alpha^\tau_t k_\tau$. 
    Then 
    $
        \sum_{\tau=2}^t \beta^\tau_t h_{\tau-1} \leq \frac{1}{\gamma^2}\sum_{\tau=1}^{t-1} \beta^\tau_t k_\tau.  
    $
\end{lemma}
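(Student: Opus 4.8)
The plan is to substitute the hypothesis $h_{\tau-1} = \sum_{i=1}^{\tau-1}\alpha^i_{\tau-1}k_i$ into the left-hand side and exchange the order of summation. Writing $\sum_{\tau=2}^t \beta^\tau_t h_{\tau-1} = \sum_{i=1}^{t-1} k_i \sum_{\tau=i+1}^t \beta^\tau_t \alpha^i_{\tau-1}$, and noting that the right-hand side equals $\frac{1}{\gamma^2}\sum_{i=1}^{t-1}\beta^i_t k_i$ after relabeling, it suffices (since every $k_i\ge 0$) to prove the per-index bound
\begin{align*}
    \sum_{\tau=i+1}^t \beta^\tau_t \alpha^i_{\tau-1} \le \frac{1}{\gamma^2}\,\beta^i_t \qquad \text{for every } 1\le i\le t-1.
\end{align*}

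To attack this, I would first introduce the shorthand $\phi_j := 1-\alpha_j+\alpha_j\gamma$, so that $\beta^\tau_t = \alpha_\tau\prod_{j=\tau}^{t-1}\phi_j$ for $\tau<t$ (with $\beta^t_t=1$) and $\alpha^i_{\tau-1} = \alpha_i\prod_{j=i+1}^{\tau-1}(1-\alpha_j)$. Since the quantity $D:=\alpha_i\prod_{j=i+1}^{t-1}\phi_j$ is strictly positive (each $\phi_j\ge\gamma>0$), I can divide the target inequality through by $D$. With $r_j := (1-\alpha_j)/\phi_j$ and the partial products $\Pi_\tau := \prod_{j=i+1}^{\tau-1}r_j$ (so $\Pi_{i+1}=1$), a direct cancellation turns the $\tau$-th summand into $\alpha_\tau\Pi_\tau$ for $\tau\le t-1$ and into $\Pi_t$ for $\tau=t$, and turns the right-hand side into $\phi_i/\gamma^2$. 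The claim thus reduces to showing
\begin{align*}
    W := \sum_{\tau=i+1}^{t-1}\alpha_\tau \Pi_\tau + \Pi_t \le \frac{\phi_i}{\gamma^2}.
\end{align*}

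The main obstacle is that a naive term-by-term induction on $t$ does not close: peeling off the $\tau=t$ term leaves a leftover $\alpha^i_{t-1}$ that cannot be absorbed into $\frac{1}{\gamma^2}\beta^i_t$. The resolution is an exact telescoping. Since $\Pi_{\tau+1}=\Pi_\tau r_\tau$, one computes $\Pi_\tau-\Pi_{\tau+1}=\Pi_\tau\,\frac{\alpha_\tau\gamma}{\phi_\tau}$, which rearranges to the identity $\alpha_\tau\Pi_\tau = \frac{\phi_\tau}{\gamma}\big(\Pi_\tau-\Pi_{\tau+1}\big)$. Because $\phi_\tau\le 1$ and the sequence $\{\Pi_\tau\}$ is non-increasing (as $0\le r_\tau\le 1$), summing gives $\sum_{\tau=i+1}^{t-1}\alpha_\tau\Pi_\tau \le \frac{1}{\gamma}\sum_{\tau=i+1}^{t-1}(\Pi_\tau-\Pi_{\tau+1}) = \frac{1}{\gamma}(1-\Pi_t)$, and hence $W\le \frac{1}{\gamma}(1-\Pi_t)+\Pi_t\le \frac{1}{\gamma}$.

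Finally I would bridge the gap between $\frac{1}{\gamma}$ and the required $\frac{\phi_i}{\gamma^2}$ using $\phi_i = 1-\alpha_i(1-\gamma)\ge 1-(1-\gamma)=\gamma$, valid because $\alpha_i\le 1$; this yields $\frac{\phi_i}{\gamma^2}\ge\frac{1}{\gamma}\ge W$, establishing the per-index bound and therefore the lemma. The boundary case $i=t-1$ is consistent, as there $W=\Pi_t=1\le \phi_{t-1}/\gamma^2$. I expect the only delicate points to be the correct index bookkeeping in the summation swap and the verification that the cancellation producing $W$ is exact; the telescoping step itself is short once the identity for $\alpha_\tau\Pi_\tau$ is in hand.
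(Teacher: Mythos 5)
Your proof is correct, and while it opens exactly as the paper's does---substituting $h_{\tau-1}=\sum_{i=1}^{\tau-1}\alpha^i_{\tau-1}k_i$, swapping the order of summation, and reducing to the per-index bound $\sum_{\tau=i+1}^{t}\beta^\tau_t\alpha^i_{\tau-1}\le \frac{1}{\gamma^2}\beta^i_t$---the way you establish that per-index bound is genuinely different. The paper runs a downward induction on the partial sums $\zeta_r=\gamma\sum_{\tau=r}^{t}\beta^\tau_t\alpha^i_{\tau-1}$, carrying the inductive hypothesis $\zeta_r\le \alpha_i\prod_{\tau=i+1}^{r-1}(1-\alpha_\tau)\prod_{\tau=r}^{t-1}(1-\alpha_\tau+\alpha_\tau\gamma)$ and, at each step, absorbing the new term via the elementary inequality $1-\alpha+\gamma\alpha(1-\alpha+\alpha\gamma)\le 1-\alpha+\alpha\gamma$; it lands on $\zeta_{i+1}\le\beta^i_t/(1-\alpha_i+\alpha_i\gamma)\le\beta^i_t/\gamma$, which matches the normalized bound $W\le \frac{1}{\gamma}\le\phi_i/\gamma^2$ you obtain. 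Your route instead divides out the common factor $D=\alpha_i\prod_{j=i+1}^{t-1}\phi_j$ and exposes the exact telescoping identity $\alpha_\tau\Pi_\tau=\frac{\phi_\tau}{\gamma}(\Pi_\tau-\Pi_{\tau+1})$, which makes transparent where the $1/\gamma$ loss comes from and replaces the bookkeeping-heavy induction with a two-line summation; I verified the cancellation producing $\alpha_\tau\Pi_\tau$ for $\tau\le t-1$ and $\Pi_t$ for $\tau=t$, the monotonicity of $\Pi_\tau$, and the final step $\phi_i\ge\gamma$, and all are sound. The trade-off is that the paper's induction is self-contained and mirrors the structure of its companion \pref{lemma: double recursion}, whereas your argument is shorter, yields the marginally sharper constant $\beta^i_t/(\gamma\phi_i)$ explicitly, and isolates the one identity doing the work.
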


\begin{proof} 
    By the assumption on $h_\tau$, we have
    \begin{align*} 
        \sum_{\tau=2}^t \beta^\tau_t h_{\tau-1} &\leq \sum_{\tau=2}^t \beta^\tau_t \left(\sum_{i=1}^{\tau-1}\alpha^i_{\tau-1} k_i \right)
        = \sum_{i=1}^{t-1} \left(\sum_{\tau=i+1}^{t} \beta^\tau_t \alpha^i_{\tau-1}\right) k_i
    \end{align*}
    It remains to prove that for $i<t$, $\sum_{\tau=i+1}^t \beta^\tau_t \alpha^i_{\tau-1}\leq \frac{1}{\gamma^2}\beta^i_t$, or equivalently, $\gamma\sum_{\tau=i+1}^t \beta^\tau_t \alpha^i_{\tau-1}\leq \frac{1}{\gamma}\beta^i_t$. Below we use another induction to prove this. Fix $i$ and $t$, and define the partial sum $\zeta_r = \gamma\sum_{\tau=r}^t \beta^\tau_t \alpha^i_{\tau-1}$ for $r\in[i+1, t]$. We will show that 
    \begin{align}
        \zeta_r \leq \alpha_i \prod_{\tau=i+1}^{r-1}(1-\alpha_\tau)\prod_{\tau=r}^{t-1}(1-\alpha_\tau + \alpha_\tau \gamma).   \label{eq: recur hard}
    \end{align}
    For the base case $r=t$, we have 
    \begin{align*}
        \zeta_r=\gamma\beta^t_t\alpha^i_{t-1}=\gamma\alpha_i\prod_{\tau=i+1}^{t-1}(1-\alpha_\tau)\leq \alpha_i\prod_{\tau=i+1}^{t-1}(1-\alpha_\tau).
    \end{align*}
    Suppose that \pref{eq: recur hard} holds up to $r$ for some $r\leq t$. Then
    \begin{align*}
        \zeta_{r-1} 
        &= \zeta_r + \alpha^i_{r-2} \gamma\beta^{r-1}_t  \\
        &\leq \alpha_i \prod_{\tau=i+1}^{r-1}(1-\alpha_\tau)\prod_{\tau=r}^{t-1}(1-\alpha_\tau + \alpha_\tau \gamma) + \alpha_i \left(\prod_{\tau=i+1}^{r-2}(1-\alpha_\tau) \right) \gamma\alpha_{r-1}\prod_{\tau=r-1}^{t-1}(1-\alpha_\tau+\alpha_\tau \gamma) \\
        &\leq \left(\alpha_i \prod_{\tau=i+1}^{r-2}(1-\alpha_\tau)\prod_{\tau=r}^{t-1}(1-\alpha_\tau + \alpha_\tau \gamma)\right) \times (1-\alpha_{r-1} + \gamma\alpha_{r-1}(1-\alpha_{r-1}+\alpha_{r-1}\gamma)) \\
        &\leq \left(\alpha_i \prod_{\tau=i+1}^{r-2}(1-\alpha_\tau)\prod_{\tau=r}^{t-1}(1-\alpha_\tau + \alpha_\tau \gamma)\right) \times (1-\alpha_{r-1} + \alpha_{r-1}\gamma) \\
        &= \alpha_i \prod_{\tau=i+1}^{r-2}(1-\alpha_\tau)\prod_{\tau=r-1}^{t-1}(1-\alpha_\tau + \alpha_\tau \gamma).   
    \end{align*}
    This finishes the induction. Applying the result with $r=i+1$, we get
    \begin{align*}
        \zeta_{i+1}\leq\alpha_i\prod_{\tau=i+1}^{t-1}(1-\alpha_\tau +\alpha_\tau \gamma) = \frac{\beta^i_t}{1-\alpha_i + \alpha_i \gamma}\leq \frac{\beta^i_t}{\gamma} 
    \end{align*}
    where the last inequality is by $1-\alpha_i + \alpha_i \gamma - \gamma = (1-\alpha_i)(1-\gamma)\geq 0$. This finishes the proof.  
\end{proof}

\begin{lemma}\label{lemma: delta formula}
    For $0\leq h\leq t$, 
    $\sum_{\tau=0}^{h}\alpha^\tau_{t} = \delta^h_t$. 
\end{lemma}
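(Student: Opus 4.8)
The plan is to prove this as a pure bookkeeping identity about the coefficients, either by recognizing a telescoping sum or, equivalently, by a one-line induction on $h$. The engine of the argument is the single algebraic identity
\begin{align*}
    \alpha^\tau_t = \delta^\tau_t - \delta^{\tau-1}_t,
\end{align*}
which follows directly from the definitions: writing $\alpha_\tau = 1 - (1-\alpha_\tau)$ and pulling the product through gives
\begin{align*}
    \alpha^\tau_t = \alpha_\tau \prod_{i=\tau+1}^t (1-\alpha_i) = \prod_{i=\tau+1}^t (1-\alpha_i) - \prod_{i=\tau}^t (1-\alpha_i) = \delta^\tau_t - \delta^{\tau-1}_t.
\end{align*}

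Granting this, I would sum over $\tau$ from $0$ to $h$ and telescope, obtaining $\sum_{\tau=0}^h \alpha^\tau_t = \delta^h_t - \delta^{-1}_t$. The only point requiring care is the boundary term $\delta^{-1}_t = \prod_{i=0}^t (1-\alpha_i)$: here I would invoke the convention $\alpha_0 = 1$ (part of the stated assumptions on $\{\alpha_t\}$), so that the factor $1-\alpha_0 = 0$ annihilates the product and $\delta^{-1}_t = 0$. This yields exactly $\sum_{\tau=0}^h \alpha^\tau_t = \delta^h_t$.

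Alternatively, and perhaps cleaner to write out, I would run the induction on $h$ directly. The base case $h=0$ reads $\alpha^0_t = \alpha_0 \prod_{i=1}^t (1-\alpha_i) = \prod_{i=1}^t(1-\alpha_i) = \delta^0_t$, again using $\alpha_0 = 1$. For the inductive step, assuming the claim for $h-1$, I would write $\sum_{\tau=0}^h \alpha^\tau_t = \delta^{h-1}_t + \alpha^h_t$, factor out $\prod_{i=h+1}^t(1-\alpha_i)$ from both terms, and use $(1-\alpha_h) + \alpha_h = 1$ to collapse the bracket, leaving $\delta^h_t$.

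There is no genuine obstacle here; the statement is a routine identity among the auxiliary coefficients. The only subtlety worth flagging is that the sum starts at $\tau = 0$, so the correctness of the boundary—the vanishing of the ``$\tau=-1$'' term, or equivalently the base case—hinges entirely on the convention $\alpha_0 = 1$; without it the identity would fail.
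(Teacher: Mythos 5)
Your proof is correct and the inductive version you sketch is essentially identical to the paper's own proof (induction on $h$, base case via $\alpha_0=1$, inductive step collapsing $(1-\alpha_{h+1})+\alpha_{h+1}=1$). The telescoping variant via $\alpha^\tau_t=\delta^\tau_t-\delta^{\tau-1}_t$ is just an equivalent repackaging, and you correctly flag the one real subtlety, namely that the boundary term vanishes only because of the convention $\alpha_0=1$.
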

\begin{proof} 
    We prove it by induction on $h$. When $h=0$, $\sum_{\tau=0}^{h}\alpha^\tau_{t}=\alpha^0_{t} =  \prod_{\tau=1}^{t}  (1-\alpha_\tau)=\delta^h_t$ since $\alpha_0=1$. Suppose that the formula holds for $h$. Then $\sum_{\tau=0}^{h+1}\alpha^\tau_{t} = \sum_{\tau=0}^{h}\alpha^\tau_{t} + \alpha^{h+1}_{t} = \prod_{\tau=h+1}^{t}(1-\alpha_\tau) + \alpha_{h+1} \prod_{\tau=h+2}^{t}(1-\alpha_\tau) = \prod_{\tau=h+2}^{t}(1-\alpha_\tau)=\delta^{h+1}_t$, which finishes the induction.   
\end{proof}

\begin{lemma} \label{lem: lambda beta lambda}
For any positive integers $i,t$ with $i\leq t$, 
    $\sum_{\tau=i}^t \lambda^\tau_t \beta^i_\tau \leq \frac{3}{1-\gamma}\lambda^i_t$. 
\end{lemma}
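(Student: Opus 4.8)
The plan is to normalize by $\lambda^i_t$ and reduce the claim to a clean telescoping estimate. First I would expand the ratio $\lambda^\tau_t\beta^i_\tau/\lambda^i_t$ for each $\tau\in\{i,\dots,t\}$ using the product formulas for $\lambda^\tau_t$, $\beta^i_\tau$, and $\lambda^i_t$. Setting $\rho_j\triangleq\frac{1-\alpha_j+\alpha_j\gamma}{\lambda_j}$ and $P_\tau\triangleq\prod_{j=i}^{\tau-1}\rho_j$ (with $P_i=1$), almost all of the $\lambda_j$ and $(1-\alpha_j+\alpha_j\gamma)$ factors cancel, leaving
\[
\frac{1}{\lambda^i_t}\sum_{\tau=i}^t \lambda^\tau_t\beta^i_\tau = 1 + \sum_{\tau=i+1}^{t-1}\alpha_\tau P_\tau + P_t,
\]
where the $\tau=i$ term and the $\tau=t$ term are slightly special (they carry no $\alpha_\tau$ factor) because $\beta^i_i=1$ and $\lambda^t_t=1$. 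It then suffices to bound the right-hand side by $\frac{3}{1-\gamma}$.

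Second, the core estimate I would establish is the per-factor bound $\rho_j\le 1-\frac{\alpha_j(1-\gamma)}{2}$. This follows from $\lambda_j\ge 1-\frac{\alpha_j(1-\gamma)}{2}$, which is immediate from the definition of $\lambda_j$: writing $w=\alpha_j(1-\gamma)\in(0,\tfrac12]$ (using $\alpha_j\le 1$ and $1-\gamma\le\tfrac12$), one checks $\frac{1-w}{1-w/2}\le 1-\frac{w}{2}$, equivalently $1-w\le(1-\frac w2)^2$, which holds trivially. In particular $1-\rho_j\ge\frac{\alpha_j(1-\gamma)}{2}$, i.e. $\alpha_j\le\frac{2}{1-\gamma}(1-\rho_j)$.

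Third, I would telescope. Since $P_{\tau+1}=\rho_\tau P_\tau$, the per-factor bound gives $\alpha_\tau P_\tau\le\frac{2}{1-\gamma}(1-\rho_\tau)P_\tau=\frac{2}{1-\gamma}(P_\tau-P_{\tau+1})$, so
\[
\sum_{\tau=i+1}^{t-1}\alpha_\tau P_\tau \le \frac{2}{1-\gamma}(P_{i+1}-P_t) \le \frac{2}{1-\gamma},
\]
using $P_t\ge 0$ and $P_{i+1}=\rho_i\le 1$. Combining with $P_t\le 1$ yields
\[
\frac{1}{\lambda^i_t}\sum_{\tau=i}^t \lambda^\tau_t\beta^i_\tau \le 2+\frac{2}{1-\gamma} \le \frac{3}{1-\gamma},
\]
where the final step uses $2\le\frac{1}{1-\gamma}$, i.e. $\gamma\ge\tfrac12$, which holds by assumption. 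Multiplying through by $\lambda^i_t$ gives the claim, and I would verify the degenerate cases $i=t$ and $i=t-1$ directly against the decomposition.

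The main obstacle is the per-factor bound $\rho_j\le 1-\frac{\alpha_j(1-\gamma)}{2}$: everything downstream is a routine geometric telescoping, but obtaining the exact constant $\frac{3}{1-\gamma}$ (rather than something looser) hinges on (a) cleanly extracting the $\frac{1-\gamma}{2}$-decay from the $\max$ defining $\lambda_j$, and (b) invoking $\gamma\ge\tfrac12$ to absorb the two boundary ``$1$''s into $\frac{1}{1-\gamma}$. I would also double-check the bookkeeping at the $\tau=i$ and $\tau=t$ endpoints, since those terms deviate from the generic $\alpha_\tau P_\tau$ pattern and are exactly where a careless argument could lose or misplace a factor.
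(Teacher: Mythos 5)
Your proof is correct and is essentially the paper's argument in a different guise: the paper proves $\sum_{\tau=i+1}^{t}\lambda^\tau_t\beta^i_\tau\le\frac{2}{1-\gamma}\lambda^i_t$ by a backward induction whose inductive step is exactly your per-factor bound $\lambda_j\ge 1-\frac{\alpha_j(1-\gamma)}{2}$, and your normalized telescoping sum $\sum_\tau \alpha_\tau P_\tau\le\frac{2}{1-\gamma}(P_{i+1}-P_t)$ is just that induction unrolled. The only (harmless) divergence is at the $\tau=t$ endpoint, where you pay an extra additive $P_t\le 1$ and invoke $\gamma\ge\frac12$ to absorb it, whereas the paper folds that term into the induction's base case.
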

\begin{proof}
    Notice that 
\begin{align}
    \sum_{\tau=i}^t \lambda^{\tau}_t\beta^i_\tau = \lambda^i_t  + \sum_{\tau=i+1}^{t} \lambda^{\tau}_t\beta^i_\tau.
\end{align}
Below we use induction to prove
\begin{align*}
    \sum_{\tau=r}^{t}\lambda^\tau_t \beta^i_\tau \leq \frac{2}{1-\gamma} \alpha_i \prod_{\tau=i}^{r-1}\Big(1-\alpha_\tau(1-\gamma)\Big)\prod_{\tau=r}^{t-1}\lambda_\tau
\end{align*}
for $r\in [i+1, t]$. 
When $r=t$, $\sum_{\tau=r}^{t}\lambda^\tau_t \beta^i_\tau = \lambda^{t}_t\beta^{i}_{t}= \beta^i_t = \alpha_i \prod_{\tau=i}^{t-1}(1-\alpha_\tau(1-\gamma))$. Suppose this holds for some $r\leq t$. Then
\begin{align*}
    \sum_{\tau=r-1}^t  \lambda^\tau_t \beta^i_\tau
    &\leq \frac{2}{1-\gamma} \alpha_i \prod_{\tau=i}^{r-1}\Big(1-\alpha_\tau(1-\gamma)\Big)\prod_{\tau=r}^{t-1}\lambda_\tau + \beta^i_{r-1}\lambda^{r-1}_t \\
    &\leq \frac{2}{1-\gamma} \alpha_i \prod_{\tau=i}^{r-1}\Big(1-\alpha_\tau(1-\gamma)\Big)\prod_{\tau=r}^{t-1}\lambda_\tau 
    + \left(\alpha_i \prod_{\tau=i}^{r-2}\Big(1-\alpha_\tau(1-\gamma)\Big)\right)\alpha_{r-1}\prod_{\tau=r-1}^{t-1}\lambda_\tau \\
    &\leq \left[\alpha_i \prod_{\tau=i}^{r-2}\Big(1-\alpha_\tau(1-\gamma)\Big) \prod_{\tau=r}^{t-1}\lambda_\tau\right] \left(\frac{2}{1-\gamma}\left(1-\alpha_{r-1}(1-\gamma)\right) + \alpha_{r-1} \right)  \tag{$\lambda_{r-1}\leq 1$}\\
    &= \left[\alpha_i \prod_{\tau=i}^{r-2}\Big(1-\alpha_\tau(1-\gamma)\Big) \prod_{\tau=r}^{t-1}\lambda_\tau \right] \left(\frac{2}{1-\gamma}\left(1-\frac{1}{2}\alpha_{r-1}(1-\gamma)\right) \right) \\
    &\leq \frac{2}{1-\gamma}\alpha_i \prod_{\tau=i}^{r-2}\Big(1-\alpha_\tau(1-\gamma)\Big) \prod_{\tau=r-1}^{t-1}\lambda_\tau,    \tag{$\lambda_{r-1}\geq 1-\frac{1}{2}\alpha_{r-1}(1-\gamma)$ by the definition of $\lambda_{r-1}$}
\end{align*}
which finishes the induction. 
Notice that this implies 
\begin{align*}
\sum_{\tau=i+1}^t \lambda^\tau_t \beta^i_\tau \leq \frac{2}{1-\gamma}\alpha_i \left(1-\alpha_i(1-\gamma)\right) \prod_{\tau=i+1}^{t-1}\lambda_\tau \leq 
\frac{2}{1-\gamma}\alpha_i \prod_{\tau=i}^{t-1}\lambda_\tau =
\frac{2}{1-\gamma}\lambda^i_t
\end{align*}
where the second inequality is by the definition of $\lambda_i$. 
Thus, 
\begin{align} 
    \sum_{\tau=i}^t \lambda^\tau_t \beta^i_\tau \leq \frac{3}{1-\gamma}\lambda^i_t. \label{eq: cal coeff}
\end{align}
\end{proof}

\begin{lemma}
    \label{lem: lambda ratio consecutive}
    $\lambda^{\tau+1}_{t+1}\leq \lambda^\tau_t$. 
\end{lemma}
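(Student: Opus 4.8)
The plan is to unfold both sides using the definition $\lambda^\tau_t = \alpha_\tau \prod_{i=\tau}^{t-1}\lambda_i$ and reduce the claimed inequality to a single one-step comparison between consecutive $\lambda$'s. First I would dispose of the boundary case $\tau = t$: here $\lambda^\tau_t = \lambda^t_t = 1$ and $\lambda^{\tau+1}_{t+1} = \lambda^{t+1}_{t+1} = 1$ by the stated convention, so the inequality holds with equality. This also confirms that the relevant index range is $\tau \le t$, for which both quantities are well defined.

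For the main case $\tau < t$, I would write $\lambda^{\tau+1}_{t+1} = \alpha_{\tau+1}\prod_{i=\tau+1}^{t}\lambda_i = \alpha_{\tau+1}\lambda_t \prod_{i=\tau+1}^{t-1}\lambda_i$ and $\lambda^\tau_t = \alpha_\tau \prod_{i=\tau}^{t-1}\lambda_i = \alpha_\tau \lambda_\tau \prod_{i=\tau+1}^{t-1}\lambda_i$. Since every $\lambda_i > 0$, the common factor $\prod_{i=\tau+1}^{t-1}\lambda_i$ is positive (and equals the empty product $1$ when $\tau = t-1$), so after cancellation it suffices to prove the single inequality $\alpha_{\tau+1}\lambda_t \leq \alpha_\tau \lambda_\tau$.

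This last step rests on two elementary facts about the sequence $\lambda$. On one hand $\lambda_t \leq 1$, because both arguments of the max in the definition of $\lambda_t$ are at most $1$: we have $\alpha_{t+1}/\alpha_t \leq 1$ since $\{\alpha_t\}$ is non-increasing, and $1 - \alpha_t(1-\gamma)/2 \leq 1$ trivially. On the other hand $\lambda_\tau \geq \alpha_{\tau+1}/\alpha_\tau$ directly from the max in the definition of $\lambda_\tau$, so that $\alpha_\tau \lambda_\tau \geq \alpha_{\tau+1}$. Chaining these gives $\alpha_{\tau+1}\lambda_t \leq \alpha_{\tau+1} \leq \alpha_\tau \lambda_\tau$, which is exactly the reduced inequality and completes the argument.

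I do not anticipate a genuine obstacle here, as the content is purely the algebraic unfolding together with the two monotonicity observations above. The only point requiring care is the bookkeeping of the product index ranges and the empty-product convention when $\tau+1 > t-1$, i.e.\ when $\tau = t-1$; I would flag this explicitly so that the factored form $\prod_{i=\tau+1}^{t-1}\lambda_i = 1$ is handled correctly and the cancellation step is valid in that degenerate case.
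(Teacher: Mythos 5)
Your proposal is correct and follows essentially the same route as the paper: the paper also bounds the ratio $\lambda^{\tau+1}_{t+1}/\lambda^\tau_t$ by cancelling the common product, then applies $\lambda_t\le 1$ and $\lambda_\tau\ge \alpha_{\tau+1}/\alpha_\tau$ from the definition of $\lambda$, and handles $\tau=t$ as the trivial $1/1$ case. Your extra care with the empty-product convention at $\tau=t-1$ is a harmless refinement of the same argument.
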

\begin{proof}
    When $\tau < t$, we have 
\begin{align*}
    \frac{\lambda^{\tau+1}_{t+1}}{\lambda^\tau_t}
    &= \frac{\alpha_{\tau+1} \Pi_{i=\tau+1}^{t}\lambda_i}{\alpha_\tau  \Pi_{i=\tau}^{t-1}\lambda_i} \leq \frac{\alpha_{\tau+1}}{\alpha_\tau \lambda_\tau} \leq 1
\end{align*} 
where in the first inequality we use $\lambda_t\leq 1$ and in the second inequality we use the definition of $\lambda_\tau$. 
    When $\tau=t$, we have $\frac{\lambda^{\tau+1}_{t+1}}{\lambda^\tau_t}=\frac{1}{1}=1$. 
\end{proof}

\begin{lemma}\label{lemma: sum1 of beta}
    $\sum_{\tau=1}^t \beta^\tau_t\leq \frac{2}{1-\gamma}$. 
\end{lemma}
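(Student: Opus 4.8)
The plan is to exploit the product structure of $\beta^\tau_t$ and reduce the bound to a telescoping sum. First I would peel off the boundary term $\tau=t$, which equals $\beta^t_t=1$ by definition, so that it remains to show $\sum_{\tau=1}^{t-1}\beta^\tau_t \le \frac{1}{1-\gamma}$. For $\tau<t$ each factor in the product can be rewritten as $1-\alpha_i+\alpha_i\gamma = 1-\alpha_i(1-\gamma)$; I would abbreviate $c_i \triangleq 1-\alpha_i(1-\gamma)$, noting that under the standing assumption $\alpha_i\in(0,1]$ we have $c_i\in[\gamma,1)\subseteq[0,1]$. The key algebraic observation is that the leading coefficient $\alpha_\tau$ is itself expressible through $c_\tau$, namely $\alpha_\tau=\frac{1-c_\tau}{1-\gamma}$, which turns $\beta^\tau_t$ into $\frac{1}{1-\gamma}(1-c_\tau)\prod_{i=\tau}^{t-1}c_i$.

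Next I would set $Q_\tau\triangleq\prod_{i=\tau}^{t-1}c_i$, with the empty product $Q_t=1$, and observe the one-step recursion $Q_\tau=c_\tau Q_{\tau+1}$. Since $c_\tau\le 1$ this gives $Q_\tau\le Q_{\tau+1}$, and therefore $(1-c_\tau)Q_\tau \le (1-c_\tau)Q_{\tau+1}=Q_{\tau+1}-c_\tau Q_{\tau+1}=Q_{\tau+1}-Q_\tau$. Summing this inequality over $\tau=1,\dots,t-1$ makes the right-hand side telescope to $Q_t-Q_1=1-Q_1\le 1$, so that $\sum_{\tau=1}^{t-1}(1-c_\tau)Q_\tau\le 1$ and hence $\sum_{\tau=1}^{t-1}\beta^\tau_t\le\frac{1}{1-\gamma}$.

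Finally I would combine the two pieces: $\sum_{\tau=1}^{t}\beta^\tau_t = 1+\sum_{\tau=1}^{t-1}\beta^\tau_t \le 1+\frac{1}{1-\gamma}\le\frac{2}{1-\gamma}$, where the last step uses $\frac{1}{1-\gamma}\ge 1$. I do not expect a genuine obstacle here; the only point requiring a little care is spotting the substitution $\alpha_\tau=\frac{1-c_\tau}{1-\gamma}$ that converts the weighted product into an exact telescoping difference, together with remembering to separate out the boundary term $\beta^t_t=1$ before telescoping.
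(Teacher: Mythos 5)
Your proof is correct. It is close in spirit to the paper's argument but organized differently: the paper proves by induction on $r$ that the partial sum satisfies $\sum_{\tau=1}^{r}\beta^\tau_t\le\frac{1}{1-\gamma}\prod_{i=r}^{t-1}(1-\alpha_i+\alpha_i\gamma)$, and in the inductive step it needs the monotonicity $\alpha_{r+1}\le\alpha_r$ to absorb the new term. You instead use the identity $\alpha_\tau(1-\gamma)=1-c_\tau$ to write $\beta^\tau_t=\frac{1}{1-\gamma}(1-c_\tau)Q_\tau\le\frac{1}{1-\gamma}(Q_{\tau+1}-Q_\tau)$ and telescope exactly, which gives the same bound $\frac{1}{1-\gamma}$ on $\sum_{\tau=1}^{t-1}\beta^\tau_t$ without invoking $\alpha_{\tau+1}\le\alpha_\tau$ at all — only $\alpha_i\in(0,1]$ and $\gamma<1$ are used. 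Your route is therefore slightly more elementary and marginally more general (it would survive a non-monotone step-size sequence), while the paper's induction is the ``rolled-up'' version of the same computation. All the individual steps you list — the peeling of $\beta^t_t=1$, the sign checks $c_i\in[\gamma,1)$, the telescoping to $Q_t-Q_1\le 1$, and the final $1+\frac{1}{1-\gamma}\le\frac{2}{1-\gamma}$ — check out.
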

\begin{proof}
    Below we use induction to prove that for all $r=1,2,\ldots, t-1$, 
    \begin{align*}
        \sum_{\tau=1}^{r} \beta^\tau_t \leq \frac{1}{1-\gamma}\prod_{i=r}^{t-1}(1-\alpha_i+\alpha_i \gamma). 
    \end{align*}
    When $r=1$, the left-hand side is $\beta^{1}_t = \alpha_{1}\prod_{i=1}^{t-1}(1-\alpha_i+\alpha_i \gamma) \leq \frac{1}{1-\gamma}\prod_{i=1}^{t-1}(1-\alpha_i+\alpha_i \gamma)$, which is the right-hand side. 
    
    Suppose that this holds for $r$, then
    \begin{align*}
        \sum_{\tau=1}^{r+1} \beta^\tau_t 
        &= \beta^{r+1}_{t} + \sum_{\tau=1}^{r} \beta^\tau_t  \\
        &\leq \alpha_{r+1}\prod_{i=r+1}^{t-1} (1-\alpha_i + \alpha_i\gamma) + \frac{1}{1-\gamma}\prod_{i=r}^{t-1}(1-\alpha_i+\alpha_i \gamma) \\
        &\leq  \left(\frac{1}{1-\gamma}\prod_{i=r+1}^{t-1} (1-\alpha_i + \alpha_i\gamma)\right) \left( \alpha_{r+1}(1-\gamma) + 1-\alpha_{r}(1-\gamma)  \right)  \\
        &\leq \frac{1}{1-\gamma}\prod_{i=r+1}^{t-1} (1-\alpha_i + \alpha_i\gamma)   \tag{because $\alpha_{r+1}\leq \alpha_r$}
    \end{align*}
    which finishes the induction. 
    
    Therefore, $\sum_{\tau=1}^t\beta^\tau_t = 1 + \sum_{\tau=1}^{t-1}\beta^\tau_t\leq 1+\frac{1}{1-\gamma} \leq \frac{2}{1-\gamma}$. 
\end{proof}

\begin{lemma}
    \label{lemma: final recursion}
    Let $\{g_t\}_{t=0,1,2,\ldots}$, $\{h_t\}_{t=1,2,\ldots}$ be non-negative sequences that satisfy $g_t\leq (1-c)g_{t-1} + h_t$ for some $c\in(0,1)$ for all $t\geq 1$. 
    Then 
   \begin{align*}
       g_t \leq g_0(1-c)^t + \frac{\max_{\tau\in[1,\nicefrac{t}{2}]} h_\tau}{c}(1-c)^{\frac{t}{2}} + \frac{\max_{\tau\in[\nicefrac{t}{2}, t]} h_\tau}{c}.  
   \end{align*}
\end{lemma}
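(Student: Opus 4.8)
The plan is to first unroll the one-step recursion into a closed-form geometric convolution, and then split the resulting sum at the midpoint $t/2$ to isolate the two error regimes appearing on the right-hand side. Concretely, I would first show by a straightforward induction on $t$ that the hypothesis $g_t \leq (1-c)g_{t-1} + h_t$ implies
\begin{align*}
    g_t \leq (1-c)^t g_0 + \sum_{\tau=1}^{t} (1-c)^{t-\tau} h_\tau.
\end{align*}
The base case $t=0$ is trivial (the sum is empty), and the inductive step follows by substituting the bound for $g_{t-1}$ into $g_t\leq (1-c)g_{t-1}+h_t$ and re-indexing, since $(1-c)\cdot(1-c)^{t-1-\tau}=(1-c)^{t-\tau}$ for $\tau\leq t-1$ and the new term $h_t$ matches $\tau=t$.

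Next I would break the convolution sum into the ``old'' indices $\tau\in[1,t/2]$ and the ``recent'' indices $\tau\in[t/2,t]$. For the old block, every exponent satisfies $t-\tau\geq t/2$, so I factor out $(1-c)^{t/2}$ and bound the remaining geometric tail by $\sum_{j\geq 0}(1-c)^j = \tfrac{1}{c}$; pulling out $\max_{\tau\in[1,t/2]}h_\tau$ then gives a contribution of at most $\tfrac{\max_{\tau\in[1,t/2]}h_\tau}{c}(1-c)^{t/2}$. For the recent block, I simply use $(1-c)^{t-\tau}\leq 1$ together with the same geometric bound $\sum_{\tau}(1-c)^{t-\tau}\leq \tfrac{1}{c}$, which yields a contribution of at most $\tfrac{\max_{\tau\in[t/2,t]}h_\tau}{c}$. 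Summing the leading term $(1-c)^t g_0$ with these two blocks reproduces the claimed inequality exactly.

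There is no genuinely deep obstacle here; the lemma is a tidy discrete Gr\"onwall/geometric estimate. The only points requiring mild care are (i) making sure the geometric tail $\sum_{j=0}^{\infty}(1-c)^j=\tfrac1c$ is invoked with the correct shift so that the factor $(1-c)^{t/2}$ is genuinely extracted from the old block rather than the recent one, and (ii) handling the case where $t$ is odd so that $t/2$ is not an integer—this is cosmetic and can be dispatched by letting the two index blocks overlap at $\lceil t/2\rceil$ (double-counting one term only inflates the bound, which is harmless since all quantities are non-negative). I would therefore present the induction first, then the two-block split, and flag the non-integer $t/2$ issue in a single sentence rather than belabouring it.
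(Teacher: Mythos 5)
Your proposal is correct and follows essentially the same route as the paper: unroll the recursion by induction into the geometric convolution $g_t \leq (1-c)^t g_0 + \sum_{\tau=1}^t (1-c)^{t-\tau}h_\tau$, then split the sum at $t/2$ and bound each block with the geometric tail $\sum_j (1-c)^j \leq 1/c$. Your extra remark on handling non-integer $t/2$ is a minor point the paper leaves implicit.
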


\begin{proof}
    We first show that 
    \begin{align*}
        g_t \leq g_0(1-c)^t + \sum_{\tau=1}^t (1-c)^{t-\tau}h_\tau. 
    \end{align*}
    The case of $t=1$ is clear. Suppose that this holds for $g_t$. Then 
    \begin{align*}
        g_{t+1}
        &\leq (1-c)\left(g_0(1-c)^t + \sum_{\tau=1}^t (1-c)^{t-\tau}h_\tau\right) + h_{t+1} 
        = g_0 (1-c)^{t+1} + \sum_{\tau=1}^{t+1}(1-c)^{t+1-\tau}h_\tau, 
    \end{align*}
    which finishes the induction. 
    Therefore, 
    \begin{align*}
        g_t 
        &\leq g_0(1-c)^t + \sum_{\tau=1}^{\nicefrac{t}{2}}(1-c)^{t-\tau}\max_{\tau\in[1,\nicefrac{t}{2}]}h_\tau + \sum_{\tau=\nicefrac{t}{2}+1}^t (1-c)^{t-\tau}\max_{\tau\in[\nicefrac{t}{2}, t]}h_\tau \\
        &\leq g_0(1-c)^t + \frac{\max_{\tau\in[1,\nicefrac{t}{2}]} h_\tau}{c}(1-c)^{\frac{t}{2}} + \frac{\max_{\tau\in[\nicefrac{t}{2}, t]} h_\tau}{c}.  
    \end{align*}
    
\end{proof}

\subsection{\texorpdfstring{Some Properties for the choice of $\alpha_t=\frac{H+1}{H+t}$}{}}\label{app: special learning rate 1}

\begin{lemma}
    \label{lem: sum of beta 1}
    For the choice $\alpha_t=\frac{H+1}{H+t}$ with $H\geq \frac{2}{1-\gamma}$, we have 
    $\sum_{t=\tau}^\infty \beta^\tau_t \leq H+3$. 
\end{lemma}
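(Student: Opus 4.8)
The plan is to peel off the diagonal term $\beta^\tau_\tau = 1$ and bound the tail by comparing the product $\prod_{i=\tau}^{t-1}\bigl(1-\alpha_i(1-\gamma)\bigr)$ against a power of $(H+\tau)/(H+t)$. First I would use the definition of $\beta^\tau_t$ to write, after reindexing $t=j+1$,
\[
\sum_{t=\tau}^\infty \beta^\tau_t = 1 + \alpha_\tau \sum_{j=\tau}^\infty \prod_{i=\tau}^{j}\bigl(1-\alpha_i(1-\gamma)\bigr),
\]
since $\beta^\tau_\tau = 1$ while $\beta^\tau_{j+1} = \alpha_\tau \prod_{i=\tau}^{j}(1-\alpha_i+\alpha_i\gamma)$ for $j\geq\tau$. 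It therefore suffices to show that the remaining sum $S \triangleq \sum_{j=\tau}^\infty \prod_{i=\tau}^{j}(1-\alpha_i(1-\gamma))$ satisfies $\alpha_\tau S \leq H+2$.

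Next I would bound each product. Using $1-x\leq e^{-x}$ together with the lower bound $\sum_{i=\tau}^j \alpha_i = (H+1)\sum_{i=\tau}^j \frac{1}{H+i} \geq (H+1)\log\frac{H+j+1}{H+\tau}$ (comparing the sum to $\int_\tau^{j+1}\frac{dx}{H+x}$), I obtain
\[
\prod_{i=\tau}^{j}\bigl(1-\alpha_i(1-\gamma)\bigr) \leq \left(\frac{H+\tau}{H+j+1}\right)^{(1-\gamma)(H+1)} \leq \left(\frac{H+\tau}{H+j+1}\right)^2,
\]
where the last step uses that the base is at most $1$ and that $(1-\gamma)(H+1)\geq (1-\gamma)H \geq 2$ by the assumption $H\geq\frac{2}{1-\gamma}$. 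Summing over $j$ by an integral comparison then gives $S \leq (H+\tau)^2 \sum_{k=\tau+1}^\infty \frac{1}{(H+k)^2} \leq (H+\tau)^2 \int_\tau^\infty \frac{dx}{(H+x)^2} = H+\tau$. Multiplying by $\alpha_\tau=\frac{H+1}{H+\tau}$ yields $\alpha_\tau S \leq H+1$, so the whole sum is at most $1+(H+1)=H+2\leq H+3$, as claimed.

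The only place requiring care—effectively the main obstacle—is keeping the inequality directions consistent throughout the estimate: the harmonic sum $\sum \alpha_i$ must be \emph{lower}-bounded because it appears behind a minus sign in the exponent; the reduction of the exponent from $(1-\gamma)(H+1)$ down to $2$ is valid only because the base $\frac{H+\tau}{H+j+1}$ is $\leq 1$ (a larger exponent on a sub-unit base gives a smaller value); and the final series over $k\geq\tau+1$ must be upper-bounded by the integral starting at $\tau$ rather than at $\tau+1$. Once these directions are fixed, the remaining manipulations are routine geometric and integral estimates.
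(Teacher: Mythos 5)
Your proposal is correct, and it even yields the slightly sharper constant $H+2$. The overall skeleton matches the paper's: peel off the first term(s) and show that the tail products decay quadratically, i.e.\ $\prod_{i=\tau}^{j}\bigl(1-\alpha_i(1-\gamma)\bigr)\lesssim \bigl(\tfrac{H+\tau}{H+j+1}\bigr)^2$, then sum. The difference is in how that quadratic decay is obtained. The paper replaces $1-\alpha_i(1-\gamma)$ by $1-\tfrac{2}{H+i}=\tfrac{H+i-2}{H+i}$ and exploits the \emph{exact} telescoping of the product into $\tfrac{(H+\tau-2)(H+\tau-1)}{(H+t-2)(H+t-1)}$, followed by a second exact telescoping of the sum via $\tfrac{1}{H+t-2}-\tfrac{1}{H+t-1}$; you instead use $1-x\le e^{-x}$ plus two integral comparisons (a lower bound on $\sum\alpha_i$ and an upper bound on $\sum (H+k)^{-2}$). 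Your route is softer and arguably more robust — it would survive a perturbation of the specific form of $\alpha_t$ as long as $\sum\alpha_i$ grows like $(H+1)\log(\cdot)$ — while the paper's is purely algebraic and avoids any analysis. Your accounting of inequality directions (lower-bounding the harmonic sum, using that the base is $\le 1$ when reducing the exponent to $2$, starting the integral at $\tau$) is all sound; the only unstated detail is that $\alpha_i(1-\gamma)\le \tfrac12<1$ so every factor in the product is positive, which is needed for the termwise exponential bound, but this is immediate from $\alpha_i\le 1$ and $\gamma\ge\tfrac12$.
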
 
\begin{proof}
    When $t\geq \tau+2$, 
    \begin{align*}
        \beta^\tau_t
        &=\alpha_\tau \prod_{i=\tau}^{t-1}\left(1-\alpha_i(1-\gamma)\right) \\
        &\leq \alpha_\tau \prod_{i=\tau}^{t-1}\left(1-\alpha_i\times \frac{2}{H+1}\right)   \tag{$H+1\geq \frac{2}{1-\gamma}$} \\
        &= \alpha_\tau \prod_{i=\tau}^{t-1}\left(1-\frac{2}{H+i}\right)\\
        &= \frac{H+1}{H+\tau} \times \frac{H+\tau-2}{H+\tau}\times \frac{H+\tau-1}{H+\tau+1}\times\cdots \times \frac{H+t-3}{H+t-1}\\
        &= \frac{H+1}{H+\tau}\times \frac{(H+\tau-2)(H+\tau-1)}{(H+t-2)(H+t-1)} \\
        &= \frac{H+1}{H+\tau}(H+\tau-2)(H+\tau-1) \left(\frac{1}{H+t-2} - \frac{1}{H+t-1}\right) \\
        &\leq (H+1)(H+\tau-2)\left(\frac{1}{H+t-2} - \frac{1}{H+t-1}\right). 
    \end{align*}
    Therefore, 
    \begin{align*}
        \sum_{t=\tau+2}^{\infty} \beta^\tau_t \leq  (H+1)(H+\tau-2)\times \frac{1}{H+\tau} \leq H+1, 
    \end{align*}
    and thus $\sum_{t=\tau}^\infty \beta^\tau_t \leq H+3$. 
\end{proof}

\begin{lemma}\label{lem: consecutive lambda for special choice}
    For the choice $\alpha_t=\frac{H+1}{H+t}$ with $H\geq \frac{2}{1-\gamma}$, we have $\lambda^\tau_t=\alpha_t$ for $\tau<t$. 
\end{lemma}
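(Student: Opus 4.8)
The plan is to reduce the claim to a single structural fact: for this specific choice of $\alpha_t$, the maximum defining $\lambda_i$ is always attained by its first argument, i.e. $\lambda_i = \alpha_{i+1}/\alpha_i$ for every $i$. Once this is in hand, the product telescopes, $\prod_{i=\tau}^{t-1}\lambda_i = \prod_{i=\tau}^{t-1}\frac{\alpha_{i+1}}{\alpha_i} = \frac{\alpha_t}{\alpha_\tau}$, so that by the definition $\lambda^\tau_t = \alpha_\tau \prod_{i=\tau}^{t-1}\lambda_i = \alpha_\tau \cdot \frac{\alpha_t}{\alpha_\tau} = \alpha_t$, which is exactly the identity to be proved.

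First I would compute the two candidate values inside the maximum explicitly. With $\alpha_i = \frac{H+1}{H+i}$, a direct calculation gives $\frac{\alpha_{i+1}}{\alpha_i} = \frac{H+i}{H+i+1} = 1 - \frac{1}{H+i+1}$, while $1 - \frac{\alpha_i(1-\gamma)}{2} = 1 - \frac{(H+1)(1-\gamma)}{2(H+i)}$. Hence $\lambda_i = \alpha_{i+1}/\alpha_i$ precisely when $\frac{1}{H+i+1} \le \frac{(H+1)(1-\gamma)}{2(H+i)}$, equivalently when $\frac{2(H+i)}{H+i+1} \le (H+1)(1-\gamma)$.

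The verification of this last inequality is the crux of the argument, although it is short. The left-hand side satisfies $\frac{2(H+i)}{H+i+1} < 2$ for every $i \ge 1$, since $\frac{H+i}{H+i+1} < 1$. For the right-hand side, the hypothesis $H \ge \frac{2}{1-\gamma}$ gives $H(1-\gamma) \ge 2$, and therefore $(H+1)(1-\gamma) = H(1-\gamma) + (1-\gamma) \ge 2 + (1-\gamma) > 2$. Combining the two bounds yields $\frac{2(H+i)}{H+i+1} < 2 < (H+1)(1-\gamma)$, so the inequality holds for all $i$, confirming that $\lambda_i = \alpha_{i+1}/\alpha_i$ uniformly in $i$ and thus licensing the telescoping step over the full range $i = \tau, \dots, t-1$.

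I do not anticipate any genuine obstacle here. The only point requiring care is correctly invoking the assumption $H \ge \frac{2}{1-\gamma}$ so that the right-hand side strictly exceeds $2$, and observing that the resulting bound $\lambda_i = \alpha_{i+1}/\alpha_i$ is valid for \emph{every} index in the product; it is this uniformity that makes the telescoping collapse to $\alpha_t$ and finishes the proof.
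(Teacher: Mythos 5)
Your proof is correct and follows essentially the same route as the paper: both arguments show that the hypothesis $H\geq \frac{2}{1-\gamma}$ forces the maximum in the definition of $\lambda_i$ to be attained by $\frac{\alpha_{i+1}}{\alpha_i}$ for every $i$, and then telescope the product $\prod_{i=\tau}^{t-1}\frac{\alpha_{i+1}}{\alpha_i}=\frac{\alpha_t}{\alpha_\tau}$. The only cosmetic difference is that you verify the key inequality via the chain $\frac{2(H+i)}{H+i+1}<2<(H+1)(1-\gamma)$, whereas the paper compares $\frac{1-\gamma}{2}\cdot\frac{H+1}{H+t}\geq\frac{1}{H+t}\geq\frac{1}{H+t+1}$ directly; these are equivalent.
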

\begin{proof}
    With this choice of $\alpha_t$, 
    \begin{align*}
        \lambda_t &= \max\left\{ \frac{H+t}{H+t+1}, 1-\frac{1-\gamma}{2}\times \frac{H+1}{H+t} \right\} \\
        &= \max\left\{ 1-\frac{1}{H+t+1}, 1-\frac{1-\gamma}{2}\times \frac{H+1}{H+t} \right\}.
    \end{align*}
    By the condition, we have $\frac{1-\gamma}{2}\times \frac{H+1}{H+t}\geq \frac{1}{H+t}\geq \frac{1}{H+t+1}$. Therefore, $\lambda_t = \frac{H+t}{H+t+1}=\frac{\alpha_{t+1}}{\alpha_t}$. Thus for $\tau<t$, 
    \begin{align*}
        \lambda^\tau_t = \alpha_\tau \prod_{i=\tau}^{t-1}\frac{\alpha_{i+1}}{\alpha_i} = \alpha_t. 
    \end{align*}
\end{proof}

\section{Analysis on Sample Complexity}\label{app: samples}
\subsection{Proof of \pref{thm: samples}}
\begin{proof}
As long as we can make 
\begin{align}
    \left|\ell^s_t(a) -\mathbf{e}_a^\top Q^s_t \ytil^s_t\right| \leq \frac{\err}{2|\calA|}\label{eq: epsA}
\end{align}
hold with high probability, then $\left|\ell^s_t(a) - \mathbf{e}_a^\top Q^s_ty^s_t\right|\leq \left|\ell^s_t(a) - \mathbf{e}_a^\top Q^s_t \ytil^s_t\right| + \left|\mathbf{e}_a^\top Q^s_t \ytil^s_t - \mathbf{e}_a^\top Q^s_ty^s_t\right|\leq \frac{\err}{2|\calA|} + \frac{1}{1-\gamma}\times \frac{\err'}{2|\calA|}\leq \frac{\err}{|\calA|}$, which implies $\|\ell^s_t - Q^s_ty_t^s\|\leq \err$. We can similarly ensure $\|r^s_t - Q_t^{s^\top} x^s_t\|\leq \err$ and $|\pay^s_t - x_t^{s^\top}Q^s_ty^s_t|\leq \err$ by the same way.
Let $N_{s,a}\triangleq\sum_{i=1}^\LL \one[s_i=s, a_i=a]$ be the number of times the $(s,a)$ pair is visited.
For a deterministic $N$, we can use Azuma-Hoeffding inequality and know that \pref{eq: epsA} holds with probability $1-\delta$ if $N =\Omega\left(\frac{|\calA|^2}{\err^2}\log(1/\delta)\right)$.
However, $N_{s,a}$ is random, so we cannot use Azuma-Hoeffding's inequality directly.
Let $(b^{(1)}, s^{(1)}),(b^{(2)}, s^{(2)}),\dots$ be a sequence of independent random variables where $b^{(i)}\sim \ytil_t^{s}$, $s^{(i)}\sim p(\cdot|s,a,b^{(i)})$, $i=1,2,\dots$ and define $\widetilde{\ell}_{t,m}^s=\frac{1}{m}\sum_{i=1}^m(\sigma(s,a,b^{(i)})+\gamma V_{t-1}^{s^{(i)}})$. It is direct to see that $\widetilde{\ell}_{t,m}^s$ is an unbiased estimator of $\mathbf{e}_a^\top Q_t^s\ytil_t^s$. Then by Azuma-Hoeffding's inequality, we have
\begin{align*}
    &\text{Prob}\left[\left|\ell_t^s(a)-\mathbf{e}_a^\top Q^s_t \ytil^s_t\right| \leq \order\left(\sqrt{\frac{\log(L/\delta)}{N_{s,a}}}\right)\right] \\
    &\leq \text{Prob}\left[\exists m\in [L], \left|\widetilde{\ell}_{t,m}^s-\mathbf{e}_a^\top Q^s_t \ytil^s_t\right|\leq \order\left(\sqrt{\frac{\log(L/\delta)}{m}}\right)\right]\\
    &\leq \sum_{m=1}^L \text{Prob}\left[\left|\widetilde{\ell}_{t,m}^s-\mathbf{e}_a^\top Q^s_t \ytil^s_t\right|\leq \order\left(\sqrt{\frac{\log(L/\delta)}{m}}\right)\right]\leq \delta.
\end{align*}
Therefore, with probability at least $1-\delta$, \pref{eq: epsA} holds if
\begin{align}
    N_{s,a}=\Omega\left(\frac{|\calA|^2}{\err^2}\log(\LL/\delta)\right). \label{eq: Az1}
\end{align}
Now it remains to determine $\LL$ to make \pref{eq: Az1} hold with high probability.
Note that by \pref{assum:irreducibility}, we know that $T^{s'\rightarrow s}_{\xtil_t,\ytil_t}\leq \frac{1}{\mu}$ for any $s'$. Let $\calT_{s,a}$ be the distribution of random variable which is the number of rounds between the current state-action pair $(s',a')$ and the next occurrence of $(s,a)$ under strategy $\xtil_t^s$ and $\ytil_t^s$. The mean of this distribution is $t_{s,a}\leq 1+\frac{2|\calA|}{\err'\mu}\leq \frac{3|\calA|}{\err'\mu}$. Then by Markov inequality, \[\text{Prob}\left[\text{the number of rounds before reaching $(s,a)$}\leq \frac{6|\calA|}{\err'\mu}\right]\geq \frac{1}{2}.\] Therefore, with probability at least $1-\frac{\delta}{L}$, within $\Theta(\frac{|A|}{\err'\mu}\log(L/\delta))$ rounds, we reach $(s,a)$ state-action at least pair once. Thus, \pref{eq: Az1} holds  when $\LL=\Omega\left(\frac{|\calA|^3}{\err'\mu\err^2}\log^2(\LL/\delta)\right) $ with probability $1-\delta$.
Solving $\LL$ gives $\LL=\widetilde{\Omega}\left(\frac{|\calA|^3}{(1-\gamma)\mu\err^3}\log^2({1}/{\delta})\right) $.
The cases for $r_t^s(b)$ and $\pay^s_t$ are similar. 
Finally, using a union bound on all $\calA$, $\calB$, $\calS$, and all iterations, we know that with probability $1-\delta$, the $\err$-approximations are always guaranteed if we use the estimation above and take $\LL=\widetilde{\Omega}\left(\frac{|\calA|^3+|\calB|^3}{(1-\gamma)\mu\err^3}\log^2(T/{\delta})\right) $.
\end{proof}

\subsection{Proof of \pref{col: thm2}}
\begin{proof}
    From \pref{thm: main 1}, we know that in order to show $ \frac{1}{T}\sum_{t=1}^T \max_{s,x',y'} \left(V^s_{\xp_t, y'} -V^s_{x', \yp_t} \right)\le \arx$, it is sufficient to show $\frac{|\calS|}{\eta(1-\gamma)^2}\sqrt{\frac{\log T}{T}}\le \arx$ and $\frac{|\calS|\sqrt{\err}}{\sqrt{\eta}(1-\gamma)^2} \le \arx$.
    Solving these two inequalities, we get $T=\Omega\left(\frac{|\calS|^2}{\eta^2(1-\gamma)^4 \arx^2}\log\frac{|\calS|}{\eta(1-\gamma) \arx}\right)$ and $\err=\order\left(\frac{\eta(1-\gamma)^4\arx^2}{|\calS|^2}\right)$. 
    Plugging $\err$ into $L$ in \pref{thm: samples} gives the required $L$.
\end{proof}

\subsection{Proof of \pref{col: thm1}}
\begin{proof}
    From \pref{thm: main 2}, we know that in order to show $\frac{1}{|\calS|} \sum_{s\in\calS} \dist_\star^2(\zp_T^s)\le \arx$, it is sufficient to show $\frac{|\calS|^2}{\eta^4C^4(1-\gamma)^4 T}\le \arx$ and $\frac{\err}{\eta C^2(1-\gamma)^3} \le \arx$.
    Solving these two inequalities, we get $T=\Omega\left(\frac{|\calS|^2}{\eta^4C^4(1-\gamma)^4 \arx}\right)$ and $\err=\order\left({\eta C^2(1-\gamma)^3}{\arx}\right)$. 
    Plugging $\err$ into $L$ in \pref{thm: samples} gives the required $L$.
\end{proof}


\section{Analysis on Rationalily}\label{app: rationality}
In this section, we analyze the rationality of our algorithm. 
First, we present the full pseudocode of \pref{algo: main alg single}, which is the single-player-perspective version of \pref{algo: main alg}, and then prove that \pref{algo: main alg single} achieves rationality.

\subsection{Single-Player-Perspective Version of \pref{algo: main alg}}\label{app: rationality-algo}
\begin{algorithm}[ht]
    \caption{Single-Player-Perspective Optimistic Gradient Descent/Ascent for Markov Games}
    \label{algo: main alg single}
    \textbf{Parameter}: $\gamma\in[\frac{1}{2}, 1), \eta \leq \frac{1}{10^4}\sqrt{\frac{(1-\gamma)^5}{S}} $, $\err\in\left[0, \frac{1}{1-\gamma}\right]$ \\ 
    \textbf{Parameters}: a non-increasing sequence $\{\alpha_t\}_{t=1}^T$ that goes to zero.  \\ 
    \textbf{Initialization}: arbitrarily initialize $\xp_1^s=\x_1^s \in \Delta_{\calA} $ for all $s\in\calS$. \\
    $V_0^s\leftarrow 0$ for all $s\in\calS$.\\
    \For{$t=1,\ldots, T$}{
        For all $s$, 
        define 
        $\Q_t^s\in\mathbb{R}^{|\calA|\times|\calB|}$ as
        \begin{align*}
        Q_t^s(a,b) &\triangleq  
           \sig(s,a,b) + \gamma \E_{s'\sim p(\cdot|s,a,b)} \left[V_{t-1}^{s'} \right],
        \end{align*} 
        and update  
        \begin{align} 
            \xp_{t+1}^s &= \Pi_{\Delta_\calA}\Big\{\xp_t^s - \eta \ell_t^s \Big\},    \label{eq: update 1 single}\\
            \x_{t+1}^s &= \Pi_{\Delta_\calA}\Big\{\xp_{t+1}^s - \eta \ell_t^s \Big\},  \label{eq: update 2 single}\\
            V^s_{t} &= (1-\alpha_t)V^s_{t-1} + \alpha_t \pay_t^s,  \label{eq: update 5 single}
        \end{align}
        where $\ell^s_t$ and $\pay^s_t$ are $\err$-approximations of $Q^s_t y^s$ and $x_t^{s^\top} Q^s_t y^s$, respectively. 
    }
\end{algorithm}

\subsection{Analysis of \pref{algo: main alg single}}\label{app: rationality-algo-analysis}
In this section, we prove \pref{thm: rational-1}, which shows the rationality of \pref{algo: main alg single}. 
We call the original game \textbf{Game 1} and construct a two-player Markov game \textbf{Game 2} with the difference being that Player 2 has only one single action (call it $1$) on each state, 
the loss function is redefined as $\gametwo{\sig}(s,a,1)=\E_{b\sim y^s}[\sig(s,a,b)]$, and the transition kernel is redefined as $\gametwo{p}(s'|s,a,1)=\E_{b\sim y^s}[p(s'|s,a,b)]$. 
Correspondingly, we define
\begin{align*}
    &\gametwo{Q}_t^s(a,1) = \gametwo{\sigma}(s,a,1) + \gamma\mathbb{E}_{s'\sim \gametwo{p}(\cdot|s,a,1)}\left[\gametwo{V}_{t-1}^{s'}\right], \\
    &\gametwo{\xp}_{t+1}^s=\Pi_{\Delta_{\calA}}\left\{\gametwo{\xp}_t^s-\eta\ell_t^s\right\}, \\
    &\gametwo{x}_{t+1}^s=\Pi_{\Delta_{\calA}}\left\{\gametwo{\xp}_{t+1}^s-\eta\ell_t^s\right\}, \\
    &\gametwo{V}^{s}_{t}=(1-\alpha_t)\gametwo{V}_{t-1}^s + \alpha_t\rho_t^s,
\end{align*}
where $\gametwo{V}^{s}_{0} = 0$ for all $s$, and $\ell_t^s$ and $\rho_t^s$ are the same as in \pref{algo: main alg single}.
Clearly, the sequences $\{\xp_t, \x_t\}_{t\in [T]}$ and $\{\gametwo{\xp}_t, \gametwo{\x}_t\}_{t\in [T]}$ are exactly the same (assuming their initializations are the same, that is, $\xp_1=\gametwo{\xp}_1$ and $x_1=\gametwo{x}_1$).
In the following lemma, we show that $\ell_{t}^s$ and $\rho_{t}^s$ are indeed $\err$-approximation of $\gametwo{Q}_{t}^s(\cdot,1)$ and $\gametwo{x}_{t}^{s^\top}\gametwo{Q}_t^s(a,\cdot)$,
which then implies that the sequence $\{\gametwo{\xp}_t\}_{t\in [T]}$ is indeed the output of our \pref{algo: main alg} for \textbf{Game 2}
(note that we can think of Player 2 executing \pref{algo: main alg} in \textbf{Game 2} as well since she only has one unique strategy).
Realizing that $\calX_{\star}^s$ for \textbf{Game 2} is exactly the set of best responses of $y^s$, we can thus conclude that \pref{thm: rational-1} is a direct corollary of \pref{thm: main 1} and \pref{thm: main 2}.

\begin{lemma}\label{lem: rational-matching}
For all $t$ and $s$, $\ell_{t}^s$ and $\rho_{t}^s$ are $\err$-approximation of $\gametwo{Q}_{t}^s(\cdot,1)$ and $\gametwo{x}_{t}^{s^\top}\gametwo{Q}_t^s(a,\cdot)$ respectively.
\end{lemma}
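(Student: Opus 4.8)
The plan is to reduce the lemma to two observations: that the value sequences of the two games coincide, and that the Game-1 gradient $Q_t^s y^s$ is literally equal, as a vector over $\calA$, to the Game-2 gradient $\gametwo{Q}_t^s(\cdot,1)$. Since the approximation guarantees $\|\ell_t^s - Q_t^s y^s\|\le\err$ and $|\rho_t^s - x_t^{s^\top}Q_t^s y^s|\le\err$ are already furnished by \pref{algo: main alg single}, the lemma will then follow by simply substituting these identities into the known bounds.

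First I would establish, by a straightforward induction on $t$, that $V_t^s = \gametwo{V}_t^s$ for all $s$, and separately that $\xp_t^s=\gametwo{\xp}_t^s$ and $x_t^s=\gametwo{x}_t^s$ for all $s$. Both inductions have trivial base cases ($V_0^s=\gametwo{V}_0^s=0$, and the policy iterates share the same initialization). For the inductive step, note that $V_t^s$ and $\gametwo{V}_t^s$ obey the identical recursion $V_t^s=(1-\alpha_t)V_{t-1}^s+\alpha_t\rho_t^s$ driven by the same scalar $\rho_t^s$, while $\xp_{t+1}^s,x_{t+1}^s$ and their Game-2 counterparts are produced by the same projected updates driven by the same vector $\ell_t^s$; invoking the inductive hypothesis on $V_{t-1}^s$ and $\xp_t^s$ closes each induction. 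The crucial point enabling this is that $\ell_t^s$ and $\rho_t^s$ form a single fixed sequence shared verbatim between the two games.

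Next I would verify the gradient identity. Unfolding the Game-1 definition of $Q_t^s$ and using linearity of expectation gives
\[
(Q_t^s y^s)(a)=\E_{b\sim y^s}[\sig(s,a,b)]+\gamma\sum_{s'}\Big(\sum_b y^s(b)\,p(s'|s,a,b)\Big)V_{t-1}^{s'},
\]
and the definitions $\gametwo{\sig}(s,a,1)=\E_{b\sim y^s}[\sig(s,a,b)]$ and $\gametwo{p}(s'|s,a,1)=\E_{b\sim y^s}[p(s'|s,a,b)]$ identify the right-hand side with $\gametwo{\sig}(s,a,1)+\gamma\,\E_{s'\sim\gametwo{p}(\cdot|s,a,1)}[V_{t-1}^{s'}]$. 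Substituting $V_{t-1}^{s'}=\gametwo{V}_{t-1}^{s'}$ from the first step turns this into exactly $\gametwo{Q}_t^s(a,1)$, so $Q_t^s y^s=\gametwo{Q}_t^s(\cdot,1)$ as vectors over $\calA$.

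The conclusion is then immediate. The bound $\|\ell_t^s-Q_t^s y^s\|\le\err$ combined with the identity gives $\|\ell_t^s-\gametwo{Q}_t^s(\cdot,1)\|\le\err$, so $\ell_t^s$ is an $\err$-approximation of $\gametwo{Q}_t^s(\cdot,1)$. For $\rho_t^s$, the gradient identity together with $x_t^s=\gametwo{x}_t^s$ yields $x_t^{s^\top}Q_t^s y^s=\gametwo{x}_t^{s^\top}\gametwo{Q}_t^s(\cdot,1)$, so the bound $|\rho_t^s-x_t^{s^\top}Q_t^s y^s|\le\err$ is precisely the desired $\err$-approximation statement for $\gametwo{x}_t^{s^\top}\gametwo{Q}_t^s(\cdot,1)$. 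The only genuine content is the coupled induction of the first step; everything afterward is a direct substitution, so I do not anticipate any real obstacle beyond carefully matching the two value recursions.
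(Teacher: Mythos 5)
Your proposal is correct and follows essentially the same route as the paper: an induction showing $V_t^s=\gametwo{V}_t^s$ (driven by the shared sequences $\ell_t^s,\rho_t^s$), the identity $Q_t^s y^s=\gametwo{Q}_t^s(\cdot,1)$ via linearity of expectation over $b\sim y^s$, and then direct substitution into the given approximation bounds. The only cosmetic difference is that you separate the value induction from the $Q$-identity while the paper carries them jointly in one induction; the content is identical.
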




\begin{proof}
    We prove the result together with $\gametwo{V}_t^s=V_t^s$, $\gametwo{Q}_t^s(\cdot,1)=Q_t^sy^s$ for all $t\in [T]$ by induction.
    When $t=1$, $\gametwo{V}_t^s=V_t^s$ clearly holds. In addition, $Q_1^s(a,\cdot)y^s=\mathbb{E}_{b\sim y^s}[\sigma(s,a,b)]=\gametwo{Q}_1^s(a,1)$.
    Therefore $\ell_1^s$ and $\rho_1^s$ are indeed $\err$-approximation of $\gametwo{Q}_1^s(a,\cdot)$ and $\gametwo{x}_1^{s^\top}\gametwo{Q}_t^s(a,\cdot)$.
    
    Suppose that the claim holds at $t$.
    By definition and the inductive assumption, we have
\begin{align*}
    \gametwo{Q}_{t+1}^s(a,1)&=\gametwo{\sigma}(s,a,1)+\gamma\mathbb{E}_{s'\sim \gametwo{p}(\cdot|s,a,1)}\left[\gametwo{V}_{t}^{s'}\right] \\
    &= \mathbb{E}_{b\sim y^s}\left[\sigma(s,a,b)+\gamma\mathbb{E}_{s'\sim p(s'|s,a,b)}\left[V_{t}^{s'}\right]\right] \\
    &=Q_{t+1}^s(a,\cdot)y^s,
\end{align*}
which also shows that $\ell_{t+1}^s$ and $\rho_{t+1}^s$ are indeed $\err$-approximation of $\gametwo{Q}_{t+1}^s(\cdot,1)$ and $\gametwo{x}_{t+1}^{s^\top}\gametwo{Q}_t^s(a,\cdot)$ (recall $\gametwo{x}_{t+1}^{s} = x_{t+1}^{s}$).
By definition of $\gametwo{V}_{t+1}^s$, we also have $\gametwo{V}_{t+1}^s=(1-\alpha_{t+1})\gametwo{V}_t^s+\alpha_t\rho_t^s=(1-\alpha_t)V_t^s+\alpha_t\rho_t^s=V_{t+1}^s$, which finishes the induction.
\end{proof}

\end{document}